\def\eqref#1{equation~\ref{#1}}
\def\1{\bm{1}}
\def\eps{{\epsilon}}
\def\rx{{\textnormal{x}}}
\def\ry{{\textnormal{y}}}
\def\vtheta{{\bm{\theta}}}
\def\ve{{\bm{e}}}
\def\vp{{\bm{p}}}
\def\vw{{\bm{w}}}
\def\vx{{\bm{x}}}
\def\vy{{\bm{y}}}
\def\vz{{\bm{z}}}
\def\mI{{\bm{I}}}
\def\mW{{\bm{W}}}
\DeclareMathAlphabet{\mathsfit}{\encodingdefault}{\sfdefault}{m}{sl}
\SetMathAlphabet{\mathsfit}{bold}{\encodingdefault}{\sfdefault}{bx}{n}
\DeclareMathOperator*{\argmax}{arg\,max}
\DeclareMathOperator*{\argmin}{arg\,min}
\newcommand\op[1]{\operatorname{#1}}
\newcommand\grayl[1]{\multicolumn{1}{>{\columncolor{mygray}}l}{#1}}
\newcommand\grayc[1]{\multicolumn{1}{>{\columncolor{mygray}}c}{#1}}
\renewcommand{\epsilon}{\varepsilon}
\newcommand{\gl}{\textsl{GenLabel}}
\newcommand{\dd}{\mathrm{d}}
\newcommand{\ie}{{\it i.e.}, }
\newcommand{\eg}{{\it e.g.}, }
\newcommand{\sm}{Appendix} %
\newcommand{\openml}{\citep{OpenML2013} }
\newcommand\bmt[1]{\textcolor{black}{#1}}%
\newtheorem{theorem}{Theorem}
\newtheorem{proposition}{Proposition}
\newtheorem{example}{Example}
\newtheorem{lemma}{Lemma}
\newtheorem{remark}{Remark}
\definecolor{mygray}{gray}{0.85}
\newcommand\footnoteref[1]{\protected@xdef\@thefnmark{\ref{#1}}\@footnotemark}
\title{\gl{}: Mixup Relabeling using Generative Models}
\date{}
\author{ 
Jy-yong Sohn\footnotemark[2]
,\ \ Liang Shang\footnotemark[2] ,\ \  Hongxu Chen\footnotemark[2] ,\ \ Jaekyun Moon\footnotemark[4] , \ \ 
Dimitris Papailiopoulos\footnotemark[2] ,\ \ Kangwook Lee\footnotemark[2] \\ \\
\normalsize \footnotemark[2] \ \ University of Wisconsin-Madison\\
\footnotemark[4] \ \ Korea Advanced Institute of Science and Technology
}
\begin{document}

\maketitle

\begin{abstract}
Mixup is a data augmentation method that generates new data points by mixing a pair of input data. While mixup generally improves the prediction performance, it sometimes degrades the performance. In this paper, we first identify the main causes of this phenomenon by theoretically and empirically analyzing the mixup algorithm.
To resolve this, we propose \gl{}, a simple yet effective relabeling algorithm designed for mixup.
In particular, \gl{} helps the mixup algorithm correctly label mixup samples by learning the class-conditional data distribution using generative models.
Via extensive theoretical and empirical analysis, we show that mixup, when used together with \gl{}, can effectively resolve the aforementioned phenomenon, improving the generalization performance and the adversarial robustness.
\end{abstract}

\section{Introduction}
Mixup~\citep{zhang2017mixup} is a widely adopted data augmentation algorithm used when training a classifier, which generates synthetic samples by linearly interpolating two randomly chosen samples. 
Each mixed sample is \emph{soft}-labeled, i.e., it is labeled as a mixture of two (possibly same) classes of the chosen samples. 
The rationale behind the mixup algorithm is that such mixed samples can fill up the void space in between different class manifolds, effectively regularizing the model behavior. 
Mixup has been shown to improve generalization on multiple benchmark image datasets,
and several variants of mixup have been proposed in the past few years.
For example, manifold-mixup~\citep{pmlr-v97-verma19a} generalizes the mixup algorithm by applying the same algorithm in the latent feature space, and some other variants tailor the original mixup algorithm to perform better on computer vision tasks~\citep{yun2019cutmix,kim2020puzzle,uddin2020saliencymix,kim2021co}.
Though the initial studies lack theoretical supports for the mixup algorithm, recent studies provide some theoretical explanations on why and how mixup can improve generalization~\citep{zhang2021does,carratino2020mixup}.

Mixup, however, does not always improve generalization, and sometimes it even hurts. 
For instance,~\citet{guo2019mixup} showed that the generalization performance of mixup is up to 1.8\% worse than vanilla training on some image classification tasks. 
Similarly, ~\citet{greenewald2021k} showed that for the classification tasks on some UCI datasets~\citep{Dua2019}, the original mixup (which is $k$-mixup with $k=1$ in their paper) degrades the generalization performance of vanilla training up to 2.5\%.
Unfortunately, these empirical observations on the failure of mixup has not been supported by a clear theoretical understanding.

\subsection{Main contributions}

In this work, we present a rigorous understanding of when and why the current mixup algorithm fails.
To obtain this understanding, we take a closer look at the failure scenarios of mixup, particularly focusing on the low-dimensional input setting.
We identify two main reasons behind mixup's failure scenarios.
The first reason we identify is \emph{manifold intrusion}, which was firstly defined in
\citep{guo2019mixup}.
Mixup samples generated by mixing two classes may intrude the manifold of a third class, so such intruding mixup samples will cause label conflicts with the true samples from the intruded class.
We perform theoretical and empirical analysis of the effect of manifold intrusion on mixup's performance.
The second reason we identify is about how the current mixup algorithm labels the mixed samples. 
The current algorithm assigns a mixup sample with a two-hot encoded label, which is a linear combination of the two one-hot encoded labels of the original samples. 
We prove that, focusing on a specific softmax regression setting, such linear interpolation of one-hot encoded labels results in a strictly suboptimal margin.%

After we identify the key reasons behind mixup's failure cases, we propose a simple yet effective fix for the current mixup algorithm.
Our proposed algorithm \gl{} is a relabeling algorithm designed for mixup. 
The idea is strikingly simple -- \gl{}  relabels a mixed sample using the likelihoods that are estimated with learned generative models.

See Fig.~\ref{Fig:GenLabel} for visual illustration.
Consider a three-way classification problem.
\gl{} first learns generative models, from which it can estimate the likelihood of a sample drawn from each class.
Let the likelihood of sample $\vx$ drawn from class $c$ be $p_c(\vx)$, and the estimated likelihood be $\widehat{p_c}(\vx)$.
Given a new mixed sample $\vx^\text{mix}$, \gl{} first estimates all three likelihoods $\widehat{p_c}(\vx^\text{mix})$ for class $c \in \{1,2,3\}$, and then assigns the mixed sample the following label:
$\vy^\text{gen} = \text{softmax}( \log \widehat{p_1}(\vx^\text{mix}), \log \widehat{p_2}(\vx^\text{mix}), \log \widehat{p_3}(\vx^\text{mix}))$.
Note that the $c^{\op{th}}$ element of $\vy^{\op{gen}}$ is $\frac{\widehat{p_c}(\vx^\text{mix})}{\sum_{c'=1}^3 \widehat{p_{c'}}(\vx^\text{mix})}$, which is identical to the posterior probability $\mathbb{P}(y=c | \vx^\text{mix}) = \frac{p_c(\vx^\text{mix}) \mathbb{P}(y=c)}{\sum_{c'=1}^3 p_{c'}(\vx^\text{mix}) \mathbb{P}(y=c') }$ of $\vx^\text{mix}$ belonging to class $c$,
when we have a balanced dataset and a perfect likelihood estimation, i.e., $\mathbb{P}(y=c) = \mathbb{P}(y=c')$ for all class pair $c, c'$, and $\widehat{p_c}(\vx^\text{mix}) = {p_c}(\vx^\text{mix})$ for all class $c$. Thus, \gl{} is a labeling method that assigns the posterior probability of the label $y$ given a mixed sample $\vx^\text{mix}$.
This property of \gl{} allows us to fix the issue of  the conventional labeling method in mixup.
For the example given in Fig.~\ref{Fig:GenLabel}, the mixed sample $\vx^{\op{mix}}$ lies on the manifold of class 2, when we mix $\vx$ in class 1 and $\vx^{\prime}$ in class 3. While the original mixup labels $\vx^{\op{mix}}$ as a mixture of classes 1 and 3, the label assigned by \gl{} is nearly identical to the ground-truth label (class 2), since we have $\widehat{p_2}(\vx^{\op{mix}}) \gg \widehat{p_1}(\vx^{\op{mix}})$ and $ \widehat{p_2}(\vx^{\op{mix}}) \gg \widehat{p_3}(\vx^{\op{mix}})
$.

The suggested \gl{} has been analyzed in diverse perspectives, showing that \gl{} helps fixing the issue of mixup and improving the performances. 
First, we empirically show that \gl{} fixes the manifold intrusion issue on toy datasets. Second, we mathematically prove that on several toy datasets, \gl{} combined with mixup maximizes the margin of a classifier, while mixup alone leads to a much smaller margin, even worse than that of the vanilla training. Third, our mathematical results show that \gl{} improves the adversarial robustness of mixup in logistic regression models and fully-connected (FC) networks with ReLU activations.

Finally, we tested \gl{} 
on 109 low-dimensional real datasets in OpenML~\citep{OpenML2013}.
Our experimental results show that the suggested \gl{} helps mixup improve not only the generalization performance, but also the adversarial robustness of a classifier in various low-dimensional datasets, in both logistic regression models and FC ReLU networks. This corroborates the advantages of \gl{} we showed in our theoretical analysis.

\begin{figure}[t]
    \vspace{-2mm}
	\centering
	\includegraphics[width=0.8 \linewidth]{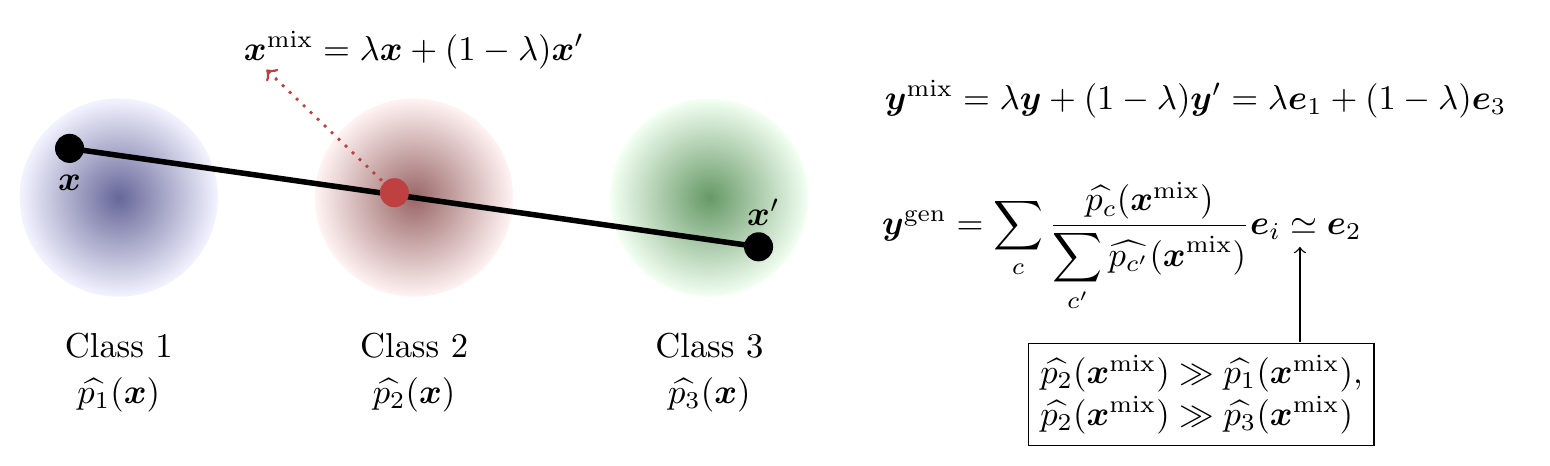}
	\caption
	{
    Conceptual visualization of \gl{} applied to mixup. Consider a mixed sample $\vx^{\op{mix}} = \lambda \vx + (1-\lambda) \vx'$ for a given $\lambda \in [0,1]$ and randomly chosen data samples $\vx, \vx'$.
    Conventionally, this mixed sample is labeled as $\vy^{\op{mix}} = \lambda \vy + (1-\lambda) \vy'$, which is incorrect if 
    the mixed sample lies on the manifold of another class $\vy^{\star} \notin \{\vy, \vy^{\prime}\}$.
    For example, mixing classes 1 and 3 generates a mixed sample lying on the manifold of class 2.
    To fix this issue, this paper suggests \gl{} which relabels the mixed sample. \gl{} first learns the underlying data distribution for each class $c$, denoted by $\widehat{p_c}(\vx)$. Then, the mixed sample is relabeled as $\vy^{\op{gen}}$ based on the likelihood $\widehat{p_c}(\vx^{\op{mix}})$ of the mixed sample drawn from each class $c$, where the detailed expression for $\vy^{\op{gen}}$ is given above. In the above example, we have $\vy^{\op{mix}} = \lambda \ve_1 + (1-\lambda) \ve_3$ and $\vy^{\op{gen}} \simeq \ve_2$, where $\ve_c$ is the standard basis vector with a 1 in the $c$-th coordinate and 0's elsewhere.
    \gl{} coincides with the ground-truth label $\ve_2$, while the original labeling method does not.
    }
	\label{Fig:GenLabel}
\end{figure}

\subsection{Preliminaries}

Below we summarize the basic notations and concepts used in our paper.

\paragraph{Notations}
In this work, we focus on $k$-way classification tasks.
A dataset with $n$ data points is denoted by $S = \{\vz_i \}_{i=1}^n$, where the $i$-th data point is represented by a tuple $\vz_i = (\vx_i, \vy_i)$ composed of the input feature $\vx_i \in \mathbb{R}^d$ and the label $\vy_i \in [0,1]^k$.
We use one-hot encoding for the label, \ie the label of class-$c$ data points is represented as $\vy = \ve_c$, where $\ve_c$ is the standard basis vector with a 1 in the $c$-th coordinate and 0's elsewhere.
For a mixed sample, we allow \emph{soft label}, \eg $0.5\ve_1 + 0.5\ve_2$ denotes that the mixed sample is equally likely to be from class $1$ and $2$. For a given dataset $S = \{\vz_i\}_{i=1}^n$, the set of input features is denoted by $X = \{\vx_i\}_{i=1}^n %
$, and the empirical distribution of the input feature is denoted as $D_X$.
We assume that each data point $\vx$ in class $c$ is generated from (unknown) probability distribution $p_{c}(\vx) \coloneqq p_{\rx|\ry}(\vx|\ve_c)$. 

For a positive integer $k$, we use the notation $[k] \coloneqq \{1, 2, \cdots, k \}$. The identity matrix of size $k \times k$ is denoted by $\mI_k$.
For a given statement $A$, we define $\mathbf{1}_{A}=1$ if $A$ is true, and $\mathbf{1}_{A}=0$ otherwise.
When a distance metric is specified, $d(x_{1},x_{2})$ denotes the distance between $x_{1}$ and $x_{2}$, and $d(x,A) = \min_{a \in A} d(x,a)$ denotes the minimum of the distances between $x$ and the points in a closed set $A$.

\vspace{-2mm}
\paragraph{Mixup} 
Mixup~\citep{zhang2017mixup} generates synthetic data points by applying a linear combination of two samples. 
Given samples $\vx_i$ and $\vx_j$, it generates an augmented point $\vx^{\op{mix}} = \lambda \vx_i+(1-\lambda) \vx_j$ having a mixed label $\vy^{\op{mix}} = \lambda \vy_i+(1-\lambda) \vy_j$,
for randomly sampled 
$\lambda \sim \op{Beta}(\alpha, \alpha)$ for a given $\alpha > 0$. 

\vspace{-2mm}
\paragraph{Gaussian mixture model}
Gaussian mixture (GM) model is a generative model, which assumes that samples $\vx$ in each class (say class $c$) follow a multivariate Gaussian distribution $\mathcal{N} (\bm{\mu}_c, \bm{\Sigma}_c)$ for some mean $\bm{\mu}_c$ and covariance matrix $\bm{\Sigma}_c$. 
One can estimate the model parameters by computing the within-class sample mean $\widehat{\bm{\mu}_c}$ and the within-class sample covariance matrix $\widehat{\bm{\Sigma}_c}$ for each class. 
A GM model of two classes, say class $1$ and $2$, can be modeled as $\pi_1 \mathcal{N} (\widehat{\bm{\mu}_1}, \widehat{\bm{\Sigma}_1}) + (1-\pi_1) \mathcal{N} (\widehat{\bm{\mu}_2}, \widehat{\bm{\Sigma}_2})$, where $\pi_1 = \mathbb{P}(\vy=\ve_1)$.

\textbf{Kernel density estimator} Kernel density estimator (KDE) is a non-parametric density estimator that makes use of a kernel function.
For example, KDE with Gaussian kernel estimates the distribution of class $c$ as $\frac{1}{n_c} \sum_{i=1}^{n_c} \mathcal{N}(\vx_i, h^2 \widehat{\bm{\Sigma}_c})$ for a given bandwidth $h$, where
$\{\vx_i\}_{i=1}^{n_c}$ is the set of samples in class $c$ and $\widehat{\bm{\Sigma}_c}$ is the sample covariance matrix of class $c$.
One can use KDE as a generative model, creating new samples from the estimated density.

\section{Related works}

\paragraph{Mixup and variants}

Mixup and its variants have been considered as promising data augmentation schemes improving the generalization and robustness performance in various image classification tasks~\citep{zhang2017mixup,pmlr-v97-verma19a,tokozume2017learning,inoue2018data,shimada2019data,hendrycks2019augmix,yun2019cutmix,kim2020puzzle,uddin2020saliencymix,kim2021co,zhang2021does}. 
However, the performance of mixup for low-dimesional datasets have been rarely observed in previous works. This paper focuses on the failures of mixup in low-dimensional datasets, and provide a simple label correction method to solve this issue, which improves both generalization performance and adversarial robustness in various real datasets.

\paragraph{Manifold intrusion}
\citet{guo2019mixup} observed that mixup samples of two classes may intrude the manifold of a third class.
The authors dubbed this phenomenon as \emph{manifold intrusion}.
The manifold intrusion problem can explain why mixup sometimes hurts generalization -- such intruding mixup samples will cause label conflicts with the true samples from the intruded class.
\citet{hwang2021mixrl} found that a similar label conflict problem becomes even more salient in the regression setting. 
To resolve the label conflict issue of the manifold-intruding mixup points, previous works have suggested various mixing strategies which avoid generating mixup samples that causes the label conflict.~\citet{guo2019mixup} suggested regularizing the mixup samples lie in the out-of-manifold region, by learning the mixing policy that prohibits generating the in-manifold mixup samples.~\citet{greenewald2021k} suggested  %
using the concept of optimal transport to mix data samples that are adjacent to each other. This scheme helps both the mixed sample and the corresponding data sample pair lie on the same manifold, which avoids facing the label-conflicting scenarios.
Focusing on the regression setting,~\citet{hwang2021mixrl} suggested learning a mixing policy by measuring how helpful mixing each pair is.
Although all these regularization techniques prohibit generating mixup points that incur label conflicts, they also inherently give up the potential benefits of using such label-conflicting mixup samples by properly re-labeling them.
In this paper, for the first time, we solve the label conflict issue of manifold-intruding mixup samples by \emph{re-labeling} those mixup samples based on the class-conditional distribution estimated by generative models.

\vspace{-2mm}
\paragraph{Generative models for improving generalization and robustness} 
Generative models have been widely used for classification tasks for several decades. A generative classifier~\citep{ng2002discriminative} predicts label $\vy$ based on the class-conditional density $p(\vx|\vy)$ estimated by generative models, and there are various recent works developing generative classifiers~\citep{schott2018towards,ju2020abs}.
The present paper also makes use of generative models for classification task, but we use them for re-labeling augmented data, while existing works use them for the prediction itself.
Some previous works proposed generative model-based data augmentation schemes~\citep{antoniou2017data,perez2017effectiveness,tanaka2019data}.
While these schemes use the learned distribution to create on-manifold synthetic data, we use the learned distribution to re-label both on-manifold and out-of-manifold mixup samples.
There have been extensive works on using generative models to improve the robustness against adversarial attacks and out-of-distribution samples ~\citep{ilyas2017robust,xiao2018generating,samangouei2018defense,song2017pixeldefend,schott2018towards,li2018generative,ghosh2019resisting,serra2019input,choi2018waic,lee2018simple}.
Though looking similar, these algorithms are not data augmentation algorithms and hence their study is only tangentially related to this work. 
Our method can be used together with any of these algorithms, possibly further improving the model robustness.

\vspace{-2mm}
\paragraph{Adversarial robustness on low-dimensional datasets}
Although the area of adversarial machine learning has been started and developed in the image classification task~\citep{szegedy2013intriguing,kurakin2016adversarial,yuan2019adversarial,biggio2018wild,chakraborty2018adversarial,madry2017towards,carlini2017towards,carlini2019evaluating}, there have been discussions on the adversarial attacks on low-dimensional datasets, e.g., tabular datasets~\citep{ballet2019imperceptible,cartella2021adversarial,gupta2021quantitative}.
\bmt{
This paper focuses on low-dimensional datasets, and discusses methods for improving both the generalization performance and the adversarial robustness of mixup.
We first consider \emph{margin} as a proxy for the generalization/robustness performances, and examine the issues of mixup reducing the margin of a classifier. Then, we provide a re-labeling scheme that fixes these issues, and improves not only the margin, but also the generalization/robustness performances of mixup. %
}

\begin{table}[t]
\vspace{-2mm}
\centering
	\caption{
	Clean accuracy (\%) on synthetic datasets (circle, moon in scikit-learn~\citep{scikit-learn} and two-circle, 2D/3D cube datasets designed by us) and real datasets in OpenML~\citep{OpenML2013}.
	OpenML-$x$ represents the dataset with ID number $x$ in OpenML.
	Note that mixup has a worse performance than vanilla training on these datasets.
	}
    \tiny	
    \setlength{\tabcolsep}{3pt} %
       \begin{tabular}{c|c|c|c|c|c|c|c|c|c|c}%
    \midrule  
    \textbf{Dataset} & 
    Circle & Moon  
    & Two-circle & 2D cube & 3D cube
    & OpenML-48 & OpenML-61 & OpenML-307 & OpenML-818 & OpenML-927
    \\
    \midrule
    \textbf{Vanilla training}       
        & 99.70$\pm$0.16
        & 98.74$\pm$0.34
        & 91.25$\pm$8.14
        & 98.08$\pm$1.01
        & 93.01$\pm$1.65
        & 39.13$\pm$0.00
        & 95.56$\pm$0.00
        & 66.80$\pm$0.16
        & 100.00$\pm$0.00
        & 76.92$\pm$0.00
         \\
        \textbf{Mixup training}         
        & 85.78$\pm$14.44
        & 96.96$\pm$0.67
        & 57.70$\pm$2.94
        & 96.60$\pm$1.51
        & 89.41$\pm$1.83  
        & 30.87$\pm$4.84
        & 88.00$\pm$1.09
        & 54.41$\pm$1.29
        & 92.26$\pm$0.43
        & 69.23$\pm$4.87
        \\
      
        \textbf{Difference (vanilla $-$ mixup)}         
        & 13.92$\pm$14.36
        & 1.78$\pm$0.61
        & 33.55$\pm$9.77
        & 1.47$\pm$1.34
        & 3.60$\pm$2.11
        & 8.26$\pm$4.84
        & 7.56$\pm$1.09
        & 12.39$\pm$1.45
        & 7.74$\pm$0.43
        & 7.69$\pm$4.87
        \\
        \bottomrule
    \end{tabular}     
	\label{Table:mixup_fail}
\end{table}

\section{Failure of mixup on low-dimensional data}\label{sec:failure}
\vspace{-2mm}

In this section, we observe the failure scenarios of mixup, \textit{i.e.}, when mixup performs even worse than vanilla training, especially focusing on the low-dimensional data setting.
Table~\ref{Table:mixup_fail} shows the scenarios when mixup has a lower accuracy than vanilla training, for synthetic datasets\footnote{\label{synthetic_details}The details of synthetic datasets designed by us are provided in Section~\ref{sec:exp_setup} in \sm.} and OpenML datasets~\citep{OpenML2013}.
For example, in the Two-circle dataset, the performance gap between mixup and vanilla training is larger than 30\%.
A natural question is, why mixup has such failure scenarios? 
Here we identify and analyze two main reasons for the failure of mixup.
First, as pointed out by~\citep{guo2019mixup}, mixup has the manifold intrusion issue, i.e., a mixup sample generated by mixing two classes may overlap with a data sample drawn from the third class~\citep{guo2019mixup}. 
We provide theoretical/empirical analysis of the effect of manifold intrusion on the performance of mixup.
Second, we theoretically/empirically show that even when there is no manifold intrusion issue, the labeling method used in mixup may harm the margin/accuracy of a classifier.

\subsection{Manifold intrusion of mixup reduces the margin and accuracy}\label{sec:manifold_intrusion_issue}
\vspace{-1mm}

In~\citep{guo2019mixup}, the manifold intrusion (MI) is defined as the scenario when the mixup sample $\vx^{\op{mix}}$, generated by mixing data in class $c_1$ and $c_2$, collides with a real data sample having the ground-truth label $c_3 \notin \{c_1, c_2\}$. 
Below we theoretically show that the manifold intrusion can reduce the margin of a classifier trained by mixup.

\begin{example}\label{ex:intrusion}
Consider binary classification on the dataset $S = \{ (x_i, \vy_i)\}_{i=1}^3 =  \{(-1, \ve_1), (0, \ve_2), (+1, \ve_1) \}$ in Fig.~\ref{Fig:toy_intrusion_margin_dataset}, where each data point in class 1 is represented as brown circle, and each data point in class 2 is shown as blue triangle.
We consider the classifier $f_{\theta}$ parameterized by $\theta > 0$, shown in Fig.~\ref{Fig:toy_intrusion_margin_classifier}. This classifier estimates the label of a given feature $x$ as $\hat{\vy} = [f_{\theta}(x), 1-f_{\theta}(x)]$, and the margin of this classifier is represented as
$\op{margin} (f_\theta) = \min \{\theta, 1-\theta \}$.

As in Fig.~\ref{Fig:toy_intrusion_example}, applying mixup on this dataset suffers from manifold intrusion (MI); mixing $x_1 = -1$ and $x_3 = +1$ with coefficient $\lambda=0.5$ generates $x^{\op{mix}} = \lambda x_1 + (1-\lambda) x_3 = 0$ with label  $\vy^{\op{mix}} = \lambda \vy_1 + (1-\lambda) \vy_3 = \ve_1$, while we have another data at the same location $x_2 = 0$ having different label $\vy_2 = \ve_2$. Here we observe how this \emph{label conflict} affects the margin of the classifier. To be specific, we compare two schemes: (1) mixup and (2) mixup-without-MI. To avoid MI, we set the scheme (2) to mix only samples with different classes. Here, the mixing coefficient is uniform-randomly sampled as
$\lambda \sim \op{unif}[0,1]$, which is a special case of having $\alpha=1$ in $\lambda \sim \op{Beta}(\alpha, \alpha)$.
For a given scheme $s \in \{ \op{mixup}, \op{mixup-without-MI} \}$, let $\theta_{s}$ be the parameter $\theta$ that minimizes the MSE loss $\ell(\hat{\vy}, \vy) = \lVert \hat{\vy} - \vy \rVert_2^2$.
It turns out that $\op{margin}(\theta_{\op{mixup}}) = \frac{7}{16}$ and 
$\op{margin}(\theta_{\op{mixup-without-MI}}) = \frac{1}{2}$ as shown in Section~\ref{sec:proof_prop:toy_intrusion_margin} of \sm.
\end{example}

\begin{figure}[t]
    \vspace{-5mm}
	\centering
	\subfloat[][\centering{Dataset $S = \{(x_i, \vy_i)\}_{i=1}^3$}]{\includegraphics[width=50mm ]{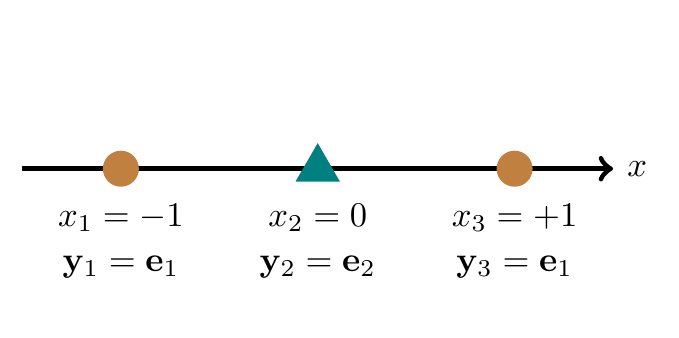}
	\label{Fig:toy_intrusion_margin_dataset}} 
 	\subfloat[][\centering{Classifier $f_{\theta}(x)$ for $\theta > 0$} ]{\includegraphics[height=30mm ]{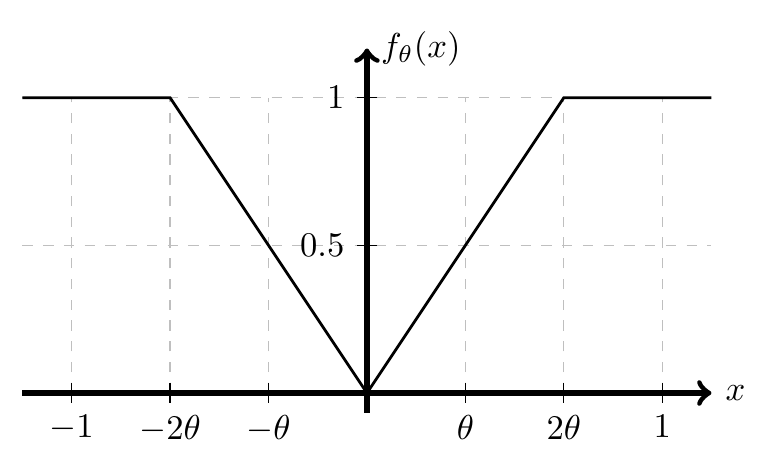}\label{Fig:toy_intrusion_margin_classifier}}
	\subfloat[][\centering{Manifold intrusion (label conflict)}]{\includegraphics[width=50mm ]{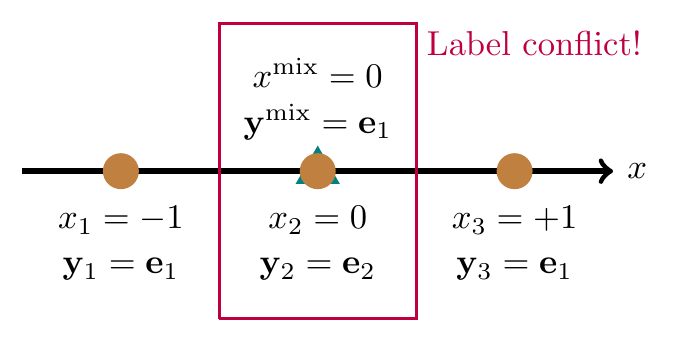}\label{Fig:toy_intrusion_example}}  	
	\caption
	{Binary classification problem defined in Example~\ref{ex:intrusion}: (a) the dataset $S$, (b) the classifier $f_{\theta}$ used for getting the estimated label $\hat{\vy} = [f_{\theta}(x), 1 - f_{\theta}(x)]$.
	As in (c), mixing $x_1=-1$ and $x_3=+1$ with coefficient $\lambda=0.5$ generates a mixed point $x^{\op{mix}} = \lambda x_1 + (1-\lambda) x_3 = 0$ having label $\vy^{\op{mix}}= \lambda \vy_1 + (1-\lambda) \vy_3 = \ve_1$. This mixed point incurs the manifold intrusion, since we have another data at the same location $x_2 = 0$ having different label $\vy_2 = \ve_2$.
    As described in Example~\ref{ex:intrusion}, the manifold intrusion reduces the margin of mixup, assuming we use the classifier $f_{\theta}(x)$ defined in (b).
    }
	\label{Fig:toy_intrusion_margin}
\end{figure}

The example above shows that we can achieve the maximum margin $\frac{1}{2}$ if we remove mixup points having manifold intrusion, while the   naive way of mixing every pair of points degrades the margin to $\frac{7}{16}$.

\begin{wrapfigure}{r}{0.35\textwidth}
\vspace{-6mm}
  \begin{center}
    \includegraphics[width=0.33\textwidth]{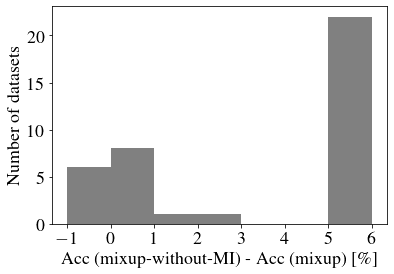}
  \end{center}
  \vspace{-3mm}
  \caption{
  The effect of manifold intrusion (MI) on the classification accuracy of mixup, for 38 datasets in OpenML. 
  Here, we compare mixup with a variant called ``mixup-without-MI'', which follows the basic mixup procedure, but excludes the augmented samples incurring manifold intrusion.
  This shows that manifold intrusion is causing accuracy drop in various real datasets.
 }
  \vspace{-7mm}
  \label{Fig:intrusion_openML}
\end{wrapfigure}

Now we empirically show how the manifold-intruding mixup points affect the classification accuracy of mixup, in various real datasets in OpenML. 
Similar to the previous example, we consider two schemes: (1) mixup and (2) mixup-without-MI, where the second scheme is defined as a usual mixup with the exclusion of mixup points that incur the manifold intrusion. 
Here we decide whether a mixup point is suffering from manifold intrusion, using a relaxed version of the definition suggested in~\citep{guo2019mixup}: we call a mixed point $\vx^{\op{mix}} = \lambda \vx_1 + (1-\lambda) \vx_2$ has manifold intrusion if the label of the nearest neighbor $\vx^{\op{nn}} = \argmin_{\vx \in X} d(\vx, \vx^{\op{mix}})$ is different from $\vy_1$ and $\vy_2$.

Fig.\ref{Fig:intrusion_openML} compares the classification accuracy of (1) mixup and (2) mixup-without-MI, for 38 datasets in OpenML having more than two classes and having not more than 20 features.  
It turns out that for 24 out of 38 datasets, mixup-without-MI has accuracy gain larger than 1\% compared with mixup, and for the remaining 14 datasets, mixup and mixup-without-MI have similar accuracies; the difference of the accuracies is bounded above by 1\%. In other words, excluding the manifold-intruding mixup points is beneficial for improving the classification accuracy, for various real datasets.

\vspace{-1mm}
\subsection{Labeling method in mixup is sub-optimal in terms of the margin and accuracy
}\label{sec:softmax_regression}
\vspace{-1mm}

In this section, we show that for datasets which do not suffer from the manifold intrusion,
the labeling method used in mixup is a sub-optimal choice in terms of margin and accuracy.
Note that for a given mixed point $\vx^{\op{mix}} = \lambda \vx_i+(1-\lambda) \vx_j$, the conventional labeling method uses a linear interpolation of labels of original samples, represented as $\vy^{\op{lin}} = \lambda \vy_i+(1-\lambda) \vy_j$. We call this conventional method as \emph{linear labeling}. Here we compare this with an alternative labeling dubbed as \emph{logistic labeling}, represented as
$\vy^{\op{log}} = \rho \vy_i+(1-\rho) \vy_j$ where $\rho = \frac{1}{1 + \exp\{-2  (\lambda - 1/2) / \sigma^2\}}$ for some $\sigma > 0$.
We theoretically/empirically show that mixup with linear labeling performs worse than mixup with logistic labeling, in various synthetic/real datasets.
Below we start with analyzing mixup with linear/logistic labeling methods for a synthetic dataset. 

\begin{example}\label{ex:3_dots}
Consider a dataset with three data points illustrated in Fig.~\ref{Fig:toy_3_class}a, where the feature-label pairs are defined as $(\vx_1, \vy_1) = ([-d, +d],  \ve_1)$, $(\vx_2, \vy_2) = ([+d, +d],  \ve_2)$, and $(\vx_3, \vy_3) = ([-d, -d], \ve_3)$ with $d=5$. We train softmax regression model $\mW = [\vw_1^T; \vw_2^T; \vw_3^T] \in \mathbb{R}^{3 \times 2}$
for vanilla training and mixup with linear/logistic labeling. For a given feature $\vx = [x^{(1)}, x^{(2)}]$, the prediction score of each class is denoted as 
$\vp =[p_1, p_2, p_3] = \text{softmax}(\mW \vx) = \frac{1}{\sum_{i=1}^3 \exp(\vw_i^T \vx) }[e^{\vw_1^T \vx}, e^{\vw_2^T \vx}, e^{\vw_3^T \vx} ]$.
In Fig.~\ref{Fig:toy_3_class}b,  Fig.~\ref{Fig:toy_3_class}c and Fig.~\ref{Fig:toy_3_class}d, we compare the decision boundaries of vanilla training, mixup with linear labeling, and mixup with logistic labeling. 
Note that mixup with linear labeling has much smaller margin than vanilla training, while mixup with logistic labeling enjoys a larger margin than vanilla training.
\end{example}

The effect of linear/logistic labeling methods on the margin of a classifier can be explained as follows.
Recall that the softmax regression finds the model that minimizes the cross entropy loss between the prediction $\vp$ and the label $\vy$, and we achieve the minimum when $\vp = \vy$ holds.
In Fig.~\ref{Fig:toy_3_class}a, consider mixing $\vx_2 $ and $\vx_3$ with coefficient $\lambda \in [0,1]$ along the line $x^{(2)} = x^{(1)}$, generating $\vx^{\op{mix}} = \lambda \vx_2 + (1-\lambda)\vx_3 = [(2\lambda-1)d, (2\lambda-1)d]$. 

As in Fig.~\ref{Fig:toy_3_class}e, mixup with linear labeling assigns the label $\vy^{\op{lin}}=[y_1, y_2, y_3] = [0, \lambda, 1-\lambda]$ for the mixup points on the line $x^{(2)} = x^{(1)}$. The model $\mW$ is trained in a way that $\vp$ resembles $\vy^{\op{lin}}$, \ie set $p_1 = 0$ and set both $p_2$ and $p_3$ as a linear function of $\vx$ along the line $x^{(2)} = x^{(1)}$. 
This is true when $\vw_2$ and $\vw_3$ are close enough and symmetric about the line $x^{(2)} = - x^{(1)}$, as in Fig.~\ref{Fig:toy_3_class}c; in such case, we have $\exp(\vw_2^T \vx) \simeq 1 + \vw_2^T \vx$ and $\exp(\vw_3^T \vx) \simeq 1  + \vw_3^T \vx = 1 - \vw_2^T \vx$ for $\vx$ satisfying $x^{(2)} = x^{(1)}$.
Then, we have 
$p_2 = \frac{\exp(\vw_2^T \vx)}{ \exp(\vw_1^T \vx) + \exp(\vw_2^T \vx) + \exp(\vw_3^T \vx)} \simeq \frac{1+\vw_2^T \vx}{ 0 + 1 + \vw_2^T \vx + 1 - \vw_2^T \vx} = \frac{1}{2} (1 + \vw_2^T \vx),$
which is linear in $\vx$. Similarly, $p_3 = \frac{1}{2} (1+\vw_3^T \vx)$ holds.
Specifically, if we set $\vw_2 = [-1 + \frac{1}{2d}, 1 + \frac{1}{2d}]$ and 
$\vw_3 = [-1 - \frac{1}{2d}, 1 - \frac{1}{2d}]$, then for the mixed points $\vx^{\op{mix}} = [(2\lambda-1)d, (2\lambda-1)d]$, we have 
$p_2 = \frac{1}{2} (1+\vw_2^T \vx^{\op{mix}}) = \lambda = y_2$ and $p_3 = \frac{1}{2} (1+\vw_3^T \vx^{\op{mix}}) = 1 -\lambda = y_3$. This implies that the model $\mW$ trained to set $\vp =\vy^{\op{lin}}$ will look like the solution in Fig.~\ref{Fig:toy_3_class}c, especially when $d$ is large. 
Thus, fitting the softmax regression model to the linear labeling strategy of mixup reduces the margin in this toy dataset.

\begin{figure}[t]
	\vspace{-2mm}
	\centering
	\small
	\includegraphics[width=.9\linewidth ]{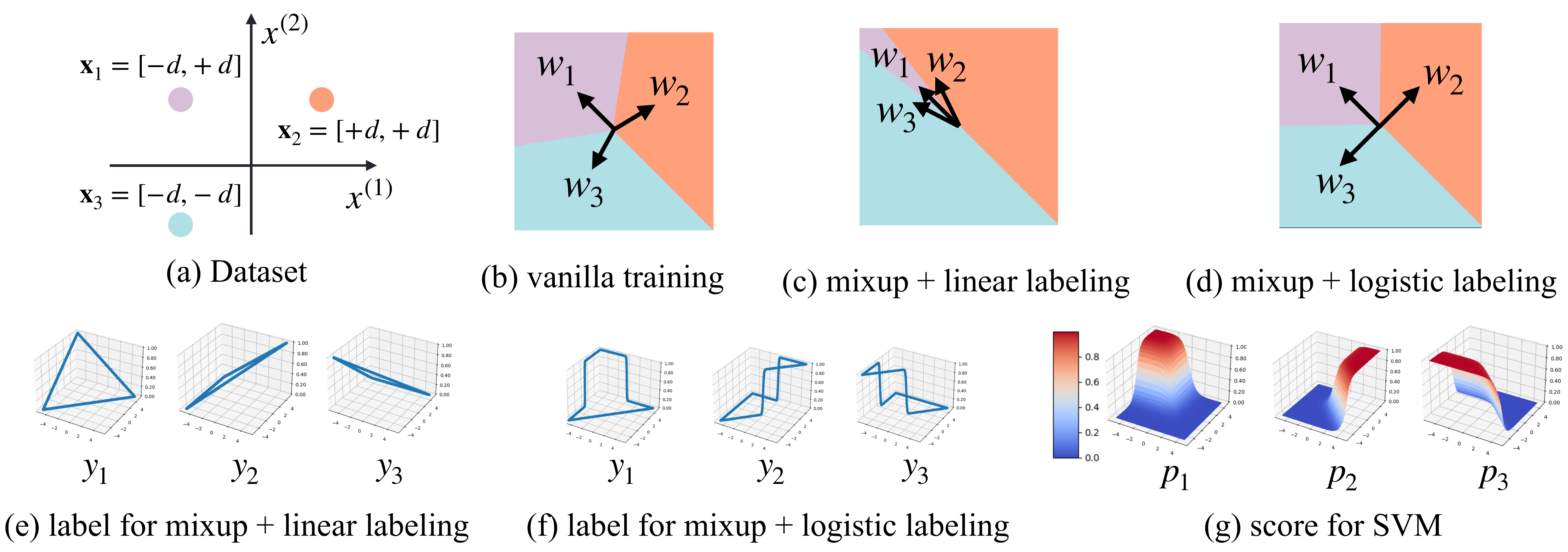}
	\caption{The effect of linear/logistic labeling in mixup training for a toy dataset in (a). 
	From the decision boundary plots in (b), (c), (d), it is shown that using the linear labeling method in mixup is significantly reducing the margin of the purple class than vanilla training, while the logistic labeling method leads to the max-margin classifier.
	This phenomenon is explained  
	in Section~\ref{sec:softmax_regression}, based on the label/score plots in (e), (f), and (g).
	This example provides two messages: (1) the conventional linear labeling is not an appropriate choice for mixup, to maximize the margin of a classifier, (2) using an appropriate label for the mixup samples can lead the classifier to become a max-margin solution.
	}
	\label{Fig:toy_3_class}
	\vspace{-3mm}
\end{figure}

\begin{wrapfigure}{r}{0.35\textwidth}
\vspace{-5mm}
  \begin{center}
    \includegraphics[width=0.32\textwidth]{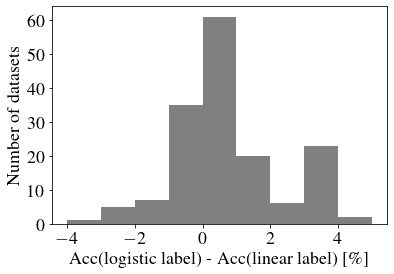}
  \end{center}
  \vspace{-3mm}
  \caption{
  Comparison of classification accuracy of linear labeling and logistic labeling in mixup. 
  Among 160 tested datasets in OpenML, 87 of them has positive gain by using logistic labeling instead of linear labeling, 23 of them has zero gain, and 50 of them has negative gain.
  This shows the linear labeling conventionally used in mixup is sub-optimal for a large number of low-dimensional real datasets.
  }
  \vspace{-8mm}
  \label{Fig:labeling_openML}
\end{wrapfigure}
We now explain how the logistic labeling enjoys a large margin in Fig.~\ref{Fig:toy_3_class}d. 
Consider mixup points $\vx^{\op{mix}} = \lambda \vx_2 + (1-\lambda)\vx_3 = [ (2\lambda-1) d, (2\lambda-1) d]$, generated by mixing $\vx_2$ and $\vx_3$, where $\lambda \in [0,1]$. %
As in Fig.~\ref{Fig:toy_3_class}f, the logistic labeling with $\sigma = \frac{1}{2\sqrt{d}}$ assigns the label $\vy^{\op{log}}=[y_1, y_2, y_3] = [0, \rho, 1-\rho]$ for these mixup points, where $\rho = \frac{1}{1 + \exp (-8d (\lambda - 1/2))}$.
Under this setting, the model $\mW$ is trained in a way that $\vp$ resembles $\vy^{\op{log}}$, \ie set $p_1 = 0$ and set both $p_2$ and $p_3$ as a logistic function of $\vx$ along the line $x^{(2)} = x^{(1)}$. This is true for the support vector machine (SVM) solution, $\vw_1=[-1, 1], \vw_2 = [1,1],$ and $\vw_3 = [-1,-1]$, which has $p_2 = \frac{\exp(\vw_2^T \vx^{\op{mix}})}{ \exp(\vw_1^T \vx^{\op{mix}}) + \exp(\vw_2^T \vx^{\op{mix}}) + \exp(\vw_3^T \vx^{\op{mix}})} = \frac{\exp( 2d (2\lambda-1) ) }{ 0 + \exp( 2d  (2\lambda-1)   ) + \exp(-2d (2\lambda-1)  )} = 
\frac{1}{1+\exp (-8 d (\lambda - 1/2)  ) } = y_2$, and similarly $p_3 = y_3$. One can confirm that the logistic label $\vy$ in Fig.~\ref{Fig:toy_3_class}f resembles the score $\vp$ of SVM solution in Fig.~\ref{Fig:toy_3_class}g, which corroborates the fact that logistic labeling guides us to achieve the maximum margin.

The above analysis shows that the \emph{linear labeling} method is harming the margin of a mixup-trained classifier, while the \emph{logistic labeling} method is allowing mixup to enjoy the maximum margin. This clearly shows that the conventional labeling method of mixup is sub-optimal, and an appropriate re-labeling method improves the margin significantly.

Now we confirm the effect of linear/logistic labeling on the classification accuracy (generalization performance) of mixup-trained classifiers for real datasets in OpenML~\citep{OpenML2013}.
To be specific, we compared 
the accuracy difference of two labeling schemes for 160 datasets in OpenML %
having no more than 20 features, when we use the logistic regression model. 
Here, in order to decouple the effect of labeling and the effect of manifold intrusion, we removed the mixed points incurring the manifold intrusion by following the criterion in Section~\ref{sec:manifold_intrusion_issue}: a mixed point $\vx^{\op{mix}} = \lambda \vx_1 + (1-\lambda) \vx_2$ is regarded as causing manifold intrusion if the label of the nearest neighbor $\vx^{\op{nn}} = \argmin_{\vx \in X} d(\vx, \vx^{\op{mix}})$ is different from $\vy_1$ and $\vy_2$.
As shown in Fig.~\ref{Fig:labeling_openML}, it turns out that the accuracy gain by using logistic labeling, i.e., (accuracy of the logistic labeling) - (accuracy of the linear labeling), is positive for 87 datasets, zero for 23 datasets, and negative for 50 datasets. In other words, using logistic labeling instead of linear labeling improves the accuracy for more than half of the tested real datasets in OpenML.
This experimental results show that the conventional method of labeling mixup points is not optimal in terms of accuracy in numerous low-dimensional real datasets.

\section{\gl{}}\label{sec:GenLabel}
\vspace{-2mm}

In the previous section, we observed two main issues of mixup. First, mixup points may intrude the manifold of a third class, which is so-called \emph{manifold intrusion} issue. This is due to the fact that mixup blindly interpolates randomly chosen two samples, without the knowledge on the underlying data distribution.
Second, the conventional method of labeling mixup samples is sub-optimal, in terms of margin and accuracy. 

Motivated by these observations, we propose \gl{}, a method of re-labeling mixup samples based on the underlying data distribution estimated by generative models.
The suggested algorithm contains three steps. First, we estimate the class-conditional data distribution $p_c(\vx)$ for each class $c$. Second, we apply the conventional mixup-based data augmentation, generating mixup sample $\vx^{\op{mix}}$ originally labeled as  $\vy^{\op{mix}}$.
Finally, we relabel the generated mixup sample $\vx^{\op{mix}}$ based on the estimated class-conditional likelihood, \ie we define the new label as 
$\vy^\text{gen} = \text{softmax}( \log {p_1}(\vx^\text{mix}), \cdots, \log {p_k}(\vx^\text{mix}))$.
This new label is called \gl{} since it makes use of generative models for labeling.

We consider the case when the generative model has the explicit density function $p_c(\vx)$, e.g., Gaussian mixture model~\citep{shalev2014understanding} and kernel density estimator~\citep{friedman2017elements}.
In Section~\ref{sec:GenLabel_input}, we provide the \gl{} algorithm when we use generative models in the input feature space. Then, the algorithm is extended to the case when the generative models learn the latent feature space, in Section~\ref{sec:GenLabel_latent}.
Note that the suggested \gl{} is a re-labeling method, and we follow the mixing strategy of mixup by default. Throughout the paper, the scheme called ``\gl{}'' refers to ``mixup+\gl{}'', unless specified otherwise.

\begin{algorithm}[t!]
	\small
	\textbf{Input} Dataset $S = \{(\vx_i, \vy_i)\}_{i=1}^n$, learning rate $\eta$, loss ratio $\gamma$
	\\
	\textbf{Output} Trained discriminative model $f_{\theta} (\cdot)$
	\begin{algorithmic}[1]
	    \STATE $\theta \leftarrow$ Random initial model parameter
	    \STATE $p_c(\vx) \leftarrow$ Density estimated by generative model for input feature $\vx \in X$%
	    , conditioned on class $c \in [k]$
        \FOR{$(\vx_i, \vy_i), (\vx_j, \vy_j) \in S$}
        \STATE $(\vx^{\op{mix}}, \vy^{\op{mix}}) = (\lambda \vx_i + (1-\lambda) \vx_j, \lambda \vy_i + (1-\lambda) \vy_j )$
        \STATE $\vy^{\op{gen}} \leftarrow \sum_{c=1}^k \frac{  p_{c}(\vx^{\op{mix}})   }{ \sum_{c'=1}^k p_{c'}(\vx^{\op{mix}})} \ve_{c}$
        \STATE $\theta \leftarrow \theta - \eta  \nabla_{\theta} \{ \gamma \cdot  \ell_{\text{CE}}(\vy^{\op{gen}}, f_{\theta}(\vx^{\op{mix}})) + (1-\gamma) \cdot  \ell_{\text{CE}}(\vy^{\op{mix}}, f_{\theta}(\vx^{\op{mix}}))\}$ 
        \ENDFOR
	\end{algorithmic}
	\caption{GenLabel}
	\label{Algo:GenLabel_input}
\end{algorithm}

\subsection{Vanilla setting: when generative models learn the density in the input feature space}
\label{sec:GenLabel_input}
\vspace{-1mm}

Given a dataset $S$, we first train class-conditional generative model, thereby learning the underlying data distribution $p_c(\vx)$. 
Then, for randomly chosen data pair $(\vx_i, \vy_i), (\vx_j, \vy_j) \in S$, we apply mixup scheme, generating the mixed feature $\vx^{\op{mix}} = \lambda \vx_i + (1-\lambda) \vx_j$ and the mixed label $\vy^{\op{mix}} = \lambda \vy_i + (1-\lambda) \vy_j$. Here, the mixing coefficient follows the beta distribution, i.e., $\lambda \sim \text{Beta}(\alpha, \alpha)$ for some $\alpha > 0$.
Finally, we re-label this augmented data $\vx^{\op{mix}}$ based on the estimated class-conditional likelihood $p_c(\vx^{\op{mix}})$ for class $c \in [k]$. 
To be specific, we label the mixed point $\vx^{\op{mix}}$ as 
\begin{align}\label{eqn:GenLabel}
\vy^{\op{gen}} = \sum_{c=1}^k \frac{  p_{c}(\vx^{\op{mix}})   }{ \sum_{c'=1}^k p_{c'}(\vx^{\op{mix}})} \ve_{c},
\end{align}
which is nothing but the softmax of $\{ \log (p_{c}(\vx^{\op{mix}})) \}_{c=1}^k$.
Note that $\frac{{p_c}(\vx^\text{mix})}{\sum_{c'=1}^k {p_{c'}}(\vx^\text{mix})}$ in~(\ref{eqn:GenLabel}) is equal to the posterior probability $\mathbb{P}(y=c | \vx^\text{mix}) = \frac{p_c(\vx^\text{mix}) \mathbb{P}(y=c)}{\sum_{c'=1}^k p_{c'}(\vx^\text{mix}) \mathbb{P}(y=c') }$, when we have a balanced dataset, i.e., $\mathbb{P}(y=c) = \mathbb{P}(y=c')$ for all classes $c, c' \in [k]$. \bmt{
Thus, \gl{} assigns the posterior probability of each class for a given mixed sample $\vx^\text{mix}$, when the dataset is balanced.
}

Since our generative model is an imperfect estimate on the data distribution, $\vy^{\op{gen}}$ may be incorrect for some samples. Thus, we can use a combination of mixup labeling and the suggested labeling, \ie define the label of mixed point as $\gamma \vy^{\op{gen}} + (1-\gamma) \vy^{\op{mix}}$ for some $\gamma \in [0,1]$. Note that our scheme reduces to the mixup labeling scheme when $\gamma = 0$. 
Using the relabeled augmented data, the algorithm trains the classification model $f_{\theta} : \mathbb{R}^n \rightarrow [0,1]^k$ that predicts the label $\vy = [y_1, \cdots, y_k]$ of the input data. 
Here, the cross-entropy loss $\ell_{\text{CE}} (\cdot)$ is used while optimizing the model.
The pseudocode of \gl{} is provided in Algorithm~\ref{Algo:GenLabel_input}.

In summary, the proposed scheme is a novel label correction method for mixup, which first learns the data distributions for each class using class-conditional generative models, and then re-label the mixup data based on the conditional likelihood of the mixup data sampled from each class. More precisely, \gl{} sets the label of a mixup data as the softmax of the class-conditional log-likelihood, which matches with the posterior probability for the balanced datasets.

\subsection{When generative models learn the density in the latent feature space}\label{sec:GenLabel_latent}

\gl{} described in Algorithm~\ref{Algo:GenLabel_input} assumes that the generative model learns the input feature space. However, for some datasets, it is beneficial to learn the underlying distribution in the latent feature space. For such cases, we can apply \gl{} combined with generative models for the latent feature space, as below.

Consider a discriminative model $f_{\theta} = f_{\theta}^{\op{cls}} \circ f_{\theta}^{\op{feature}}$ parameterized by $\theta$, which is composed of the feature extractor part $f_{\theta}^{\op{feature}}$ and the classification part $f_{\theta}^{\op{cls}}$. For example, we can consider $f_{\theta}^{\op{feature}}$ as the neural network from the input layer to the penultimate layer, and $f_{\theta}^{\op{cls}}$ as the final fully-connected layer. Consider another discriminative model $f_{\phi} = f_{\phi}^{\op{cls}} \circ f_{\phi}^{\op{feature}}$ having the same architecture with $f_{\theta}$.
We first randomly initialize the model parameters $\theta$ and $\phi$, and train only the second model $f_{\phi}$ using the vanilla training method on dataset $S = \{(\vx_i, \vy_i) \}_{i=1}^n$. Given the trained feature extractor $f_{\phi}^{\op{feature}}$, we train a class-conditional generative model for the latent feature $\vz = f_{\phi}^{\op{feature}}(\vx)$%
, and denote the learned density for class $c$ by $p_{c}(\vz)$.
Finally, we train the first model $f_{\theta}$ using the following manner.
We first follow the mixup process: for $(\vx_i, \vy_i), (\vx_j, \vy_j) \in S$, we generate the augmented data $\vx^{\op{mix}} = \lambda \vx_i + (1-\lambda) \vx_j$ having label $\vy^{\op{mix}} = \lambda \vy_i + (1-\lambda) \vy_j$.
Then, we re-label this augmented data by 
$\vy^{\op{gen}} = \sum_{c=1}^k \frac{  p_{c}(\vz^{\op{mix}})   }{ \sum_{c'=1}^k p_{c'}(\vz^{\op{mix}})} \ve_{c}$,
where $\vz_{\op{mix}} = f_{\phi}^{\op{feature}}(\vx_{\op{mix}})$.
The remaining part for optimizing $\theta$ is identical to that of vanilla \gl{} using generative models for the input feature. The pseudocode of this \gl{} variant (for the latent feature) is given in Algorithm~\ref{Algo:GenLabel_latent}.

\begin{algorithm}[t!]
	\small
	\textbf{Input} Dataset $S = \{(\vx_i, \vy_i)\}_{i=1}^n$, input feature set $X = \{\vx_i\}_{i=1}^n$, learning rate $\eta$, loss ratio $\gamma$
	\\
	\textbf{Output} Trained discriminative model $f_{\theta}$
	\begin{algorithmic}[1]
	    \STATE $\theta \leftarrow$ Random initial parameter for model $f_{\theta} =   f_{\theta}^{\op{cls}} \circ f_{\theta}^{\op{feature}}$
	    \STATE $\phi \leftarrow$ Vanilla-trained parameter for model $f_{\phi} =  f_{\phi}^{\op{cls}} \circ f_{\phi}^{\op{feature}}$
	    \STATE $p_c(\vz) \leftarrow$ Density estimated by generative model for latent feature $\vz \in f_{\phi}^{\op{feature}}(X)$%
	    , conditioned on class $c \in [k]$
        \FOR{$(\vx_i, \vy_i), (\vx_j, \vy_j) \in S$}
        \STATE $(\vx^{\op{mix}}, \vy^{\op{mix}}) \leftarrow (\lambda \vx_i + (1-\lambda) \vx_j, \lambda \vy_i + (1-\lambda) \vy_j )$
        \STATE $\vz^{\op{mix}} \leftarrow f_{\phi}(\vx^{\op{mix}})$
        \STATE  $\vy^{\op{gen}} \leftarrow \sum_{c=1}^k \frac{  p_{c}(\vz^{\op{mix}})   }{ \sum_{c'=1}^k p_{c'}(\vz^{\op{mix}})} \ve_{c}$
        \STATE $\theta \leftarrow \theta - \eta  \nabla_{\theta} \{ \gamma \cdot  \ell_{\text{CE}}(\vy^{\op{gen}}, f_{\theta}(\vx^{\op{mix}})) + (1-\gamma) \cdot  \ell_{\text{CE}}(\vy^{\op{mix}}, f_{\theta}(\vx^{\op{mix}}))\}$ 
        \ENDFOR
	\end{algorithmic}
	\caption{\gl{} (using generative models for the latent feature)  
	}
	\label{Algo:GenLabel_latent}
\end{algorithm}

\section{
Analysis of \gl{}
}
\vspace{-2mm}

We analyze the effect of \gl{} in various perspectives. 
In Section~\ref{sec:GenLabel_solves_intrusion}, we visualize \gl{} for toy datasets, empirically showing that \gl{} fixes the manifold intrusion issue of mixup. 
In Section~\ref{sec:GenLabel_solves_linear_labeling_issue}, we provide mathematical analysis on the margin achievable by \gl{}, showing that \gl{} solves the margin reduction issue of linear labeling in mixup.
Finally, we observe how this margin improvement by \gl{} allows us to get a model that is robust against adversarial attacks. 
In Section~\ref{sec:GenLabel_improves_rob},
we theoretically show that \gl{} improves the adversarial robustness of mixup on logistic regression models and fully-connected ReLU networks.

\subsection{GenLabel solves the label conflict issue of manifold-intruding mixup points}\label{sec:GenLabel_solves_intrusion}
\vspace{-1mm}

\begin{figure}[t]
    \vspace{-2mm}
	\centering
    \includegraphics[height=45mm ]{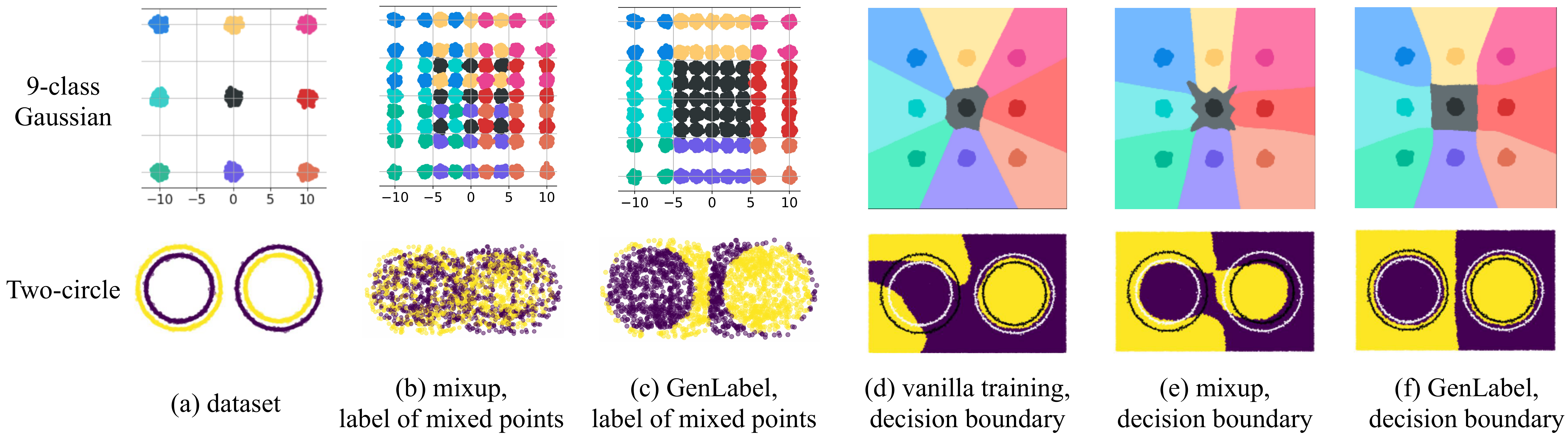}
	\caption
	{
	\bmt
	    {
	Comparison of vanilla training, mixup and the suggested \gl{} for toy datasets (top: 9-class Gaussian, bottom: Two-circle). 
	(a): training data points.
    (b): top-1 label of mixed points for the conventional soft-label $\vy^{\op{mix}}$. Here, the top-1 label $y^{\op{top-1}} = \arg\max_{c \in [k]} y_c$ of a soft-label $\vy = [y_1, \cdots, y_k]$ represents the index of the largest element.
    (c): top-1 label of mixed points for the suggested \gl{} $\vy^{\op{gen}}$.
    Figures in (b) and (c) show that the conventional labeling method $\vy^{\op{mix}}$ causes the label conflict issue for a large number of mixup points, while the suggested label $\vy^{\op{gen}}$ does not suffer from the conflict issue.
    (d), (e), (f): decision boundaries of vanilla training, mixup and \gl{}. 
    In the decision boundary plots, we can find that several classes/samples have small margins for vanilla training and mixup, while the suggested \gl{} does not have such issue. 
        }
    }
	\label{Fig:intrusion_fix_margin}
	\vspace{-3mm}
\end{figure}

As discussed in Section~\ref{sec:manifold_intrusion_issue}, one of the issues degrading the performance of mixup is manifold intrusion, which happens when the mixup sample generated by mixing samples from two classes is overlapping with another true sample from the third class. Here we show that \gl{} solves this \emph{label conflict} of manifold-intruding mixup samples.

Consider the datasets given in Fig.~\ref{Fig:intrusion_fix_margin}a: we have nine classes of 2-dimensional Gaussian dataset on the top row, and two classes having two circles at each class on the bottom row; we call the top one as ``9-class Gaussian'' and the bottom one as ``Two-circle'' dataset. Note that both datasets contain numerous mixed points suffering from the manifold intrusion, e.g., the mixed point of blue and orange samples lie on the black class in 9-class Gaussian dataset. 

In Fig.~\ref{Fig:intrusion_fix_margin}b and Fig.~\ref{Fig:intrusion_fix_margin}c, we illustrate the top-1 label (denoted by $y^{\op{top-1}}$) of a mixed point, for the conventional labeling in mixup and the suggested \gl{} scheme.
Given a soft-label $\vy=[y_1, \cdots, y_k]$, the top-1 label is defined as $y^{\op{top-1}} = \argmax_{c \in [k]} y_c$.
For the 9-class Gaussian data at the top row, we set the mixing coefficient as $\lambda=0.6$ for the purpose of illustration.
As shown in Fig.~\ref{Fig:intrusion_fix_margin}b, the conventional labeling method causes the label conflict issue for a large number of mixup samples. This issue has been resolved by \gl{} as shown in Fig.~\ref{Fig:intrusion_fix_margin}c. One can confirm that the label of mixed points assigned by \gl{} matches with the label of maximum margin classifier for each dataset.

We also checked the effect of this relabeling method on the margin of a classifier.
Fig.~\ref{Fig:intrusion_fix_margin}d, Fig.~\ref{Fig:intrusion_fix_margin}e and Fig.~\ref{Fig:intrusion_fix_margin}f show the decision boundary of vanilla training, mixup (with original labeling) and mixup+\gl{}, respectively.
While vanilla training and mixup have small margin for some classes/samples, mixup combined with \gl{} enjoys a large margin for all samples. This shows that suggested label correction mechanism is guiding the classifier to have a large margin.

\subsection{GenLabel 
solves the margin reduction issue of the linear labeling in mixup}
\label{sec:GenLabel_solves_linear_labeling_issue}

\vspace{-1mm}
In Section~\ref{sec:softmax_regression}, we have shown that even when there is no manifold intrusion, the \emph{linear labeling} method used in mixup is a sub-optimal choice in terms of margin of a classifier. Interestingly, we found an example (in Fig.~\ref{Fig:toy_3_class}) when the mixup with linear labeling method is reducing the margin of the vanilla-trained model. 
Here we show that changing the \emph{linear labeling} method to the suggested \gl{} allows us to fix this issue and to achieve the maximum margin for toy datasets in Examples~\ref{ex:3_dots} and~\ref{ex:n+2_dots}.

We start with showing that \gl{}
reduces to the logistic labeling for the Gaussian data.

\begin{proposition}\label{Prop:Gaussian}
Consider a binary classification problem when the class-conditional data distribution is $(x|y=0) \sim \mathcal{N}(0, \sigma^2)$ and $(x|y=1) \sim \mathcal{N}(1, \sigma^2)$. Let $x^{\op{mix}} = \lambda$ be the mixed point generated by mixup. For small $\sigma > 0$, the label of $x^{\op{mix}}$ for mixup and \gl{} are
\begin{align*}
\vspace{-4mm}
\small{
y^{\op{mix}} = \lambda , \quad \quad \quad 
y^{\op{gen}}
= \frac{1}{1 + \exp(- (\lambda - 1/2)/\sigma^2)}.
}
\end{align*} 
\vspace{-4mm}
\end{proposition}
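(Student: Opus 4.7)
The claim has two parts. The mixup statement $y^{\op{mix}} = \lambda$ is essentially definitional, and the GenLabel statement reduces to a one-line computation with Gaussian densities. My plan is as follows.

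First, I would unpack the role of the qualifier ``small $\sigma > 0$''. For small $\sigma$, samples from class $0$ concentrate sharply near the mean $0$ and samples from class $1$ near the mean $1$. Hence the statement ``the mixed point is $x^{\op{mix}} = \lambda$'' is to be interpreted as mixing a representative class-$1$ sample $x_i \approx 1$ with a representative class-$0$ sample $x_j \approx 0$ using coefficient $\lambda$, so that $x^{\op{mix}} = \lambda x_i + (1-\lambda) x_j \to \lambda$ as $\sigma \to 0$. The mixup label, by definition, is $y^{\op{mix}} = \lambda \cdot 1 + (1-\lambda) \cdot 0 = \lambda$, which gives the first equality.

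For the GenLabel formula, I would substitute the Gaussian densities $p_0(x) = (2\pi\sigma^2)^{-1/2} \exp(-x^2/(2\sigma^2))$ and $p_1(x) = (2\pi\sigma^2)^{-1/2} \exp(-(x-1)^2/(2\sigma^2))$ into the definition $y^{\op{gen}} = p_1(x^{\op{mix}})/(p_0(x^{\op{mix}}) + p_1(x^{\op{mix}}))$ (reading $y^{\op{gen}}$ as the class-$1$ coordinate) evaluated at $x^{\op{mix}} = \lambda$. The normalizing factors $(2\pi\sigma^2)^{-1/2}$ cancel, and dividing numerator and denominator by $p_1(\lambda)$ yields
\[
y^{\op{gen}} \;=\; \frac{1}{1 + \exp\!\big(((\lambda-1)^2 - \lambda^2)/(2\sigma^2)\big)}.
\]
The remaining step is the algebraic identity $(\lambda-1)^2 - \lambda^2 = -(2\lambda - 1) = -2(\lambda - 1/2)$, which after dividing by $2\sigma^2$ produces the claimed logistic form $1/(1 + \exp(-(\lambda - 1/2)/\sigma^2))$.

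There is no real technical obstacle; the computation is two lines. The only interpretive issue is what ``small $\sigma > 0$'' buys, and I would clarify in the proof that it is needed only to justify treating the two mixed source points as being exactly at the class means $0$ and $1$ (so that a convex combination lands precisely at $\lambda$), while the derived logistic formula for $y^{\op{gen}}$ is in fact exact for every $\sigma > 0$ once $x^{\op{mix}} = \lambda$ is granted.
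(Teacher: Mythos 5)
Your proposal is correct and follows essentially the same route as the paper's proof: approximate the mixed sources by the class means so that $x^{\op{mix}} = \lambda$, then substitute the two Gaussian densities into the GenLabel ratio and simplify $(\lambda-1)^2 - \lambda^2 = -2(\lambda - 1/2)$ to obtain the logistic form. Your closing remark that the logistic formula is exact in $\sigma$ once $x^{\op{mix}} = \lambda$ is granted is a correct and slightly sharper reading of where the small-$\sigma$ assumption is actually used.
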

The proof of this proposition is given in Section~\ref{sec:proof_prop:Gaussian} in \sm. 
Note that the conventional label $y^{\op{mix}}$ is a linear function of $\lambda$, while the GenLabel $y^{\op{gen}}$ follows a logistic function of $\lambda$.

As shown in the analysis for Example~\ref{ex:3_dots}, the logistic labeling achieves the maximum margin for the dataset in Fig.~\ref{Fig:toy_3_class}a. Since each class of this dataset can be viewed as a Gaussian distribution with variance $\sigma^2 \rightarrow 0$, we can apply Proposition~\ref{Prop:Gaussian}. Thus, from the analysis in Section~\ref{sec:softmax_regression}, we can conclude that
\vspace{2mm}
\begin{align}\label{eqn:margin_compare}
    \op{margin}(\op{SVM}) = \op{margin}({\op{GenLabel}}) > \op{margin}({\op{vanilla}}) > \op{margin}({\op{mixup}})
\end{align}%
holds for the dataset in Fig.~\ref{Fig:toy_3_class}a, where 
SVM represents the support vector machine~\citep{cortes1995support} achieving the maximum $L_2$ margin. In other words, mixup combined with \gl{} achieves the maximum margin, while the conventional mixup (using the linear labeling method) is having even worse margin than the vanilla-trained model.

Below we provide another example satisfying (\ref{eqn:margin_compare}).

\begin{example}\label{ex:n+2_dots}
Consider a dataset $S = \{ (\vx_i, y_i) \}_{i=1}^{n+2}$, where the feature $\vx_i \in \mathbb{R}^2$ and the label $y_i \in \{+1, -1\}$ of each point is specified in Fig.~\ref{Fig:toy_label_correction_dataset}.
Let 
$\bm{\theta} = (r \cos \phi, r \sin \phi)$ be the model parameter for a fixed $r > 0$. Fig.~\ref{Fig:loss_minimizer} shows 
$\phi^{\star} = \argmin_{\phi} \ell(r, \phi)$ for various $r$, where $\ell$ is the logistic loss applied to the (augmented) dataset.
It turns out that the optimal $\phi^{\star}$ of \gl{} approaches to the SVM solution $\phi_{\op{svm}} = 0.25 \pi $ as $r$ increases.
The detailed derivation of $\phi^{\star}$ for each scheme is given in Section~\ref{Sec:toy_proof} in \sm.
\end{example}

\begin{figure}[t]
    \vspace{-2mm}
	\centering
	\subfloat[][\centering{Dataset}]{\includegraphics[height=20mm ]{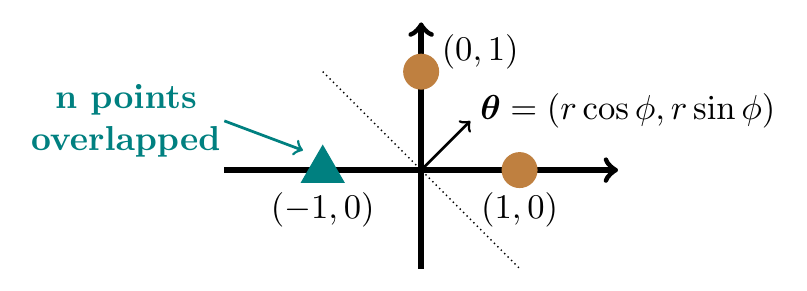}\label{Fig:toy_label_correction_dataset}} 
	\quad \quad \quad
 	\subfloat[][\centering{Comparison of $\phi^{\star}$} ]{\includegraphics[height=25mm ]{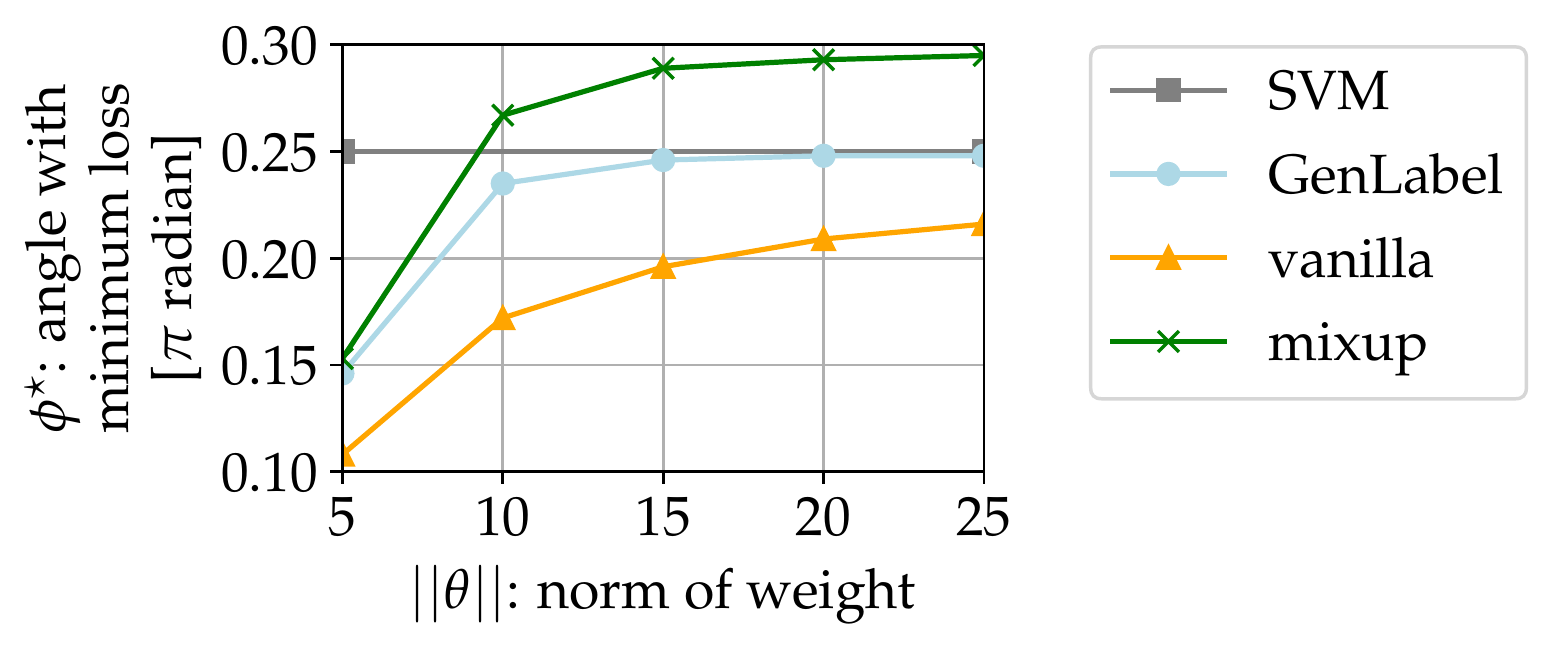}\label{Fig:loss_minimizer}}
	\vspace{-2mm}
	\caption
	{
	Logistic regression on a toy dataset. (a) The feature-label pairs are defined as
	$(\vx_1, y_1) = ([1, 0],  +1)$, $(\vx_2, y_2) = ([0, 1],  +1)$, and $(\vx_i, y_i) = ([-1, 0], -1)$ for    $i=3, 4, \cdots, n+2$.
	(b)
	When $\lVert \bm{\theta} \rVert =25$, we have $\phi^{\star}_{\op{GenLabel}} = 0.25 \pi, \phi^{\star}_{\op{mixup}} = 0.30 \pi$ and $\phi^{\star}_{\op{vanilla}} = 0.22 \pi$. Note that \gl{} quickly saturates to the SVM solution $\phi_{\op{svm}} = 0.25\pi$ as $\lVert \bm{\theta} \rVert$ increases, while vanilla scheme slowly saturates to $\phi_{\op{svm}}$ and original mixup does \textit{not} converge to $\phi_{\op{svm}}$.
    }
	\label{Fig:toy_label_correction}
	\vspace{-3mm}
\end{figure}

\begin{remark}
In the asymptotic regime of large $r = \lVert \bm{\theta} \rVert$, the original mixup does not converge to the max-margin solution, while the mixup relabeled by \gl{} approaches to the max-margin solution. 
\end{remark}

The analysis on Examples~\ref{ex:3_dots} and~\ref{ex:n+2_dots} shows that (1) the original mixup method has a smaller margin compared with vanilla training, and (2) simply changing the label of the mixed points (using \gl{}) can fix this issue and approaches to the max-margin solution.

\subsection{GenLabel improves the adversarial robustness of mixup}\label{sec:GenLabel_improves_rob}
\vspace{-1mm}

Here we analyze the adversarial robustness of a model trained by mixup+\gl{}, and show that \gl{} is beneficial for improving the robustness of mixup under the logistic regression models and the fully-connected (FC) ReLU networks.
We first describe the basic setting considered in our analysis, and then provide the results.
All proofs are given in Section~\ref{Sec:proofs} in \sm.

\paragraph{Basic setting and notations}

Consider $d$-dimensional Gaussian dataset defined as 
$(\vx|y=0)\sim \mathcal{N}(-\ve_1,\frac{\boldsymbol{\Sigma}}{\sigma_1^2})$ and $(\vx|y=1)\sim \mathcal{N}(\ve_1,\frac{\boldsymbol{\Sigma}}{\sigma_2^2})$, where $\Sigma_{ij} = 1$ for $i=j$ and $\Sigma_{ij} = \tau$ for $i \neq j$. 
Here we assume that $-1<\tau<1$, $\tau \notin \{ \frac{-1}{d-1}, \frac{-1}{d-2} \}$ and  $\sigma_2=c\sigma_1$ with $2-\sqrt{3}<c < 2+\sqrt{3}$.
We consider the loss function 
$\ell(\boldsymbol{\theta},(\vx, y))=h\left(f_{\boldsymbol{\theta}}(\vx)\right)-y f_{\boldsymbol{\theta}}(\vx)$, 
where  
$f_{\boldsymbol{\theta}}(\vx)$ 
is the prediction of a model parameterized by $\theta$ for a given input $\vx$,
and 
$h (w)=\log(1+\exp(w))$.

We assume the following labeling setting: when we mix $(\vx_i, y_i)$ and $(\vx_j, y_j)$ which generates the mixed point $\vx^{\op{mix}}_{ij} = \lambda \vx_i + (1-\lambda) \vx_j$, we label it as $y^{\op{mix}}_{ij} = y$ if $y_i = y_j = y$, and we label it as $y_{ij}^{\op{gen}}$ in (\ref{eqn:GenLabel}) otherwise. We assume the mixing coefficient follows the uniform distribution $\lambda \sim \op{Unif}[0,1]$, \textit{i.e.}, $\lambda \sim \op{Beta}(\alpha, \alpha)$ with $\alpha=1$.

For a given model parameter $\boldsymbol{\theta}$ and the dataset $S$, we define the notations for several losses as below. The standard loss is denoted by $L_n^{\op{std}} (\boldsymbol{\theta}, S) =\frac{1}{n}\sum_{i=1}^n \ell(\boldsymbol{\theta}, \vz_i)$. The mixup loss and \gl{} loss are denoted by $L_n^{\op{mix}} (\boldsymbol{\theta}, S) = \frac{1}{n^2}\sum_{i,j=1}^n \mathbb{E}_{\lambda} [ \ell(\boldsymbol{\theta}, \vz_{ij}^{\op{mix}} ) ]$ and $L_n^{\op{gen}} (\boldsymbol{\theta}, S) =  \frac{1}{n^2}\sum_{i,j=1}^n \mathbb{E}_{\lambda} [ \ell(\boldsymbol{\theta}, \vz_{ij}^{\op{gen}} ) ]$, respectively, where $\vz_{ij}^{\op{mix}} = ({\vx}_{ij}^{\op{mix}}, {y}_{ij}^{\op{mix}})$ and $\vz_{ij}^{\op{gen}} = ({\vx}_{ij}^{\op{mix}}, {y}_{ij}^{\op{gen}})$. 
The adversarial loss with $L_2$ attack of radius $\eps \sqrt{d}$ is defined as 
$L_{n}^{\op{adv}} (\boldsymbol{\theta}, S) = \frac{1}{n} \sum_{i=1}^{n} \max _{\left\|\boldsymbol{\delta}_{i}\right\|_{2} \leq \varepsilon \sqrt{d}} \ell\left(\boldsymbol{\theta},\left(\vx_{i}+\boldsymbol{\delta}_{i}, y_{i}\right)\right)$.

\paragraph{Mathematical results}

Before stating our result, we denote the Taylor approximation of mixup loss by $\tilde{L}_n^{\op{mix}}(\boldsymbol{\theta},S)$, the expression of which is given in Lemma~\ref{Lemma: mixup loss} in \sm. Similarly, the Taylor approximation of each term in the adversarial loss is denoted by $\tilde{\ell}_{\op{adv}}(\eps \sqrt{d}, (\boldsymbol{x_i}, y_i))$, which is expressed in Lemma~\ref{Lemma:adv_loss_approx} in \sm.
Finally, the approximation of \gl{} loss, denoted by $\tilde{L}_n^{\op{gen}}(\boldsymbol{\theta},S)$, is expressed in Lemma~\ref{Lemma: GenLabel loss for 2 class} in \sm.

In the theorem below, we state the relationship between Taylor approximations of mixup loss, \gl{} loss, and adversarial loss, for the logistic regression models. In this theorem, we consider the set of model parameters
\[\Theta \coloneqq \{\boldsymbol{\theta} \in \mathbb{R}^d: 
(2y_i-1)f_{\boldsymbol{\theta}}(\vx_i)\geq 0 \text{ for all }i=1,2,\cdots,n\}\]
which contains the set of all $\boldsymbol{\theta}$ with zero training errors.

\begin{wrapfigure}{r}{0.35\textwidth}
\vspace{-5mm}
  \begin{center}
    \includegraphics[width=0.3\textwidth]{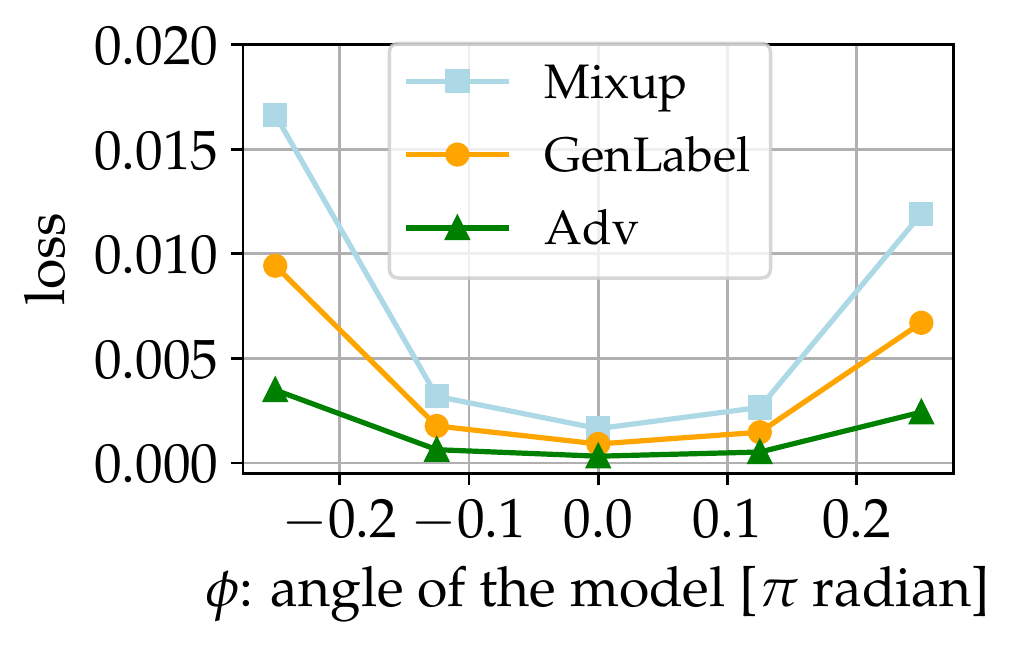}
  \end{center}
  \vspace{-3mm}
  \caption{
  Comparison between the mixup loss, \gl{} loss and adversarial loss, for the logistic regression model $\boldsymbol{\theta} = (10 \cos \phi, 10 \sin \phi)$ on a two-dimensional Gaussian dataset. This plot coincides with the result in Theorem~\ref{Theorem:robustness_logreg}.  
  }
  \vspace{-5mm}
  \label{Fig:compare_mixup_loss_GenLabel_loss}
\end{wrapfigure}

\begin{theorem}
\label{Theorem:robustness_logreg}
Consider the logistic regression setting having $f_{\boldsymbol{\theta}}(\vx)=\boldsymbol{\theta}^T \vx$. 
Suppose there exists a constant $c_x>0$ such that $\Vert \vx_i\Vert_2 \geq c_x $ for all $i\in \{1,2,\cdots,n\}$.
Then, in the asymptotic regime of large $\sigma_1$, for any $\boldsymbol{\theta} \in \Theta$, we have
\begin{equation*}
\tilde{L}_n^{\op{mix}}(\boldsymbol{\theta},S) > \tilde{L}_n^{\op{gen}}(\boldsymbol{\theta},S) \geq 
\frac{1}{n}\sum_{i=1}^n \tilde{\ell}_{\op{adv}}(\delta_{\op{gen}},(\boldsymbol{x_i},y_i)).
\end{equation*}
Here, 
$\delta_{\op{gen}} = R\cdot c_x A^i_{\sigma_1,c,\tau,d}$ with $R = \min_{i\in \{1,\cdots,n\}}|\cos (\boldsymbol{\theta},\boldsymbol{x_i})|$, where $A^i_{\sigma_1,c,\tau,d}$ is defined in (\ref{A,B}) in the \sm.
\end{theorem}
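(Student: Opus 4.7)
The plan is to reduce everything to the three Taylor approximations: $\tilde{L}_n^{\op{mix}}$ from Lemma~\ref{Lemma: mixup loss}, $\tilde{L}_n^{\op{gen}}$ from Lemma~\ref{Lemma: GenLabel loss for 2 class}, and $\tilde{\ell}_{\op{adv}}$ from Lemma~\ref{Lemma:adv_loss_approx}. Each of these can be written as $L_n^{\op{std}}(\boldsymbol{\theta},S)$ plus an explicit correction term, so the chain of inequalities becomes a purely algebraic comparison of those corrections. The first step is therefore to line up the three expansions and identify which parts of the corrections do and do not share a common form.

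For the strict inequality $\tilde{L}_n^{\op{mix}} > \tilde{L}_n^{\op{gen}}$, I would subtract the two expansions and focus on cross-class mixed pairs, since same-class pairs receive identical labels under both schemes and contribute equally. For a cross-class pair $(\vx_i,\vx_j)$ with mixing coefficient $\lambda$, linear mixup labels the mixed point as $\lambda$, whereas by Proposition~\ref{Prop:Gaussian} GenLabel assigns a logistic function of $\lambda$ which, in the large-$\sigma_1$ regime, concentrates at either $0$ or $1$. Because $\boldsymbol{\theta}\in\Theta$ already classifies the training data correctly, the GenLabel correction essentially vanishes at the concentrated endpoints while the linear-mixup correction remains strictly positive, proportional to $\mathbb{E}_\lambda[(y^{\op{mix}}_{ij}-y^{\op{gen}}_{ij})^2]$ times a positive coefficient in $\Vert\boldsymbol{\theta}\Vert_2$ and the dataset geometry. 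The variance-ratio assumption $c\in(2-\sqrt{3},2+\sqrt{3})$ and the range of $\tau$ enter here to keep this coefficient bounded away from zero and to guarantee that $\boldsymbol{\Sigma}$ remains well-conditioned in the large-$\sigma_1$ limit.

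For the lower bound by the adversarial loss, I would apply Lemma~\ref{Lemma:adv_loss_approx} to write $\tilde{\ell}_{\op{adv}}(\delta,(\vx_i,y_i))$ as $\ell(\boldsymbol{\theta},(\vx_i,y_i))$ plus a term of the form $\delta\cdot\Vert\boldsymbol{\theta}\Vert_2$ scaled by a sample-dependent factor, since the worst-case perturbation in a logistic model aligns with $\boldsymbol{\theta}$. Substituting $\delta=\delta_{\op{gen}}=R\cdot c_x\cdot A^i_{\sigma_1,c,\tau,d}$ and using the definitions $R=\min_i|\cos(\boldsymbol{\theta},\vx_i)|$ together with $\Vert\vx_i\Vert_2\geq c_x$, one obtains $|\boldsymbol{\theta}^T\vx_i|\geq R\cdot c_x\cdot\Vert\boldsymbol{\theta}\Vert_2$, so the adversarial term is controlled by the same $A^i_{\sigma_1,c,\tau,d}$-factor that appears inside the GenLabel correction from Lemma~\ref{Lemma: GenLabel loss for 2 class}. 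Matching the two term by term yields the required inequality.

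The main obstacle I expect is keeping track of the constant $A^i_{\sigma_1,c,\tau,d}$ through the asymptotic expansion: it encodes the covariance structure via $\tau$ (with the excluded singular values $-1/(d-1)$ and $-1/(d-2)$), the inter-class variance ratio $c$, and the dimension $d$, all of which feed into the GenLabel posterior computed from the two Gaussian likelihoods. Making the second inequality tight requires this constant to match the implicit regularization radius of GenLabel on a per-sample basis, which in turn forces careful control of the leading-order behavior of $y^{\op{gen}}_{ij}$ around $\lambda=1/2$ as $\sigma_1\to\infty$; the strict separation in the first inequality follows once this asymptotic expansion is pinned down, but checking that no lower-order cross terms spoil the monotonicity is where the bulk of the technical work sits.
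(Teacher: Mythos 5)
Your reduction to the three Taylor expansions is the right starting point, and your sketch of the second inequality is broadly in the spirit of the paper (which simply invokes Theorem 3.1 of \citet{zhang2021does} applied to the GenLabel expansion with radius $\delta_{\op{gen}}$). However, your argument for the strict inequality $\tilde{L}_n^{\op{mix}}>\tilde{L}_n^{\op{gen}}$ rests on two claims that are false. First, the GenLabel correction does \emph{not} ``essentially vanish'' in the large-$\sigma_1$ limit: Lemma~\ref{Lemma: GenLabel loss for 2 class} shows it has exactly the same structural form as the mixup correction in Lemma~\ref{Lemma: mixup loss} --- the same terms built from $\mathbb{E}_{\boldsymbol{r_x}\sim D_X}[\boldsymbol{r_x}-\boldsymbol{x_i}]$ and $\mathbb{E}_{\boldsymbol{r_x}\sim D_X}[(\boldsymbol{r_x}-\boldsymbol{x_i})(\boldsymbol{r_x}-\boldsymbol{x_i})^T]$ --- only with coefficients $A^i_{\sigma_1,c,\tau,d}$ and $B^i_{\sigma_1,c,\tau,d}$ replacing $\tfrac13$ and $\tfrac16$. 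If that correction vanished, the second inequality of the theorem would fail, since $\tilde{\ell}_{\op{adv}}$ strictly exceeds the standard loss. Second, $\ell(\boldsymbol{\theta},(\vx,y))=h(f_{\boldsymbol{\theta}}(\vx))-yf_{\boldsymbol{\theta}}(\vx)$ is linear in $y$, so at a fixed mixed point the difference of the two losses is $(y^{\op{gen}}_{ij}-y^{\op{mix}}_{ij})\,f_{\boldsymbol{\theta}}(\tilde{\vx}_{ij}(\lambda))$, a signed product; nothing proportional to $\mathbb{E}_\lambda[(y^{\op{mix}}_{ij}-y^{\op{gen}}_{ij})^2]$ appears, and a squared label gap carries no sign information about the loss gap.

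The mechanism actually needed, and missing from your proposal, is a coefficient-wise comparison: (i) for $\boldsymbol{\theta}\in\Theta$ the shared correction terms are non-negative (this is the observation from the proof of Theorem 3.1 of \citet{zhang2021does}; for logistic regression the third term vanishes since $\nabla^2 f_{\boldsymbol{\theta}}=0$), and (ii) as $\sigma_1\to\infty$ one computes $A^i_{\sigma_1,c,\tau,d}\to \frac{c^2+1}{2(c+1)^2}$ and $B^i_{\sigma_1,c,\tau,d}\to\frac{c^2-c+1}{3(c+1)^2}$, and the hypothesis $2-\sqrt3<c<2+\sqrt3$ is exactly the condition $c^2-4c+1<0$ under which these limits are strictly below $\tfrac13$ and $\tfrac16$. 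You assign this hypothesis the wrong role: it has nothing to do with the conditioning of $\boldsymbol{\Sigma}$ (that is what the restrictions on $\tau$ are for); it is solely what makes the GenLabel coefficients smaller than the mixup ones. Combining (i) and (ii) gives $\tilde{L}_n^{\op{mix}}-\tilde{L}_n^{\op{gen}}=(\tfrac13-A^i_{\sigma_1,c,\tau,d})(\cdot)+(\tfrac16-B^i_{\sigma_1,c,\tau,d})(\cdot)>0$ with both factors non-negative, and the lower bound by $\frac{1}{n}\sum_i\tilde{\ell}_{\op{adv}}(\delta_{\op{gen}},(\boldsymbol{x_i},y_i))$ then follows by running the adversarial-loss comparison of \citet{zhang2021does} on the GenLabel expansion. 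Without the non-negativity step and the explicit bounds on $A^i_{\sigma_1,c,\tau,d}$ and $B^i_{\sigma_1,c,\tau,d}$, your argument does not close.
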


\bmt{
This shows that the adversarial loss of a model is upper bounded by the \gl{} loss of the model, i.e., if we find a model with \gl{} loss smaller than or equal to a threshold $l_{\op{th}}$, then the adversarial loss of this model is at most $l_{\op{th}}$. 
}
Moreover, compared with the mixup loss, the \gl{} loss is a tighter upper bound on the adversarial loss. This implies that \gl{} improves the robustness of mixup. 

Fig.~\ref{Fig:compare_mixup_loss_GenLabel_loss} compares the second-order Taylor approximation of mixup loss  $\tilde{L}_n^{\op{mix}}(\boldsymbol{\theta},S)$, \gl{} loss $\tilde{L}_n^{\op{gen}}(\boldsymbol{\theta},S)$, and adversarial loss 
$\frac{1}{n}\sum_{i=1}^n \tilde{\ell}_{\op{adv}}(\delta_{\op{gen}},(\vx_i,y_i))$,
for the logistic regression model $\boldsymbol{\theta} = (10 \cos \phi, 10 \sin \phi)$ parameterized by the angle $\phi$. Here, we use the dataset $S = \{ (\vx_i^{+}, +1), (\vx_i^{-}, -1) \}_{i=1}^{20}$,
where each sample at class $+1$ and $-1$ follows the distribution of $\vx_i^{+} \sim \mathcal{N}([+1, 0], \frac{1}{100}\mI_2 )$ and $\vx_i^{-} \sim \mathcal{N}([-1, 0], \frac{1}{100}\mI_2)$, respectively.
One can confirm that the model $\boldsymbol{\theta} = (10, 0)$, which corresponds to $\phi=0$, has the smallest mixup/\gl{}/adversarial loss. In every angle $\phi \in [-\frac{\pi}{4}, \frac{\pi}{4}]$, the \gl{} loss is strictly smaller than mixup loss, which coincides with the result of Theorem~\ref{Theorem:robustness_logreg}.
We can also extend the result of Theorem~\ref{Theorem:robustness_logreg} to fully-connected ReLU networks as below.

\begin{theorem}
\label{Theorem:robustness_relu}
Consider fully-connected ReLU network $f_{\boldsymbol{\theta}}(\vx)=\boldsymbol{\beta}^T \sigma(\mW_{N-1}\cdots (\mW_2 \sigma(\mW_1 \vx)))$ where $\sigma$ is the activation function and the parameters contain matrices $\mW_i$ and a vector $\boldsymbol{\beta}$. Suppose there exists a constant $c_x>0$ such that $\Vert \vx_i \Vert_2 \geq c_x $ for all $i\in \{1,2,\cdots,n\}$. Then, in the asymptotic regime of large $\sigma_1$, for any $\boldsymbol{\theta} \in \Theta$, we have
\begin{equation*}
\tilde{L}_n^{\op{mix}}(\boldsymbol{\theta},S) > \tilde{L}_n^{\op{gen}}(\boldsymbol{\theta},S) \geq 
\frac{1}{n}\sum_{i=1}^n \tilde{\ell}_{\op{adv}}(\delta_{\op{gen}},(\vx_i,y_i)).
\end{equation*}
Here, 
$\tilde{\ell}_{\op{adv}}(\delta,(\vx, y))=\ell(\boldsymbol{\theta},(\vx, y))+\delta\left|g \left(f_{\boldsymbol{\theta}}(\vx)\right)-y\right|\left\|\nabla f_{\boldsymbol{\theta}}(\vx)\right\|_{2}+\frac{\delta^{2} d}{2}\left|h^{\prime \prime}\left(f_{\boldsymbol{\theta}}(\vx)\right)\right|\left\|\nabla f_{\boldsymbol{\theta}}(\vx)\right\|_{2}^{2}$
is the Taylor approximation of adversarial loss for ReLU network, and we have
$\delta_{\op{gen}} = Rc_x A_{\sigma_1,c,\tau,d}^i$ and $R=\min_{i\in \{1,\cdots,n\}}|\cos (\nabla f_{\boldsymbol{\theta}} (\vx_i), \vx_i)|$, where $A^i_{\sigma_1,c,\tau,d}$ is in (\ref{A,B}) in the Appendix and $g(x) = e^x / (1+e^x)$.
\end{theorem}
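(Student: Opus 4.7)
The plan is to transfer the proof of Theorem~\ref{Theorem:robustness_logreg} to the fully-connected ReLU setting by exploiting the fact that any ReLU network is piecewise linear: within each linear region, $f_{\boldsymbol{\theta}}$ acts as an affine map with gradient $\nabla f_{\boldsymbol{\theta}}(\vx)$ and vanishing Hessian. This means that to second order, a ReLU network around a fixed input $\vx_i$ is indistinguishable from a logistic-regression model whose ``effective weight vector'' is $\nabla f_{\boldsymbol{\theta}}(\vx_i)$. The entire apparatus of Theorem~\ref{Theorem:robustness_logreg}---the Taylor approximations of the three losses and their pairwise comparisons---therefore lifts by the per-sample substitution $\boldsymbol{\theta} \leftarrow \nabla f_{\boldsymbol{\theta}}(\vx_i)$, which is precisely why the constant $R$ is redefined using $\cos(\nabla f_{\boldsymbol{\theta}}(\vx_i), \vx_i)$ rather than $\cos(\boldsymbol{\theta}, \vx_i)$.

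Concretely, I would first re-derive the ReLU analogs of Lemma~\ref{Lemma: mixup loss}, Lemma~\ref{Lemma:adv_loss_approx}, and Lemma~\ref{Lemma: GenLabel loss for 2 class}. Expanding $\ell(\boldsymbol{\theta},(\vx_i+\boldsymbol{\delta},y_i)) = h(f_{\boldsymbol{\theta}}(\vx_i+\boldsymbol{\delta})) - y_i f_{\boldsymbol{\theta}}(\vx_i+\boldsymbol{\delta})$ to second order in $\boldsymbol{\delta}$ and using $\nabla^2 f_{\boldsymbol{\theta}} \equiv \vzero$ almost everywhere, the chain rule gives
\[
\ell(\boldsymbol{\theta},(\vx_i+\boldsymbol{\delta},y_i)) \approx \ell(\boldsymbol{\theta},(\vx_i,y_i)) + (g(f_{\boldsymbol{\theta}}(\vx_i)) - y_i)\,\nabla f_{\boldsymbol{\theta}}(\vx_i)^T\boldsymbol{\delta} + \tfrac{1}{2}\, h''(f_{\boldsymbol{\theta}}(\vx_i))\,(\nabla f_{\boldsymbol{\theta}}(\vx_i)^T\boldsymbol{\delta})^2.
\]
Maximizing the right-hand side over $\|\boldsymbol{\delta}\|_2 \leq \varepsilon\sqrt{d}$ (the worst-case $\boldsymbol{\delta}$ aligns with $\pm\nabla f_{\boldsymbol{\theta}}(\vx_i)$) yields $\tilde{\ell}_{\op{adv}}$ in the stated form. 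Substituting $\boldsymbol{\delta}_{ij}^{\op{mix}} = (1-\lambda)(\vx_j - \vx_i)$ into the same expansion and then averaging over $\lambda \sim \op{Unif}[0,1]$ and index pairs $(i,j)$ produces ReLU versions of $\tilde{L}_n^{\op{mix}}$ and $\tilde{L}_n^{\op{gen}}$ whose $(i,j)$-contributions differ from the logistic-regression formulas only by the substitutions $\boldsymbol{\theta}^T \vx_i \to f_{\boldsymbol{\theta}}(\vx_i)$ and $\boldsymbol{\theta} \to \nabla f_{\boldsymbol{\theta}}(\vx_i)$.

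Next, the two inequalities follow by the same algebra as in Theorem~\ref{Theorem:robustness_logreg}. The strict bound $\tilde{L}_n^{\op{mix}} > \tilde{L}_n^{\op{gen}}$ reduces to a pointwise comparison between the mixup label $y_{ij}^{\op{mix}}$ and the \gl{} label $y_{ij}^{\op{gen}}$ inside the quadratic coefficient; this comparison depends only on the Gaussian class-conditional structure (with large $\sigma_1$) and not on the hypothesis class, so the argument goes through verbatim. For the second inequality, I would match quadratic coefficients: the \gl{} expansion contributes a term of the form $(A_{\sigma_1,c,\tau,d}^i)^2\, (\nabla f_{\boldsymbol{\theta}}(\vx_i)^T \vx_i)^2\, |h''(f_{\boldsymbol{\theta}}(\vx_i))|$, while the adversarial expansion produces $\delta_{\op{gen}}^2 d\, \|\nabla f_{\boldsymbol{\theta}}(\vx_i)\|_2^2\, |h''(f_{\boldsymbol{\theta}}(\vx_i))|$. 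Writing $|\nabla f_{\boldsymbol{\theta}}(\vx_i)^T \vx_i| = |\cos(\nabla f_{\boldsymbol{\theta}}(\vx_i),\vx_i)|\,\|\nabla f_{\boldsymbol{\theta}}(\vx_i)\|_2\,\|\vx_i\|_2 \geq R c_x \|\nabla f_{\boldsymbol{\theta}}(\vx_i)\|_2$ identifies $\delta_{\op{gen}} = R c_x A^i_{\sigma_1,c,\tau,d}$ as claimed. The restriction to $\boldsymbol{\theta} \in \Theta$ ensures the sign of $(g(f_{\boldsymbol{\theta}}(\vx_i)) - y_i)$ aligns with the linear mixup contribution, so that the first-order terms combine in the correct direction.

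The main obstacle is that a ReLU network is only piecewise affine: along the mixing segment $\lambda\vx_i + (1-\lambda)\vx_j$ the network may cross region boundaries, so the linearization at $\vx_i$ is not globally valid over $\lambda \in [0,1]$. I would address this by expanding each $(i,j,\lambda)$-contribution pointwise around its own endpoint---so that only infinitesimal perturbations of each $\vx_i$ enter---and absorbing region-crossing discrepancies into the higher-order remainder that $\tilde{L}_n$ already discards; the large-$\sigma_1$ regime assumed in the theorem further ensures each training point sits well inside its activation region with high probability, making this absorption tight. A secondary subtlety is that $\nabla f_{\boldsymbol{\theta}}(\vx_i)$ now varies across samples (unlike $\boldsymbol{\theta}$), which is exactly why a uniform constant $R = \min_i |\cos(\nabla f_{\boldsymbol{\theta}}(\vx_i),\vx_i)|$ must be used rather than a single cosine value.
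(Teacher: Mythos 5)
Your proposal is correct and follows essentially the same route as the paper: the first inequality comes from comparing the Taylor-expansion coefficients of the \gl{} loss ($A^i_{\sigma_1,c,\tau,d}<\tfrac13$, $B^i_{\sigma_1,c,\tau,d}<\tfrac16$ from Lemma~\ref{Lemma: GenLabel loss for 2 class}) against the mixup coefficients $\tfrac13,\tfrac16$, with the Hessian terms vanishing by piecewise linearity and the sign conditions supplied by $\boldsymbol{\theta}\in\Theta$; the second inequality is exactly the coefficient-matching argument with $|\nabla f_{\boldsymbol{\theta}}(\vx_i)^T\vx_i|\geq Rc_x\|\nabla f_{\boldsymbol{\theta}}(\vx_i)\|_2$ that the paper imports wholesale by citing Theorem~3.3 of \citet{zhang2021does}. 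The only difference is that you re-derive that cited result rather than invoking it, which makes the argument more self-contained but not a different proof.
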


\vspace{-2mm}
\section{Experimental results}\label{sec:exp}
\vspace{-2mm}

Now we investigate the effect of \gl{} on various real datasets. 
To be specific, we provide empirical results showing that \gl{} improves the generalization performance and adversarial robustness of mixup.
Among the datasets in OpenML~
\citep{OpenML2013}, we first choose 160 low-dimensional datasets 
having no more than 20 features and less than 5000 data points. Among 160 datasets, we finally choose 109 datasets which fit well on the suggested generative model (either Gaussian mixture or kernel density estimator); we used a dataset if the generative model has more than 95\% of train accuracy.
Recall that we allow the combination of the mixup labeling $\vy^{\op{mix}}$ and the suggested labeling $\vy^{\op{gen}}$, i.e., re-label the mixed point by $\gamma \vy^{\op{gen}} + (1-\gamma) \vy^{\op{mix}}$ for $\gamma \in [0,1]$. Here we choose the optimal mixing ratio $\gamma$ using cross-validations. 
For measuring the adversarial robustness, we test under decision-based black-box attack~\citep{brendel2017decision}.
We consider two types of network models: logistic regression and fully-connected (FC) ReLU networks with 2 hidden layers. All the results in the main manuscript are for logistic regression model, while we have similar pattern for the FC network with 2 hidden layers, the results of which are provided in \sm.
All algorithms are implemented in PyTorch~\citep{pytorch}, and the experimental details including network architectures, cross-validation setting, hyperparameters, and attack radius are summarized in Section~\ref{sec:exp_setup} in \sm.

\paragraph{Suggested schemes}

For the datasets in OpenML, we tested \gl{} on mixup. We considered two types of generative models: Gaussian mixture (GM) and kernel density estimator (KDE). We denote each scheme by mixup+\gl{} (GM) and mixup+\gl{} (KDE), respectively. We also considered choosing which generative model to use, based on the cross-validation (CV): this scheme is denoted by mixup+\gl{} (CV).

We considered two domains for applying \gl{}: one is applying it on the input feature space following Algorithm~\ref{Algo:GenLabel_input}, and the other is applying it on the hidden feature space as in Algorithm~\ref{Algo:GenLabel_latent}.
All the experimental results in the main manuscript are obtained when we apply \gl{} on the input feature space, and we use the logistic regression model. The results in Appendix use \gl{} on the hidden feature space at the penultimate layer of FC ReLU networks with 2 hidden layers.
For image datasets (MNIST, CIFAR-10, CIFAR-100 and TinyImageNet), we tested \gl{} on mixup and manifold-mixup, the results of which are given in the Appendix.

\paragraph{Compared schemes}

We first compared our scheme with vanilla training, mixup, and adamixup~\citep{guo2019mixup}. For OpenML datasets, we added the comparison with other schemes that are closely related with \gl{}. First, we tested the performance of generative classifier using Gaussian mixture (GM) as the generative model. This is denoted by generative classifier (GM). Note that mixup+\gl{} is making use of both data augmentation (mixup) and generative models, while the generative classifier is making use of the generative models only. Comparing mixup+\gl{} (GM) with generative classifier (GM) shows whether combining mixup and generative model is beneficial than when we rely solely on the generative model.
Second, we tested another method we come up with, dubbed as \emph{excluding MI}, which is excluding the mixup points suffering from manifold intrusion (MI). Here, we call a mixed point $\vx^{\op{mix}} = \lambda \vx_1 + (1-\lambda) \vx_2$ is suffering from manifold intrusion if the label of the nearest neighbor $\vx^{\op{nn}} = \argmin_{\vx \in X} d(\vx, \vx^{\op{mix}})$ is different from $\vy_1$ and $\vy_2$.
For image datasets, we tested the performances of mixup and manifold-mixup and compared them with the performances of mixup+\gl{} and ``manifold-mixup''+\gl{}.

\begin{figure}[t]
    \vspace{-2mm}
	\centering
	\subfloat[][\centering{ $\alpha=1.0$}]{\includegraphics[height=30mm ]{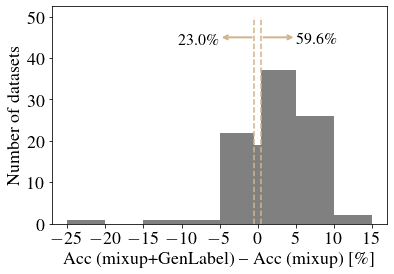}\label{Fig:openml_histogram_cln_alpha1}} 
	\quad \quad \quad
 	\subfloat[][\centering{$\alpha=2.0$
 	} ]{\includegraphics[height=30mm ]{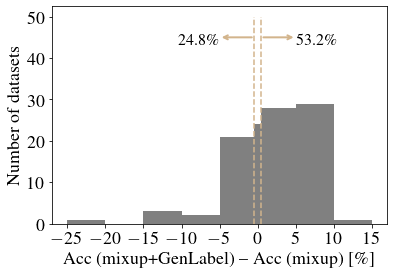}\label{Fig:openml_histogram_cln_alpha2}}
	\vspace{-2mm}
	\caption
	{
	Histogram of 
	the generalization performance increase (\%) of mixup+\gl{} (CV) compared with mixup for 109 OpenML datasets, for a given $\alpha$ used for sampling the mixing coefficient $\lambda \sim \op{Beta}(\alpha, \alpha)$ of mixup.
	When $\alpha=1.0$, mixup+\gl{} outperforms mixup by more than 0.5\% accuracy for 59.6\% of tested datasets, while mixup outperforms mixup+\gl{} by more than 0.5\% accuracy for 23.0\% of tested datasets.
    For both $\alpha$ values, a large portion of tested datasets enjoy the accuracy improvement by combining \gl{} with mixup. 
	}
	\label{Fig:openml_histogram_cln}
	\vspace{-3mm}
\end{figure}

\subsection{Results on generalization performance}

We compare the clean accuracy (generalization performance) of mixup+\gl{} with various baselines. First, we show the statistics of comparison for 109 tested OpenML datasets. Afterwards, we show the performance for some selected OpenML datasets when mixup+\gl{} performs well.
Note that the experimental results in this section is obtained for logistic regression models, and we added similar results for FC ReLU networks with 2 hidden layers in Section~\ref{sec:GenLabel_FC_ReLU} in the Appendix.

\paragraph{Statistics of mixup+\gl{} vs baselines} 

\begin{wraptable}{r}{8.0cm}
	\caption{Statistics of comparing the accuracy of \gl{} (GM) and generative classifier (GM), for 109 OpenML datasets. 
		\gl{} (GM) has a higher accuracy than generative classifier (GM) for 62.4\% of tested datasets, and the accuracy of \gl{} (GM) is lower than that of generative classifier (GM) for 33.9\% of tested datasets.	%
	}%
	\scriptsize
	\centering
	\setlength{\tabcolsep}{3pt} %
	\renewcommand{\arraystretch}{1} %
	\vspace{-2mm}
	\begin{tabular}{l|c}%
		\midrule  
		\textbf{Mixup+GenLabel (GM) }versus  
		& \textbf{Generative Classifier (GM)}
		\\%&
		\midrule		 
		\textbf{Higher ($> 0\%$) }
		& \textbf{62.4}\%
		\\
		\textbf{On-par ($= 0\%$)}
		& 3.7\%
		\\    		
		\textbf{Lower ($< 0\%$) }
		& 33.9\%
		\\    
		\bottomrule      
	\end{tabular}
	\label{Table:OpenML_stat_gm}
	\vspace{-5mm}	
\end{wraptable} 
Fig.~\ref{Fig:openml_histogram_cln} compares the generalization performances of mixup+\gl{} (CV) and mixup on 109 OpenML datasets. The x-axis represents 
$(A_{\op{GenLabel}}^{\op{cln}} - A_{\op{mixup}}^{\op{cln}})$, the increase of clean accuracy $A^{\op{cln}}$ with the aid of \gl{}, and the y-axis represents the number of datasets having the accuracy gain at the x-axis.
Recall that the mixing coefficient is sampled as $\lambda \sim \op{Beta}(\alpha, \alpha)$ for some $\alpha > 0$. We plotted the histogram for two popular settings, $\alpha=1$ and $\alpha= 2$.
It turns out that for both settings, the clean accuracy of \gl{} is greater than
that of mixup for more than 50$\%$ of the tested datasets.

\begin{table}[t]
\centering
	\caption{
	The statistics of the comparison of 	generalization performances of \gl{} and baselines, for 109 OpenML datasets. Each column represents the statistics of the accuracy of each scheme, compared with \gl{}.
	For example, \gl{} has more than 0.5\% higher accuracy than mixup for 59.6\% of tested datasets, \gl{} performs similar with mixup for 17.4\% of tested datasets, and the accuracy of \gl{} is more than 0.5\% lower than that of mixup for 23.0\% of tested datasets.
	This shows that for each baseline, 
	the number of datasets where \gl{} outperforms the baseline is larger than the number of datasets where \gl{} is worse than the baseline.
	Here, we used $\alpha=1$ for generating mixup samples.
	}
	\scriptsize
	\setlength{\tabcolsep}{3pt} %
    \renewcommand{\arraystretch}{1} %
    \begin{tabular}{l|c|c|c|c|c}%
        \midrule  
         \textbf{Mixup+GenLabel (CV)} versus  
         & \textbf{Vanilla}
        & \textbf{Adamixup}
        & \textbf{Mixup}
        & \textbf{Mixup + exclude MI}
        & \textbf{Generative Classifier (GM)}
         \\%&
		\midrule		 
        \textbf{Higher ($> 0.5\%)$ }
        & \textbf{37.6}\%
        & \textbf{46.8}\%
        & \textbf{59.6}\%

        & \textbf{56.9}\%
        & \textbf{44.0}\%
         \\
        \textbf{On-par (within $0.5\%$)}
        & 31.2\%
        & 25.7\%
        & 17.4\%
        & 16.5\%
        & 23.9\%
         \\    		
        \textbf{Lower ($< -0.5\%$) }
        & 31.2\%
        & 27.5\%
        & 23.0\%
        & 26.6\%
        & 32.1\%
         \\    
		\bottomrule      
    \end{tabular}
    \quad \quad 
\label{Table:OpenML_stat}
\end{table}

Table~\ref{Table:OpenML_stat} compares 
generalization performances of \gl{} and baselines, for 109 OpenML datasets. Each column corresponds to each baseline, and each cell in the table represents the number of datasets satisfying the condition.  
For example, \gl{} has more than 0.5\% accuracy than adamixup for 46.8\% of tested datasets, while adamixup has more than 0.5\% accuracy than \gl{} for 27.5\% of tested datasets. 
One can confirm that for each tested baseline (vanilla training, adamixup, mixup, and mixup + exclude MI, generative classifier), the number of datasets where mixup+\gl{} (CV) outperforms the baseline is larger than the number of datasets where the baseline outperforms mixup+\gl{} (CV). 
Table~\ref{Table:OpenML_stat_gm} compares the performances of mixup+\gl{} (GM) and generative classifier (GM). It turns out that using the suggested \gl{} has gain for more than 62\% of the tested datasets, showing that it is worth making use of both mixup and generative models at the same time.

\paragraph{Performance of \gl{} on selected OpenML datasets} 

In Table~\ref{Table:OpenML}, we take a closer look at some selected datasets when \gl{} performs better than mixup.
For example, mixup+\gl{} enjoys over $7 \%$ accuracy gain compared with mixup, for dataset IDs 61, 830, and 938.
One can confirm that mixup+\gl{} performs the best in 13 out of 16 selected datasets. Note that the best generative model (GM or KDE) for mixup+\gl{} varies depending on the dataset, which coincides with our intuition. Moreover, mixup+\gl{} (CV), which chooses the generative model based on the cross-validation successfully finds the appropriate generative model and outperforms other baselines in a large number of datasets in Table~\ref{Table:OpenML}. 

Interestingly, for datasets having ID number 61, 721, 817, 830, 869, 885, 907, 915, 925, 938 and 40981, mixup has a worse performance than vanilla training, but mixup combined with \gl{} overcomes the limitation of mixup and achieves accuracy higher than vanilla training. For datasets having ID number 36 and 778, although mixup+\gl{} has worse accuracy than vanilla training, it is still true that mixup combined with \gl{} far outperforms mixup.
This shows that \gl{} fixes the issue of mixup and guides towards more accurate classification, for the selected datasets. 
Even in other datasets (IDs 855 and 1006) where mixup has a better performance than vanilla training, mixup+\gl{} with an appropriate choice of generative model outperforms both mixup and vanilla training.
In Table~\ref{Table:OpenML}, we also compared mixup+\gl{} with adamixup, a method for avoiding manifold intrusion. It turns out that for 13 out of 16 selected datasets, mixup+\gl{} outperforms adamixup.

\begin{table}[t]
\centering
	\caption{Generalization performances (clean accuracy in \%) on selected OpenML datasets. The performance comparisons for all tested datasets are in Table~\ref{Table:OpenML_stat}.
	}
	\scriptsize
	\setlength{\tabcolsep}{3pt} %
    \renewcommand{\arraystretch}{1} %
	\begin{tabular}{l|c|c|c|c|c|c|c|c}
        \midrule  
         Methods $\backslash$ OpenML Dataset ID 
        & 36
        & 61
        & 721
        & 778
        & 817
        & 830
        & 855
        & 869 
         \\%&
		\midrule		 
        \textbf{Vanilla}
        & \textbf{93.82}$\pm$0.06
        & 95.56$\pm$0.00
        & 79.67$\pm$0.67
        & \textbf{98.42}$\pm$0.53
        & 61.33$\pm$2.67
        & 77.60$\pm$0.53
        & 63.33$\pm$4.50
        & 74.40$\pm$2.05
         \\
        \textbf{AdaMixup}
        & 89.75$\pm$0.29
        & 92.44$\pm$1.09
        & 80.33$\pm$0.67
        & 96.32$\pm$0.53
        & 60.00$\pm$0.00
        & 78.40$\pm$1.55
        & 66.67$\pm$0.42
        & 69.60$\pm$3.54
         \\    		
        \textbf{Mixup }
        & 88.98$\pm$0.78
        & 88.00$\pm$1.09
        & 79.33$\pm$0.82
        & 95.00$\pm$1.53
        & 60.00$\pm$0.00
        & 76.27$\pm$1.31
        & 66.00$\pm$1.74
        & 71.73$\pm$3.79
         \\    
         \grayl{\textbf{Mixup  + GenLabel (GM)}}
        & \grayc{92.21$\pm$0.58}
        & \grayc{96.00$\pm$1.67}
        & \grayc{\textbf{81.00}$\pm$1.33}
        & \grayc{97.11$\pm$0.98}
        & \grayc{64.00$\pm$5.33}
        & \grayc{\textbf{86.13}$\pm$1.36}
        & \grayc{66.40$\pm$2.88}
        & \grayc{\textbf{76.27}$\pm$3.17}
         \\  
        \grayl{\textbf{Mixup  + GenLabel (KDE)}}
        & \grayc{92.64$\pm$0.26}
        & \grayc{96.00$\pm$0.89}
        & \grayc{79.67$\pm$1.25}
        & \grayc{96.05$\pm$0.83}
        & \grayc{\textbf{66.67}$\pm$5.96}
        & \grayc{77.33$\pm$4.84}
        & \grayc{\textbf{67.60}$\pm$0.90}
        & \grayc{74.53$\pm$3.99}
         \\ 
        \grayl{\textbf{Mixup  + GenLabel (CV)}}
        & \grayc{92.55$\pm$0.15}
        & \grayc{\textbf{96.44}$\pm$1.09}
        & \grayc{80.33$\pm$1.63}
        & \grayc{96.05$\pm$1.86}
        & \grayc{65.33$\pm$4.99}
        & \grayc{84.53$\pm$1.81}
        & \grayc{67.33$\pm$2.76}
        & \grayc{73.87$\pm$2.13}
         \\
		\bottomrule      
    \end{tabular} \begin{tabular}{l|c|c|c|c|c|c|c|c}
    \midrule  
    Methods $\backslash$ OpenML Dataset ID 
    & 885
    & 907
    & 915
    & 925
    & 938
    & 1006
    & 40710
    & 40981
    \\
    \midrule		 
   \textbf{Vanilla}
    & 96.50$\pm$1.22
   & 45.67$\pm$3.43
   & 48.00$\pm$0.84
   & 93.81$\pm$0.00
   & 93.85$\pm$7.54
   & 78.67$\pm$2.67
   & 68.13$\pm$0.00
   & 74.98$\pm$0.19
   \\
   \textbf{AdaMixup}
    & \textbf{97.50}$\pm$0.00
   & 46.50$\pm$1.11
   & 48.00$\pm$0.52
   & \textbf{94.43}$\pm$1.05
   & \textbf{98.46}$\pm$3.08
   & 78.67$\pm$2.67
   & 67.91$\pm$0.44
   & 73.14$\pm$3.48
   \\    		
   \textbf{Mixup }
    & 94.50$\pm$1.00   
   & 44.67$\pm$3.14
   & 46.11$\pm$3.08
   & 92.99$\pm$1.37
   & 90.77$\pm$5.76
   & 80.00$\pm$0.00
   & 68.13$\pm$ 0.70
   & 74.78 $\pm$0.56
   \\    
 \grayl{\textbf{Mixup  + GenLabel (GM)}}
        & \grayc{97.00$\pm$1.00}
  & \grayc{47.83$\pm$3.56}
  & \grayc{46.74$\pm$7.37}
  & \grayc{93.61$\pm$0.77}
  & \grayc{\textbf{98.46}$\pm$3.08}
  & \grayc{81.33$\pm$1.09}
  & \grayc{69.67$\pm$0.54}
  & \grayc{75.07$\pm$1.08}
  \\  
  \grayl{\textbf{Mixup  + GenLabel (KDE)}}
        & \grayc{96.50$\pm$1.22}
  & \grayc{45.67$\pm$4.39}
  & \grayc{46.11$\pm$7.57}
      & \grayc{93.61$\pm$0.77}
  & \grayc{\textbf{98.46}$\pm$3.08}
  & \grayc{80.00$\pm$3.44}
  & \grayc{69.45$\pm$0.44}
  & \grayc{\textbf{76.43}$\pm$0.36}
  \\ 
  \grayl{\textbf{Mixup  + GenLabel (CV)}}
        & \grayc{97.00$\pm$1.00}
  & \grayc{\textbf{48.83}$\pm$4.46}
  & \grayc{\textbf{48.42}$\pm$3.82}
      & \grayc{94.23$\pm$0.82}
  & \grayc{95.38$\pm$6.15}
  & \grayc{\textbf{81.78}$\pm$0.89}
  & \grayc{\textbf{70.11}$\pm$0.82}
  & \grayc{76.23$\pm$0.71}
  \\         
		\bottomrule      
    \end{tabular}    
	\label{Table:OpenML}
\end{table}

\begin{table}[t]
\centering
	\caption{Ablation study on \gl{}: comparing clean accuracy (\%) on selected OpenML datasets. The performance comparisons for all tested datasets are in Table~\ref{Table:OpenML_stat}.
	}
	\scriptsize
	\setlength{\tabcolsep}{3pt} %
    \renewcommand{\arraystretch}{1} %
    	\begin{tabular}{l|c|c|c|c|c|c|c|c}
        \midrule  
         Methods $\backslash$ OpenML Dataset ID 
         & 36
        & 61
        & 721
        & 778
        & 817
        & 830
        & 855
        & 869
         \\%&
		\midrule
		 \textbf{Generative classifier (GM)}
		 & 89.75$\pm$0.00
        & 95.56$\pm$0.00
        & 78.33$\pm$0.00
        & 89.47$\pm$0.00
        & 60.00$\pm$0.00
        & 78.67$\pm$0.00
        & 65.33$\pm$0.00
        & 73.33$\pm$0.00
        \\                              
        \textbf{Mixup }
        & 88.98$\pm$0.78
        & 88.00$\pm$1.09
        & 79.33$\pm$0.82
        & 95.00$\pm$1.53
        & 60.00$\pm$0.00
        & 76.27$\pm$1.31
        & 66.00$\pm$1.74
        & 71.73$\pm$3.79
         \\    
    \textbf{Mixup + Excluding MI}
        & 89.21$\pm$0.52
        & 93.33$\pm$0.00
        & 79.67$\pm$0.67
        & 95.00$\pm$1.53
        & 61.33$\pm$2.67
        & 78.13$\pm$1.36
        & 66.40$\pm$1.37
        & 72.00$\pm$3.55
        \\                 
        \grayl{\textbf{Mixup  + GenLabel (GM)}}
        & \grayc{92.21$\pm$0.58}
        & \grayc{96.00$\pm$1.67}
        & \grayc{\textbf{81.00}$\pm$1.33}
        & \grayc{\textbf{97.11}$\pm$0.98}
        & \grayc{64.00$\pm$5.33}
        & \grayc{\textbf{86.13}$\pm$1.36}
        & \grayc{66.40$\pm$2.88}
        & \grayc{\textbf{76.27}$\pm$3.17}
         \\  
        \grayl{\textbf{Mixup  + GenLabel (KDE)}}
        & \grayc{\textbf{92.64}$\pm$0.26}
        & \grayc{96.00$\pm$0.89}
        & \grayc{79.67$\pm$1.25}
        & \grayc{96.05$\pm$0.83}
        & \grayc{\textbf{66.67}$\pm$5.96}
        & \grayc{77.33$\pm$4.84}
        & \grayc{\textbf{67.60}$\pm$0.90}
        & \grayc{74.53$\pm$3.99}
         \\ 
        \grayl{\textbf{Mixup  + GenLabel (CV)}}
        & \grayc{92.55$\pm$0.15}
        & \grayc{\textbf{96.44}$\pm$1.09}
        & \grayc{80.33$\pm$1.63}
        & \grayc{96.05$\pm$1.86}
        & \grayc{65.33$\pm$4.99}
        & \grayc{84.53$\pm$1.81}
        & \grayc{67.33$\pm$2.76}
        & \grayc{73.87$\pm$2.13}
         \\         
		\bottomrule      
    \end{tabular} \begin{tabular}{l|c|c|c|c|c|c|c|c}
    \midrule  
    Methods $\backslash$ OpenML Dataset ID 
    & 885
    & 907
    & 915
        & 925
    & 938
    & 1006
    & 40710
    & 40981
    \\
    \midrule	
    \textbf{Generative classifier (GM)}
        & 95.00$\pm$0.00
        & 47.50$\pm$0.00
        & 42.11$\pm$0.00
        & 90.72$\pm$0.00
        & 92.31$\pm$0.00
        & 77.78$\pm$0.00
        & 69.23$\pm$0.00
        & 73.91$\pm$0.00
        \\       
  \textbf{Mixup }
    & 94.50$\pm$1.00  
  & 44.67$\pm$3.14
  & 46.11$\pm$3.08
  & 92.99$\pm$1.37
  & 90.77$\pm$5.76
  & 80.00$\pm$0.00
  & 68.13$\pm$0.70
  & 74.78$\pm$0.56
  \\    
    \textbf{Mixup + Excluding MI}
        & 94.50$\pm$1.00
        & 44.00$\pm$4.06
        & 47.79$\pm$3.37
        & 93.20$\pm$1.40
        & 89.23$\pm$6.15
        & 77.33$\pm$4.31
        & 68.57$\pm$1.12
        & 74.69$\pm$0.84
        \\        
  \grayl{\textbf{Mixup  + GenLabel (GM)}}
        & \grayc{\textbf{97.00}$\pm$1.00}
  & \grayc{47.83$\pm$3.56}
  & \grayc{46.74$\pm$7.37}
  & \grayc{93.61$\pm$0.77}
  & \grayc{\textbf{98.46}$\pm$3.08}
  & \grayc{81.33$\pm$1.09}
  & \grayc{69.67$\pm$0.54}
  & \grayc{75.07$\pm$1.08}
  \\  
  \grayl{\textbf{Mixup  + GenLabel (KDE)}}
        & \grayc{96.50$\pm$1.22}
  & \grayc{45.67$\pm$4.39}
  & \grayc{46.11$\pm$7.57}
      & \grayc{93.61$\pm$0.77}
  & \grayc{\textbf{98.46}$\pm$3.08}
  & \grayc{80.00$\pm$3.44}
  & \grayc{69.45$\pm$0.44}
  & \grayc{\textbf{76.43}$\pm$0.36}
  \\ 
  \grayl{\textbf{Mixup  + GenLabel (CV)}}
        & \grayc{\textbf{97.00}$\pm$1.00}
  & \grayc{\textbf{48.83}$\pm$4.46}
  & \grayc{\textbf{48.42}$\pm$3.82}
      & \grayc{\textbf{94.23}$\pm$0.82}
  & \grayc{95.38$\pm$6.15}
  & \grayc{\textbf{81.78}$\pm$0.89}
  & \grayc{\textbf{70.11}$\pm$0.82}
  & \grayc{76.23$\pm$0.71}
  \\         
	\bottomrule      
    \end{tabular}    
	\label{Table:OpenML_ablation}
\end{table}

Table~\ref{Table:OpenML_ablation} shows the ablation study results on mixup+\gl{}. First, instead of using both mixup and generative models as in mixup+\gl{}, one can consider using generative model only, which is generative classifier. For datasets in Table~\ref{Table:OpenML_ablation}, mixup + \gl{} (GM) outperforms generative classifier (GM), showing that making use of both mixup and generative models is beneficial for improving the generalization performances.
Second, we compare \gl{} with \emph{excluding MI}, an alternative to handle the manifold intrusion (MI) issue. Instead of re-labeling mixup points suffering from MI issue, this alternative excludes the mixup points having MI issue and train the model using the remaining mixup point. It turns out that \gl{} outperforms \emph{excluding MI} for all datasets in Table~\ref{Table:OpenML_ablation}.

Recall that the comparisons in this paragraph is for 16 selected datasets, and the comparisons for all 109 tested datasets are given in Table~\ref{Table:OpenML_stat}.

\begin{table}[t]
\centering
	\caption{
	Robust accuracy (\%) under FGSM attack, on OpenML datasets when \gl{} performs the best
	}
	\scriptsize
	\setlength{\tabcolsep}{3pt} %
    \renewcommand{\arraystretch}{1} %
	\begin{tabular}{l|c|c|c|c|c|c|c|c}
        \midrule  
         Methods $\backslash$ OpenML Dataset ID 
         & 3
         & 223
         & 312
         & 313
         & 346
         & 463
         & 753
         & 834
         \\%&
		\midrule		 
        \textbf{Vanilla}
        & 45.61$\pm$14.11
        & 11.12$\pm$4.33
        & 21.78$\pm$9.27
        & 12.23$\pm$2.34
        & 42.92$\pm$12.29
        & 71.60$\pm$13.57
        & 28.35$\pm$6.60
        & 14.78$\pm$3.73
         \\
        \textbf{Mixup }
        & 43.36$\pm$14.10
        & 11.16$\pm$5.22
        & 23.40$\pm$9.82
        & 14.22$\pm$3.73
        & 40.83$\pm$11.90
        & 68.80$\pm$12.04
        & 29.42$\pm$6.03
        & 15.20$\pm$3.91
         \\ 
        \grayl{\textbf{Mixup  + GenLabel (GM)}} 
        & \grayc{\textbf{51.87}$\pm$5.45}
        & \grayc{\textbf{13.99}$\pm$6.65}
        & \grayc{\textbf{36.61}$\pm$14.07}
        & \grayc{\textbf{18.39}$\pm$2.57} 
        & \grayc{\textbf{52.92}$\pm$12.79}
        & \grayc{\textbf{84.46}$\pm$2.62}
        & \grayc{\textbf{38.23}$\pm$6.29}
        & \grayc{\textbf{21.94}$\pm$3.69}
         \\  
		\bottomrule      
    \end{tabular} \begin{tabular}{l|c|c|c|c|c|c|c|c}
    \midrule  
    Methods $\backslash$ OpenML Dataset ID 
    & 952
    & 954
    & 978
    & 987
    & 988 
    & 1022 
    & 1045
    & 1059
    \\
    \midrule		 
    \textbf{Vanilla}
        & 30.85$\pm$3.12
        & 71.40$\pm$4.86
        & 28.89$\pm$12.62
        & 68.04$\pm$18.17
        & 50.35$\pm$12.41
        & 35.85$\pm$12.20
        & 57.89$\pm$10.42
        & 68.05$\pm$21.90
          \\
        \textbf{Mixup }
        & 31.38$\pm$3.12
        & 69.32$\pm$5.02
        & 34.89$\pm$11.32
        & 67.82$\pm$14.56
        & 53.54$\pm$10.91
        & 41.25$\pm$12.11
        & 59.88$\pm$14.38
        & 67.18$\pm$25.48
         \\    
        \grayl{\textbf{Mixup  + GenLabel (GM)}} 
        & \grayc{\textbf{42.19}$\pm$7.61}
        & \grayc{\textbf{85.16}$\pm$7.55}
        & \grayc{\textbf{43.74}$\pm$15.93} 
        & \grayc{\textbf{83.21}$\pm$1.77}
        & \grayc{\textbf{63.76}$\pm$9.89}
        & \grayc{\textbf{56.61}$\pm$15.00}
        & \grayc{\textbf{66.22}$\pm$16.60}
        &\grayc{ \textbf{74.64}$\pm$20.14}
        \\
		\bottomrule      
    \end{tabular}

	\label{Table:OpenML_rob}
\end{table}

\subsection{Results on adversarial robustness}

We now check the adversarial robustness of \gl{} on OpenML datasets, under the FGSM attack~\citep{goodfellow2014explaining}.
Table~\ref{Table:OpenML_rob} shows the robust accuracy of the logistic regression model, for the selected 16 OpenML datasets where mixup+\gl{} far outperforms mixup and vanilla training. One can confirm that \gl{} improves the robust accuracy of mixup by $5\sim 15 \%$. Similar to the result in Table~\ref{Table:OpenML}, mixup+\gl{} enjoys a huge gap with vanilla training, even when mixup performs worse than the vanilla training. We added similar results for FC ReLU networks with 2 hidden layers in Section~\ref{sec:GenLabel_FC_ReLU} in the Appendix.
These results coincide with the theoretical results in Section~\ref{sec:GenLabel_improves_rob}, showing that \gl{} improves the robustness of mixup in logistic regression and FC ReLU networks.

\subsection{Extension to high-dimensional image datasets}

So far we have discussed the performance of \gl{} on low-dimensional datasets in OpenML. We have also tested the generalization and adversarial robustness performances of \gl{} on high-dimensional image datasets including MNIST, CIFAR-10, CIFAR-100 and TinyImageNet-200. 
It turns out that \gl{} has a marginal gain in those datasets, for both mixup and manifold-mixup. 
The details of the result are provided in Section~\ref{sec:high_dim_image_result} in \sm.

\section{Discussions}

Here we provide additional discussion topics for the suggested \gl{}. First, we suggest ideas on how to extend \gl{} to the scenario of using generative models having implicit/approximate density. Second, we propose a method of using generative models not only for labeling mixed points, but also for mixing data points. 

\subsection{Extension to generative models with implicit/approximate density}\label{Sec:generative_model_general}

In this paper, we used \gl{} for generative models learning the explicit density, but
our method can be also applied to a broad range of generative models having implicit or approximate density. When the available generative model only provides approximated density $\tilde{p}_c(\vx)$, as in VAEs~\citep{kingma2019introduction}
, we can replace $p_c(\vx)$ by $\tilde{p}_c(\vx)$ in Algorithm~\ref{Algo:GenLabel_input} and apply our \gl{} scheme. For GANs~\citep{xia2021gan} 
which only provide implicit density, we use a proxy to the density $p_c(\vx)$ by inverting the generator~\citep{creswell2018inverting}.
To be specific,
let $\mathcal{M}_c = \{G(\vw , c) : \vw \in \mathbb{R}^d \}$ be the data manifold generated by generator $G$ for class $c$. 
Assuming the spherical Gaussian noise model used for manifold learning~\citep{hastie1989principal,chang2001unified}, we can estimate $p_c(\vx) = \int p(\vx | G(\vw, c) ) p(G(\vw, c)) d\vw \simeq \frac{1}{n} \sum_{i=1}^n \exp(-d (\vx, G(\vw_i, c))$ by choosing $n$ random samples of $\vw$. 
Then, we can simply approximate this summation with the dominant term, which is expressed as $\max_i \exp(-d (\vx, G(\vw_i, c)) = \exp( - d(\vx, \mathcal{M}_c) )$.
Thus, we replace $p_c(\vx)$ by $\exp( - d(\vx, \mathcal{M}_c) )$ in Algorithm~\ref{Algo:GenLabel_input} and apply \gl{}.

\subsection{Using generative models for both mixing and labeling}\label{sec:disc_GenMix}
In \gl{}, mixed points $\vx^{\op{mix}}$ are obtained by existing mixing strategies, \eg mixup and manifold-mixup, and generative models are used only for re-labeling the mixed points. Now, the question is, can we also use generative models not only for labeling, but also for making better mixed points?
Here we suggest a new data augmentation scheme using generative models for both mixing and labeling. 
For a target class pair $c_1$ and $c_2$, we first choose a mixing coefficient $\lambda \in [0,1]$, e.g., using a Beta distribution. Then, we find $\vx^{\op{mix}}$ satisfying 
$\frac{p_{c_1}}{p_{c_1}+p_{c_2}} = \lambda$ and label it as $\vy^{\op{mix}} = \frac{p_{c_1}}{p_{c_1}+p_{c_2}} \ve_{c_1}  + \frac{p_{c_2}}{p_{c_1}+p_{c_2}} \ve_{c_2}$, where
$p_{c_1} = p_{c_1}(\vx^{\op{mix}})$ and
$p_{c_2} = p_{c_2}(\vx^{\op{mix}})$.
In Section~\ref{sec:GenMix} in
\sm, we provide our suggested algorithm for finding mixed point $\vx^{\op{mix}}$ using generative models (Gaussian mixture models and GANs) and the experimental results of this algorithm on synthetic/real datasets.

\section{Conclusion}
In this paper, we closely examined the failure scenarios of mixup for low dimensional data, and specify two main issues of mixup: (1) the manifold intrusion of mixup reduces both margin and accuracy, and (2) even when there is no manifold intrusion, the linear labeling method of mixup harms the margin and accuracy. 
Motivated by these observations, we proposed \gl{}, a novel way of labeling the mixup points by making use of generative models.
We visualized \gl{} for toy datasets and empirically/mathematically showed that 
\gl{} solves the main issues of mixup and achieve maximum margin in various low-dimensional datasets. 
We also mathematically showed that \gl{} improves the adversarial robustness of mixup in logistic regression model and fully-connected ReLU networks. Finally, we provide empirical results on the generalization/robustness performance of \gl{} on 109 low-dimensional datasets in OpenML, showing that \gl{} improves both robustness and generalization performance of mixup with a sufficiently large gain.

\section*{Acknowledgement}

This work was supported by an American Family Insurance grant via American Family Insurance Data Science Institute at University of Wisconsin-Madison.

\bibliography{iclr2022_conference}
\bibliographystyle{iclr2022_conference}

\newpage
\appendix

\section{Additional experimental results}

\subsection{GenLabel on high dimensional image datasets}\label{sec:high_dim_image_result}

In the main manuscript, we focused on the result for low-dimensional datasets. Here, we provide our experimental results on high dimensional image datasets, including MNIST, CIFAR-10, CIFAR-100, and TinyImageNet-200.

We test GenLabel combined with existing data augmentation schemes of mixup~\citep{zhang2017mixup} and manifold-mixup~\citep{pmlr-v97-verma19a}.
We compare our schemes with mixup and manifold-mixup. 
We also compare with AdaMixup, which avoids the manifold intrusion of mixup, similar to our work. 
For measuring the adversarial robustness, we test under AutoAttack~\citep{croce2020reliable},
which is developed to overcome gradient obfuscation~\citep{athalye2018obfuscated}, containing four white/black-box attack schemes (including auto-PGD) that does not need any specification of free parameters. 
The attack radius $\epsilon$ for each dataset is specified in Section~\ref{sec:exp_setup}.

\paragraph{GenLabel variant used for image datasets}
For image datasets, we learn generative models in the latent space. To be specific, we use a variant of GenLabel, which learns the generative model (Gaussian mixture model) and the discriminative model at the same time.
The pseudocode of this variant is given in Algorithm~\ref{Algo:GenLabel_consecutive}, and below we explain the details of this algorithm.

Consider a neural network $f_{\theta} = f_{\theta}^{\op{cls}} \circ f_{\theta}^{\op{feature}}$ parameterized by $\theta$, which is composed of the feature extractor part $f_{\theta}^{\op{feature}}$ and the classification part $f_{\theta}^{\op{cls}}$. We train a Gaussian mixture (GM) model on $f_{\theta}^{\op{feature}}(\vx)$, the hidden representation of input $\vx$.
In this algorithm, we consider updating the estimated GM model parameters (mean and covariance) at each batch training. At each iteration $t$, we randomly choose $B$ batch samples $\{(\vx_i, \vy_i)\}_{i=1}^B$ from the dataset $D$.
Then, we estimate the class-conditional mean and covariance of GM model in the hidden feature space. In other words, we compute the mean
$\bm{\mu}_{c}^{(t)} = \frac{1}{\lvert S_c \rvert } \sum_{i \in S_c} f_{\theta}^{\op{feature}}(\vx_i)$ and the covariance
$\bm{\Sigma}_{c}^{(t)} = \frac{1}{\lvert S_c \rvert} \sum_{i \in S_c} (f_{\theta}^{\op{feature}}(\vx_i) - \bm{\mu}_{c}^{(t)}) (f_{\theta}^{\op{feature}}(\vx_i) - \bm{\mu}_{c}^{(t)})^T$ for each class $c \in [k]$, where $S_c = \{i : \vy_i = \ve_c \}$ is the set of samples with label $c$ within the batch. For simplicity, we approximate the covariance matrix as a multiple of identity matrix, by setting 
$\bm{\Sigma}_c^{(t)} \leftarrow \frac{1}{d}\text{trace}(\bm{\Sigma}_c^{(t)}) \mI_d$. 
Here, we consider making use of the parameters (mean and covariance) estimated in the previous batches as well, by introducing a memory ratio factor $\beta \in [0,1]$. Formally, the rule for updating mean $\bm{\mu}_{c}^{(t)}$ and covariance $\bm{\Sigma}_{c}^{(t)}$ are represented as $\bm{\mu}_{c}^{(t)} \leftarrow (1-\beta) \bm{\mu}_{c}^{(t)} + \beta \bm{\mu}_{c}^{(t-1)} $ and $\bm{\Sigma}_{c}^{(t)} \leftarrow (1-\beta) \bm{\Sigma}_{c}^{(t)} + \beta \bm{\Sigma}_{c}^{(t-1)}$. 
When $\beta = 0$, it reduces to the memoryless estimation. To avoid cluttered notation, we discard $t$ unless necessary.

In the second stage, we apply conventional mixup-based data augmentation. 
We first permute the batch data $\{(\vx_i, \vy_i)\}_{i=1}^B$ and obtain $\{(\vx_{\pi(i)}, \vy_{\pi(i)})\}_{i=1}^B$. 
Afterwards, for each $i \in [B]$, we select the data pair, $(\vx_i, \vy_i)$ and $(\vx_{\pi(i)}, \vy_{\pi(i)})$, 
and apply a mixup-based data augmentation scheme denoted by $\text{mix}(\cdot)$, to generate mixed point $\vx_i^{\text{mix}}$ labeled by $\vy_i^{\text{mix}}$. 
One can use any data augmentation as $\text{mix}(\cdot)$, \eg mixup, manifold-mixup.
For example, for vanilla mixup, we have \begin{align}\label{eqn:mixing_data}
\vx_i^{\text{mix}} = \lambda \vx_i + (1-\lambda) \vx_{\pi(i)}, \quad \vy_i^{\text{mix}} = \lambda \vy_i + (1-\lambda) \vy_{\pi(i)}
\end{align}
where $\lambda \sim \text{Beta}(\alpha, \alpha)$ for some $\alpha > 0$. 

In the third stage, we re-label this augmented data based on the estimated GM model parameters. To be specific, we compute the likelihood of the mixed data sampled from class $c$, denoted by $p_c = \text{det}(\bm{\Sigma}_c)^{-1/2} \exp\{-( f_{\theta}^{\op{feature}}(\vx_{i}^{\text{mix}}) -  \bm{\mu}_{c} )^T \bm{\Sigma}_c^{-1} (f_{\theta}^{\op{feature}}(\vx_{i}^{\text{mix}}) -  \bm{\mu}_{c}) \}$.
Then, we sort $k$ classes in a descending order of $p_c$, and select the top-2 classes $c_1$ and $c_2$ satisfying $p_{c_1} \geq p_{c_2} \geq p_c$ for $c \in [k]\backslash \{c_1, c_2\}$. Then, we label the mixed point $\vx_{i}^{\text{mix}}$ as 

\begin{align}\label{eqn:GenLabel_appendix}
\vy_i^{\text{gen}} = \frac{p_{c_1}}{p_{c_1}+p_{c_2}} \ve_{c_1}  + \frac{p_{c_2    }}{p_{c_1}+p_{c_2}} \ve_{c_2}.
\end{align}
Since our generative model is an imperfect estimate on the data distribution, $\vy_i^{\text{gen}}$ may be incorrect for some samples. Thus, we can use a combination of vanilla labeling and the suggested labeling, \ie define the label of mixed point as $\gamma \vy_i^{\text{gen}} + (1-\gamma) \vy_i^{\text{mix}}$ for some $\gamma \in [0,1]$. Note that our scheme reduces to the vanilla labeling scheme when $\gamma = 0$. 
Using the augmented data with updated label, the algorithm trains the classification model $f_{\theta} : \mathbb{R}^n \rightarrow [0,1]^k$ that predicts the label $\vy = [y_1, \cdots, y_k]$ of the input data, using the cross-entropy loss $\ell_{\text{CE}} (\cdot)$.

\begin{algorithm}[t!]
	\small
	\textbf{Input} Data $D$, mix function $\text{mix}(\cdot)$, learning rate $\eta$, loss ratio $\gamma$, memory ratio $\beta$, batch size $B$, max iteration $T$
	\\
	\textbf{Output} Trained model $f_{\theta} = f_{\theta}^{\op{cls}} \circ f_{\theta}^{\op{feature}}$  
	\\
	\vspace{-4mm}
	\begin{algorithmic}%
	    \STATE $\theta \leftarrow$ Random initial model parameter,  \quad \quad \quad 
	    $\pi \leftarrow $ Permutation of $[B]$
		\STATE $(\bm{\mu}_{c}^{(0)}, \bm{\Sigma}_c^{(0)}) \leftarrow (\bm{0}, \bm{I}_d)$ for $c \in [k]$ 
        \FOR{iteration $t = 1, 2, \cdots, T$}
        \STATE $\{(\vx_i, \vy_i)\}_{i=1}^B \leftarrow$ Randomly chosen batch samples in $D$ 
        \FOR{class $c \in [k]$}
        \STATE $S_c \leftarrow \{i: \vy_i = \ve_c \}$ %
        \STATE $\bm{\mu}_c^{(t)} \leftarrow \frac{1}{\lvert S_c \rvert } \sum_{i \in S_c} f_{\theta}^{\op{feature}}(\vx_i)$,
        \quad \quad \quad 
        $\bm{\mu}_c^{(t)} \leftarrow  (1-\beta) \bm{\mu}_c^{(t)} + \beta \bm{\mu}_c^{(t-1)} $
        \STATE $\bm{\Sigma}_c^{(t)} \leftarrow \frac{1}{\lvert S_c \rvert} \sum_{i \in S_c} (f_{\theta}^{\op{feature}}(\vx_i) - \bm{\mu}_c^{(t)}) (f_{\theta}^{\op{feature}}(\vx_i) - \bm{\mu}_c^{(t)})^T$
        \STATE $\bm{\Sigma}_c^{(t)} \leftarrow \frac{1}{d}\text{trace}(\bm{\Sigma}_c^{(t)}) \mI_d$,
        \quad \quad \quad  
        $\bm{\Sigma}_c^{(t)} \leftarrow  (1-\beta) \bm{\Sigma}_c^{(t)} + \beta \bm{\Sigma}_c^{(t-1)}$
        \ENDFOR

        \FOR{sample index $i \in [B]$}
        \STATE $(\vx_{i}^{\text{mix}}, \vy_{i}^{\text{mix}}) \leftarrow \text{mix}( (\vx_i, \vy_i), (\vx_{\pi(i)}, \vy_{\pi(i)}) )$
        \STATE $p_c \leftarrow \text{det}(\bm{\Sigma}_c^{(t)})^{-1/2} \exp\{-( f_{\theta}^{\op{feature}}(\vx_{i}^{\text{mix}}) -  \bm{\mu}_{c}^{(t)} )^T (\bm{\Sigma}_c^{(t)})^{-1} (f_{\theta}^{\op{feature}}(\vx_{i}^{\text{mix}}) -  \bm{\mu}_{c}^{(t)}) \}$ for $c \in [k]$
        \STATE $c_1 \leftarrow \argmin_{c \in [k]} p_{c}$, \quad \quad \quad
        $c_2 \leftarrow \argmin_{c \in [k]\backslash \{c_1 \}} p_{c}$
        \STATE $\vy_{i}^{\text{gen}} \leftarrow \frac{  p_{c_1}   }{p_{c_1}+p_{c_2}} \ve_{c_1}+\frac{  p_{c_2}   }{p_{c_1}+p_{c_2}} \ve_{c_2}$
        \ENDFOR
        \STATE $\theta \leftarrow \theta - \eta \sum_{i \in [B]} \nabla_{\theta} \{ \gamma \cdot  \ell_{\text{CE}}(\vy_i^{\text{gen}}, f_{\theta}(\vx_i^{\text{mix}})) + (1-\gamma) \cdot  \ell_{\text{CE}}(\vy_i^{\text{mix}}, f_{\theta}(\vx_i^{\text{mix}}))\}$ 
        \ENDFOR
	\end{algorithmic}
	\caption{GenLabel (learning generative/discriminative models at the same time)}
	\label{Algo:GenLabel_consecutive}
\end{algorithm}

\paragraph{Results}
Table~\ref{Table:GenLabel_real_datasets} shows the summary of results. Here, we tried two different validation schemes: one is to choose the best robust model against AutoAttack, and the other is to select the model with the highest generalization performance in terms of clean accuracy. 
For each scheme $X \in \{ \text{mixup}, \text{manifold-mixup} \}$, it is shown that $``X+\text{GenLabel}''$ has a minor improvement on both generalization performance and adversarial robustness, for all image datasets.
It is also shown that the suggested mixup+GenLabel achieves a higher generalization performance than AdaMixup, which requires 3x higher computational complexity than our method.

\begin{table}[t]
	\centering
	\caption{Generalization and robustness performances on real image datasets. 
	We consider two types of validation: selecting 
	the best robust model based on AutoAttack, or the best model based on the clean accuracy. Here, Mixup+GenLabel indicates that we applied the suggested labeling method to the augmented points generated by mixup. 
	GenLabel helps mixup-based data augmentations in terms of both robust accuracy and clean accuracy. 
	}
	\scriptsize
	\label{Table:GenLabel_real_datasets}
	\begingroup
    \setlength{\tabcolsep}{3pt} %
    \renewcommand{\arraystretch}{1} %
    \begin{tabular}{l|c|c|c|c|c|c|c|c}
        \midrule  
        \multicolumn{1}{c|}{\multirow{2}{*}{Methods}}
         &  \multicolumn{2}{c|}{MNIST} & \multicolumn{2}{c|}{CIFAR-10} & \multicolumn{2}{c|}{CIFAR-100} & \multicolumn{2}{c}{TinyImageNet-200} \\%&
        \cmidrule{2-9}         &  Robust & Clean &          Robust & Clean & Robust & Clean & Robust & Clean\\
		\midrule		
        \textbf{Vanilla}                    
        & 48.17 $\pm$ 13.1      
        & 99.34 $\pm$ 0.03
        & 16.89 $\pm$ 0.98  
        & 94.57 $\pm$ 0.25 
        &  17.19 $\pm$  0.20
        &  74.48	$\pm$ 0.28  
        &  13.19 $\pm$ 0.19     
        &  58.13 $\pm$ 0.09
         \\
        \textbf{AdaMixup}
        & -  
        & 99.32 $\pm$ 0.05
        & - 
        & 95.45 $\pm$ 0.13
        & - 
        & - 
        & - 
        & - 
        \\
        \textbf{Mixup}                    
        & 55.44 $\pm$ 1.80                 
        & 99.27 $\pm$ 0.03
        &  11.65 $\pm$ 1.96      
        &  95.68 $\pm$ 0.06
        &  18.44 $\pm$ 0.45
        &  77.65 $\pm$ 0.30
        &  14.91 $\pm$ 0.48     
        &  59.46 $\pm$ 0.30    
        \\
        \grayl{\textbf{Mixup+GenLabel}}          
        & \grayl{56.54 $\pm$ 1.03}         
        & \grayl{99.36 $\pm$ 0.06}
        &  \grayl{14.32 $\pm$ 1.23}        
        &  \grayl{\textbf{96.09} $\pm$ 0.01}
        &  \grayl{\textbf{19.58} $\pm$ 0.71}
        &  \grayl{78.04 $\pm$ 0.21}
        &  \grayl{\textbf{15.34} $\pm$ 0.30}  
        &  \grayl{59.78 $\pm$ 0.09}        
        \\
        \textbf{Manifold mixup}           
        & 55.56 $\pm$ 1.53                 
        & 99.32 $\pm$ 0.04
        &  18.14 $\pm$ 1.88     
        &  94.78 $\pm$ 0.49 
        &  19.25 $\pm$ 0.61    
        &  78.61 $\pm$  0.17
        &  14.78 $\pm$ 0.28     
        &  59.87 $\pm$ 0.63        
        \\
        \grayl{\textbf{Manifold mixup+GenLabel}}  
        & \grayl{\textbf{56.62} $\pm$ 1.31}
        & \grayl{\textbf{99.37} $\pm$ 0.07}
        &  \grayl{\textbf{18.91} $\pm$ 1.26}  
        &  \grayl{95.10 $\pm$ 0.10}
        &  \grayl{19.28 $\pm$ 1.04}
        &  \grayl{\textbf{78.99} $\pm$ 0.54}
        &  \grayl{15.19 $\pm$ 0.22}  
        &  \grayl{\textbf{60.02} $\pm$ 0.25} 
        \\
        \midrule
    \end{tabular}
    \endgroup
\end{table}

\subsection{GenLabel using FC ReLU networks}\label{sec:GenLabel_FC_ReLU}

In the main manuscript, we reported the experimental results on the OpenML datasets for the logistic regression model. Table~\ref{Table:OpenML_3nn} shows the generalization performances for fully-connected (FC) ReLU networks with 2 hidden layers. 
Here we show the results of 16 best performing OpenML datasets.
For these selected datasets, mixup+GenLabel with Gaussian mixture (GM) model has a slight performance gain compared with vanilla training and mixup. Our ablation study shows that GenLabel outperforms alternative methods -- generative classifier (GM) and a method dubbed as \emph{excluding MI points}.
Table~\ref{Table:OpenML_3nn_rob} compares the adversarial robustness of different methods on selected OpenML datasets, under FGSM attack. For these datasets, GenLabel has $10 \sim 30 \%$ gain in robustness, compared with mixup and vanilla training. These results FC ReLU networks have similar behavior with the results for logistic regression in Tables~\ref{Table:OpenML} and~\ref{Table:OpenML_rob}.

\begin{table}[t]
\centering
	\caption{Generalization performances (clean accuracy in \%) on the best performing OpenML datasets in FC ReLU networks %
	}
	\scriptsize
	\setlength{\tabcolsep}{3pt} %
    \renewcommand{\arraystretch}{1} %
	\begin{tabular}{l|c|c|c|c|c|c|c|c}
        \midrule  
         Methods $\backslash$ OpenML Dataset ID 
         & 719
         & 770
         & 774
         & 804
         & 818
         & 862
         & 900
         & 906
         \\%&
		\midrule		 
        \textbf{Vanilla}
        & 71.62$\pm$5.55
        & 65.58$\pm$11.38
        & 59.34$\pm$5.16
        & 81.44$\pm$8.45
        & 87.56$\pm$19.75
        & 81.51$\pm$6.06
        & 61.00$\pm$1.62
        & 53.74$\pm$2.21
         \\
        \textbf{Mixup}
        & 70.89$\pm$5.47
        & 65.26$\pm$11.11
        & 59.80$\pm$5.04
        & 80.05$\pm$10.22
        & 88.17$\pm$15.68
        & 80.40$\pm$5.51
        & 60.99$\pm$2.05
        & 53.74$\pm$2.51
         \\          %
        \textbf{Mixup+Excluding MI}
        & 71.62$\pm$5.55
        & 64.15$\pm$9.62
        & 59.50$\pm$7.09
        & 80.05$\pm$10.22
        & 88.49$\pm$15.56
        & 79.21$\pm$8.03
        & 60.74$\pm$2.31
        & 53.50$\pm$2.31
                 \\         
        \textbf{Generative classifier (GM)}
        & 67.90$\pm$6.00
        & 51.69$\pm$10.93
        & 49.22$\pm$6.03
        & 71.59$\pm$10.21
        & 82.66$\pm$18.41
        & 67.62$\pm$10.23
        & 57.50$\pm$6.31
        & 48.26$\pm$3.02
         \\                              
        \grayl{\textbf{Mixup+GenLabel (GM)}}
        & \grayc{\textbf{73.10}$\pm$7.66}
        & \grayc{\textbf{66.69}$\pm$11.06}
        & \grayc{\textbf{59.95}$\pm$5.28}
        & \grayc{\textbf{81.57}$\pm$9.57}
        & \grayc{\textbf{89.15}$\pm$17.40}
        & \grayc{\textbf{82.70}$\pm$7.56}
        & \grayc{\textbf{61.24}$\pm$2.49}
        & \grayc{\textbf{54.21}$\pm$4.75}
         \\ 
		\bottomrule      
    \end{tabular} \begin{tabular}{l|c|c|c|c|c|c|c|c}
    \midrule  
    Methods $\backslash$ OpenML Dataset ID 
    & 908
    & 949
    & 956
    & 1011
    & 1014
    & 1045
    & 1055
    & 1075
    \\
    \midrule		 
    \textbf{Vanilla}
    & 54.00$\pm$1.70
    & 85.69$\pm$0.46
    & 68.90$\pm$2.52
    & 96.14$\pm$3.46
    & 80.55$\pm$0.25
    & 94.53$\pm$1.96
    & 78.77$\pm$4.36
    & 92.35$\pm$2.23
         \\
        \textbf{Mixup}
        & 55.00$\pm$1.95
        & 85.69$\pm$0.46
        & 69.88$\pm$4.01
        & 96.14$\pm$3.46
        & 80.55$\pm$0.25
        & 94.53$\pm$1.96
        & 78.77$\pm$4.36
        & 92.35$\pm$2.23
         \\ 
        \textbf{Mixup+Excluding MI}
        & 54.50$\pm$2.34
        & 85.69$\pm$0.46
        & 69.88$\pm$4.01
        & \textbf{96.43}$\pm$3.74
        & 80.55$\pm$0.25
        & 94.53$\pm$1.96
        & 78.77$\pm$4.36
        & 92.35$\pm$2.23
                 \\
        \textbf{Generative classifier (GM)}
        & 47.99$\pm$3.86
        & 65.59$\pm$15.61
        & 67.97$\pm$2.03
        & 95.53$\pm$2.72
        & 48.43$\pm$4.80
        & 94.53$\pm$1.96
        & 40.75$\pm$9.62
        & 90.83$\pm$2.73
         \\             
        \grayl{\textbf{Mixup+GenLabel (GM)}}
        & \grayc{\textbf{55.75}$\pm$1.48}
        & \grayc{\textbf{87.14}$\pm$3.66}
        & \grayc{\textbf{70.81}$\pm$3.75}
        & \grayc{\textbf{96.43}$\pm$3.74}
        & \grayc{\textbf{80.80}$\pm$0.61}
        & \grayc{\textbf{95.19}$\pm$1.57}
        & \grayc{\textbf{79.81}$\pm$4.01}
        & \grayc{\textbf{93.11}$\pm$2.41}
        \\
		\bottomrule      
    \end{tabular}

	\label{Table:OpenML_3nn}
\end{table}

\begin{table}[t]
\centering
	\caption{Robustness performances (robust accuracy in \%) on the best performing OpenML datasets in FC ReLU network, under FGSM attack %
	}
	\scriptsize
	\setlength{\tabcolsep}{3pt} %
    \renewcommand{\arraystretch}{1} %
	\begin{tabular}{l|c|c|c|c|c|c|c|c}
        \midrule  
         Methods $\backslash$ OpenML Dataset ID 
         & 312
         & 715
         & 718
         & 723
         & 797
         & 806
         & 837
         & 866
         \\%&
		\midrule		 
        \textbf{Vanilla}
        & 54.22$\pm$14.88
        & 42.70$\pm$3.62
        & 28.40$\pm$2.04
        & 39.90$\pm$3.54
        & 32.79$\pm$3.31
        & 32.69$\pm$3.51
        & 30.30$\pm$2.42
        & 41.10$\pm$2.07
         \\
        \textbf{Mixup}
        & 66.23$\pm$11.75
        & 44.10$\pm$3.27
        & 40.29$\pm$3.27
        & 41.39$\pm$3.45
        & 38.70$\pm$3.24
        & 35.99$\pm$2.78
        & 30.60$\pm$1.92
        & 45.80$\pm$1.67
         \\ 
        \grayl{\textbf{Mixup+GenLabel (GM)}} 
        & \grayc{\textbf{82.09}$\pm$0.08}
        & \grayc{\textbf{54.00}$\pm$0.95}
        & \grayc{\textbf{54.70}$\pm$0.89}
        & \grayc{\textbf{52.10}$\pm$0.89}
        & \grayc{\textbf{55.10}$\pm$0.73}
        & \grayc{\textbf{53.39}$\pm$2.44}
        & \grayc{\textbf{49.99}$\pm$1.98}
        & \grayc{\textbf{58.00}$\pm$0.32}
         \\  
		\bottomrule      
    \end{tabular} \begin{tabular}{l|c|c|c|c|c|c|c|c}
    \midrule  
    Methods $\backslash$ OpenML Dataset ID & 871
    & 909
    & 917
    & 1038
    & 1043
    & 1130
    & 1138
    & 1166
    \\
    \midrule		 
    \textbf{Vanilla}
        & 32.48$\pm$4.20
        & 39.26$\pm$3.32
        & 37.90$\pm$3.51
        & 16.49$\pm$1.85
        & 57.65$\pm$2.54
        & 47.71$\pm$8.86
        & 53.97$\pm$4.77
        & 33.92$\pm$5.39
         \\
        \textbf{Mixup}
        & 32.07$\pm$2.69
        & 39.22$\pm$5.52
        & 41.20$\pm$3.71
        & 14.45$\pm$2.25
        & 61.95$\pm$1.82
        & 49.20$\pm$9.45
        & 62.00$\pm$4.31
        & 46.21$\pm$5.58
         \\    
        \grayl{\textbf{Mixup+GenLabel (GM)}} 
        & \grayc{\textbf{41.42}$\pm$0.73}
        & \grayc{\textbf{50.50}$\pm$0.61}
        & \grayc{\textbf{51.50}$\pm$2.30}
        & \grayc{\textbf{26.04}$\pm$2.87}
        & \grayc{\textbf{74.97}$\pm$0.16}
        & \grayc{\textbf{85.57}$\pm$2.01}
        & \grayc{\textbf{88.70}$\pm$2.48}
        & \grayc{\textbf{74.62}$\pm$3.09}
        \\
		\bottomrule      
    \end{tabular}

	\label{Table:OpenML_3nn_rob}
\end{table}

\section{Additional mathematical results}

Below we state the approximation of GenLabel loss $L_n^{\op{gen}} (\boldsymbol{\theta}, S)$. The proof of this lemma is in Section~\ref{sec:proof_lemma: GenLabel loss}.

\begin{lemma}\label{Lemma: GenLabel loss for 2 class} 
The second order Taylor approximation of the GenLabel loss is given by

 \begin{equation*}%
\tilde{L}_n^{\op{gen}}(\boldsymbol{\theta},S) = L_n^{\op{std}}(\boldsymbol{\theta},S)+ {R}_1^{\op{gen}}(\boldsymbol{\theta},S)+
{R}_2^{\op{gen}}(\boldsymbol{\theta},S)+
{R}_3^{\op{gen}}(\boldsymbol{\theta},S),
\end{equation*}
where
\begin{align*}
   & {R}_1^{\op{gen}}(\boldsymbol{\theta},S)=\frac{1}{n} \sum_{i=1}^n A_{\sigma_1,c,\tau,d} (h'(f_{\boldsymbol{\theta}}(\boldsymbol{x_i}))-y_i) \nabla f_{\boldsymbol{\theta}}(\boldsymbol{x_i})^T \mathbb{E}_{\boldsymbol{r_x}\sim D_X}[\boldsymbol{r_x}-\boldsymbol{x_i}], \\
   & {R}_2^{\op{gen}}(\boldsymbol{\theta},S)=\frac{1}{2n}\sum_{i=1}^n B_{\sigma_1,c,\tau,d} h''(f_{\boldsymbol{\theta}}(\boldsymbol{x_i}))\nabla f_{\boldsymbol{\theta}}(\boldsymbol{x_i})^T \mathbb{E}_{\boldsymbol{r_x}\sim D_X}[(\boldsymbol{r_x}-\boldsymbol{x_i})(\boldsymbol{r_x}-\boldsymbol{x_i})^T]\nabla f_{\boldsymbol{\theta}}(\boldsymbol{x_i}), \\
   & {R}_3^{\op{gen}}(\boldsymbol{\theta},S)=\frac{1}{2n}\sum_{i=1}^n B_{\sigma_1,c,\tau,d} (h'(f_{\boldsymbol{\theta}}(\boldsymbol{x_i}))-y_i) \mathbb{E}_{\boldsymbol{r_x}\sim D_X}[(\boldsymbol{r_x}-\boldsymbol{x_i})^T\nabla^2f_{\boldsymbol{\theta}}(\boldsymbol{x_i})(\boldsymbol{r_x}-\boldsymbol{x_i})],
\end{align*}
where $A_{\sigma_1,c,\tau,d}$ and $B_{\sigma_1,c,\tau,d}$ are two constants defined in (\ref{A,B}).
When $\sigma_1\to \infty,$ we have $\lim_{\sigma_1\to \infty} A_{\sigma_1,c,\tau,d} = \frac{c^2 + 1}{2(c+1)^2}<\frac{1}{3}, \lim_{\sigma_1\to \infty}B_{\sigma_1,c,\tau,d} = \frac{c^2-c+1}{3(c+1)^2}<\frac{1}{6}$.
\end{lemma}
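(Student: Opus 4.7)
}
The plan is to mimic the second-order Taylor expansion of the mixup loss given in Lemma~\ref{Lemma: mixup loss}, but with the linearly interpolated label $y^{\op{mix}}$ replaced by the posterior-based label $y^{\op{gen}}$ whenever $y_i\neq y_j$. Write $\boldsymbol{x}^{\op{mix}}_{ij}=\boldsymbol{x}_i+(1-\lambda)(\boldsymbol{x}_j-\boldsymbol{x}_i)$ and Taylor-expand every occurrence of $\boldsymbol{x}^{\op{mix}}_{ij}$ around $\boldsymbol{x}_i$ up to second order. After substituting into $\ell(\boldsymbol{\theta},\boldsymbol{z}_{ij}^{\op{gen}})=h(f_{\boldsymbol{\theta}}(\boldsymbol{x}^{\op{mix}}_{ij}))-y^{\op{gen}}_{ij}f_{\boldsymbol{\theta}}(\boldsymbol{x}^{\op{mix}}_{ij})$, collecting the zero-, first-, and second-order terms in $(\boldsymbol{x}_j-\boldsymbol{x}_i)$ will produce exactly the three candidate remainders $R_1^{\op{gen}}, R_2^{\op{gen}}, R_3^{\op{gen}}$, with scalar prefactors that we must show equal $A_{\sigma_1,c,\tau,d}$ and $B_{\sigma_1,c,\tau,d}$.

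The concrete steps are as follows. First, expand $f_{\boldsymbol{\theta}}(\boldsymbol{x}^{\op{mix}}_{ij})-f_{\boldsymbol{\theta}}(\boldsymbol{x}_i)$ to second order using $\nabla f_{\boldsymbol{\theta}}(\boldsymbol{x}_i)$ and $\nabla^2 f_{\boldsymbol{\theta}}(\boldsymbol{x}_i)$, and chain this through $h$ using $h'$ and $h''$. Second, split the double sum $\sum_{i,j}$ into same-class pairs $\{(i,j):y_i=y_j\}$ and different-class pairs $\{(i,j):y_i\neq y_j\}$; the same-class contribution exactly reproduces the vanilla loss skeleton with label $y_i$, while the cross-class contribution needs $y^{\op{gen}}_{ij}$. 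Third, using the explicit Gaussian densities $p_0,p_1$ specified in the setup, expand $y^{\op{gen}}_{ij}=p_{1}(\boldsymbol{x}^{\op{mix}}_{ij})/(p_0(\boldsymbol{x}^{\op{mix}}_{ij})+p_1(\boldsymbol{x}^{\op{mix}}_{ij}))$ around $\boldsymbol{x}_i$; note that $\log(p_1/p_0)$ is quadratic in $\boldsymbol{x}$, so $y^{\op{gen}}$ is a sigmoid of a quadratic whose gradient and Hessian at $\boldsymbol{x}_i$ can be written in closed form in terms of $\sigma_1$, $c$, $\tau$, and $d$. Fourth, take the expectation over $\lambda\sim\op{Unif}[0,1]$, using $\mathbb{E}[1-\lambda]=1/2$ and $\mathbb{E}[(1-\lambda)^2]=1/3$. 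Fifth, re-express the average over $j$ as an expectation over $\boldsymbol{r_x}\sim D_X$. The scalar prefactors that survive — which package together the class-balance fractions, the moments of $\lambda$, and the $\boldsymbol{x}_i$-evaluated derivatives of $y^{\op{gen}}$ — will be defined as $A_{\sigma_1,c,\tau,d}$ and $B_{\sigma_1,c,\tau,d}$ in equation~(\ref{A,B}). Finally, evaluate the large-$\sigma_1$ limit: as $\sigma_1\to\infty$ the two Gaussians concentrate at $\pm\boldsymbol{e}_1$, and the $\tau$- and $d$-dependent cross-terms in the Hessian of $\log(p_1/p_0)$ drop out; the remaining finite limit depends only on the variance ratio $c$, yielding $\lim A=(c^2+1)/(2(c+1)^2)$ and $\lim B=(c^2-c+1)/(3(c+1)^2)$, with the bounds $1/3$ and $1/6$ following by elementary calculus in $c$ over the admissible range $2-\sqrt{3}<c<2+\sqrt{3}$.

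The main obstacle will be Step~3: obtaining a clean Taylor expansion of the Gaussian posterior $y^{\op{gen}}$ around $\boldsymbol{x}_i$ whose coefficients can be packaged into the single scalars $A$ and $B$ irrespective of which class $\boldsymbol{x}_i$ belongs to. The posterior derivatives depend on $(\boldsymbol{x}_i\pm\boldsymbol{e}_1)^T\boldsymbol{\Sigma}^{-1}$ and on the ratio $\sigma_2/\sigma_1=c$, so reducing this to a class-independent constant requires either (i) exploiting the symmetry between classes and the assumption $\sigma_2=c\sigma_1$ with $\tau\notin\{-1/(d-1),-1/(d-2)\}$ (so $\boldsymbol{\Sigma}^{-1}$ has a clean Sherman--Morrison form), or (ii) absorbing residual class-dependence into $h'(f_{\boldsymbol{\theta}}(\boldsymbol{x}_i))-y_i$ and $h''(f_{\boldsymbol{\theta}}(\boldsymbol{x}_i))$, which is exactly why $A$ multiplies the former and $B$ multiplies the latter in the stated form. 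Carrying out this bookkeeping and verifying that the leftover class-dependent terms vanish (or are absorbed) is where the real work lies; everything else is a routine Taylor calculation parallel to Lemma~\ref{Lemma: mixup loss}.
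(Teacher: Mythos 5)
Your overall skeleton (reduce to the mixup-style expansion of Lemma~\ref{Lemma: mixup loss} with the label replaced by $y^{\op{gen}}$) is the right starting point, but the core mechanism of the paper's proof is missing, and the step you substitute for it would not work. The paper never Taylor-expands the posterior label $y_{ij}^{\op{gen}}$ around $\boldsymbol{x_i}$. Instead, after writing $y_{ij}^{\op{gen}} = \lambda_1 y_i + (1-\lambda_1) y_j$ with $\lambda_1$ an explicit function of $\lambda$ and the Gaussian noise (computed in closed form via the auxiliary lemmas on $\boldsymbol{e}_1^T\boldsymbol{\Sigma}^{-1}\boldsymbol{Z}$ and $\boldsymbol{Z}^T\boldsymbol{\Sigma}^{-1}\boldsymbol{Z}$), it interprets the soft label as a Bernoulli mixture of the two \emph{hard} labels, exchanges the order of the $\lambda$- and Bernoulli-integrations, and thereby rewrites $L_n^{\op{gen}}$ as $\frac{1}{n}\sum_i \mathbb{E}_{\lambda\sim\mathcal{F}_i}\mathbb{E}_{\boldsymbol{r_x}\sim D_X}\,\ell_{\check{\vx}_i,y_i}(\boldsymbol{\theta})$, i.e., exactly the mixup loss with the \emph{original} hard label $y_i$ but with $\lambda$ drawn from a tilted mixture distribution $\mathcal{F}_i$. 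The constants are then simply $A^i_{\sigma_1,c,\tau,d}=\mathbb{E}_{\lambda\sim\mathcal{F}_i}[1-\lambda]$ and $B^i_{\sigma_1,c,\tau,d}=\mathbb{E}_{\lambda\sim\mathcal{F}_i}[(1-\lambda)^2]$ as in (\ref{A,B}), and the remaining Taylor work is verbatim the proof of Lemma~\ref{Lemma: mixup loss}. This change-of-measure step is precisely what lets the remainders keep the clean form $(h'(f_{\boldsymbol{\theta}}(\boldsymbol{x_i}))-y_i)$ with no extra terms coming from derivatives of the label.

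Your replacement — expanding $y^{\op{gen}}$ as a sigmoid of a quadratic via its gradient and Hessian at $\boldsymbol{x_i}$, then integrating against $\lambda\sim\op{Unif}[0,1]$ with moments $1/2$ and $1/3$ — fails on two counts. First, the stated coefficients are not uniform moments of $\lambda$: the entire content of the lemma is that $A<1/3$ and $B<1/6$, strictly below the mixup constants of Lemma~\ref{Lemma: mixup loss}, and this happens only because the label-induced reweighting $\mathcal{F}_i$ shifts the mass of $\lambda$ toward $1$. Second, in the regime $\sigma_1\to\infty$ the argument of the sigmoid scales like $\sigma_1^2$, so $\lambda_1$ is essentially a step function of $\lambda$ with threshold $1/(1+c)$; it is exponentially flat at $\lambda=1$ (i.e., at $\boldsymbol{x_i}$), so any low-order Taylor expansion of the label around $\boldsymbol{x_i}$ sees a constant label and cannot produce the $c$-dependent limits $\frac{c^2+1}{2(c+1)^2}$ and $\frac{c^2-c+1}{3(c+1)^2}$. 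Those limits are the moments of $1-\lambda$ under the limiting $\mathcal{F}_i$, which degenerates to the mixture $\frac{c}{c+1}\op{Unif}([\frac{1}{c+1},1])+\frac{1}{c+1}\op{Unif}([\frac{c}{c+1},1])$, not quantities obtained from derivatives of the posterior. To repair the argument you need the Bernoulli/change-of-measure reduction; once that is in place the Taylor calculation itself requires no new work beyond Lemma~\ref{Lemma: mixup loss}.
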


\section{Proof of mathematical results}\label{Sec:proofs}

\subsection{Proof for Example~\ref{ex:intrusion}}\label{sec:proof_prop:toy_intrusion_margin}
We start with showing $\theta_{\op{mixup}}^{\star} = \frac{7}{16}$. 
First, we the prediction of the classifier can be represented as
\begin{align*}
f_{\theta}(x) = 
\begin{cases}
\frac{1}{2\theta} \lvert x \rvert, & \op{ if } 0 \leq \lvert x \rvert \leq  2 \theta, \\
1, & \op{ if } 2 \theta \leq \lvert x \rvert \leq 1.
\end{cases}
\end{align*}
Since we have three data points, we have $\binom{3}{2} = 3$ different way of mixing the data points: (1) mixing $x_1$ and $x_2$, (2) mixing $x_2$ and $x_3$, (3) mixing $x_3$ and $x_1$. We denote the loss value of $i$-th mix pair as $L_i$. 
We first compute $L_2$, the loss of mixing $x_2$ and $x_3$. The mixed point is $x^{\op{mix}} = \lambda x_3 + (1-\lambda)x_2 = \lambda$, which has label $\vy^{\op{mix}} = \lambda \ve_1 + (1-\lambda) \ve_2$ for $\lambda \in [0,1]$. Then, 
\begin{align*}
    L_2 &= \int_{0}^1 \left \lVert \vy^{\op{mix}} - \begin{bmatrix}
f_{\theta}(x^{\op{mix}}) \\
1 - f_{\theta}(x^{\op{mix}})
\end{bmatrix} \right \rVert_2^2  d\lambda = 2 \int_{0}^1 (\lambda - f_{\theta} (x^{\op{mix}}))^2 d\lambda = 2 \int_{0}^{2 \theta} \lambda^2 (1-\frac{1}{2\theta})^2 \dd\lambda + 2 \int_{2\theta}^{1} (\lambda-1)^2 \dd\lambda \\ 
&= \frac{2}{3} (2 \theta-1)^2.
\end{align*}
Since $f_{\theta}(x)$ is symmetric, we have $L_1 = L_2$.
Now, we compute $L_3$. The mixed point is represented as $x^{\op{mix}} = \lambda x_3 + (1-\lambda)x_1 = 2\lambda-1$, which is labeled as 
$\vy^{\op{mix}} = \lambda \ve_1 + (1-\lambda) \ve_1 = \ve_1$, for $\lambda \in [0,1]$. 
Then, 
\begin{align*}
    L_3 &= \int_{0}^1 \left \lVert \vy^{\op{mix}} - \begin{bmatrix}
f_{\theta}(x^{\op{mix}}) \\
1 - f_{\theta}(x^{\op{mix}})
\end{bmatrix} \right \rVert_2^2  d\lambda = \int_{0}^1 (1 - f_{\theta}(x^{\op{mix}}) )^2 d \lambda \\
&= \int_{\frac{1}{2}-\theta}^{\frac{1}{2}+ \theta} (1 - \frac{1}{2\theta} \lvert 2\lambda-1 \rvert )^2 \dd\lambda = 2 \int_{\frac{1}{2}}^{\frac{1}{2}+\theta} (1 - \frac{1}{2\theta} ( 2\lambda-1 ) )^2 \dd\lambda = \frac{2}{3} \theta.
\end{align*}
Thus, $\frac{\dd}{\dd\theta} (\frac{3}{2}(L_1 + L_2 + L_3)) = \frac{\dd}{\dd\theta} (8 \theta^2 - 7\theta + 2) = 0$ when $\theta=\frac{7}{16}$. This completes the proof of $\theta_{\op{mixup}}^{\star} = \frac{7}{16}$.

Finally, $\theta_{\op{mixup-without-MI}}^{\star} = \frac{1}{2}$ is trivial from the fact that $\frac{\dd}{\dd\theta} (\frac{3}{4}(L_1 + L_2)) = \frac{\dd}{\dd\theta} (2\theta-1)^2 = 0$ when $\theta=\frac{1}{2}$.

\subsection{Proof of Proposition~\ref{Prop:Gaussian}}\label{sec:proof_prop:Gaussian}

\begin{proof}
Denote the mean of Gaussian distribution for each class by $\mu_0 = 0$ and $\mu_1 = 1$. The variance of Gaussian distribution is denoted by $\sigma^2$.
Let $x_i$ be the feature sampled from class $i \in \{0, 1\}$.
For small $\sigma$, we have $x_0 \simeq \mu_0 = 0$ and $x_1 \simeq \mu_1 = 1$. Then, the mixed point is represented as 
$x^{\op{mix}} = (1-\lambda) x_0 + \lambda x_1 \simeq \lambda $. 
The label of mixup is represented as $y^{\op{mix}} = (1-\lambda) y_0 + \lambda y_1 = \lambda$. The label of mixup+GenLabel is given as 
\begin{align*}
y^{\op{gen}} &= \frac{p_0 }{p_0 + p_1} y_0 + \frac{p_1 }{p_0 + p_1} y_1 \\
&=  \frac{p_1 }{p_0 + p_1} \stackrel{\rm (a)}{=} \frac{ \exp(- (1-\lambda)^2 / 2\sigma^2) }{ \exp(- \lambda^2 / 2\sigma^2) + \exp(- (1-\lambda)^2 / 2\sigma^2) } = \frac{1}{1 + \exp(- (\lambda - 1/2) / \sigma^2)}
\end{align*}
where (a) is from 
\begin{align*}
    p_0 &= p(x^{\op{mix}} | y=0) = \frac{1}{\sqrt{2\pi}} \exp(-(\lambda-0)^2/2\sigma^2), \\
    p_1 &= p(x^{\op{mix}} | y=1) = \frac{1}{\sqrt{2\pi}} \exp(- (1 - \lambda)^2/2\sigma^2).
\end{align*}
This completes the proof.
\end{proof}

\begin{figure}[t]
    \vspace{-2mm}
	\centering
	\includegraphics[width=0.8\linewidth]{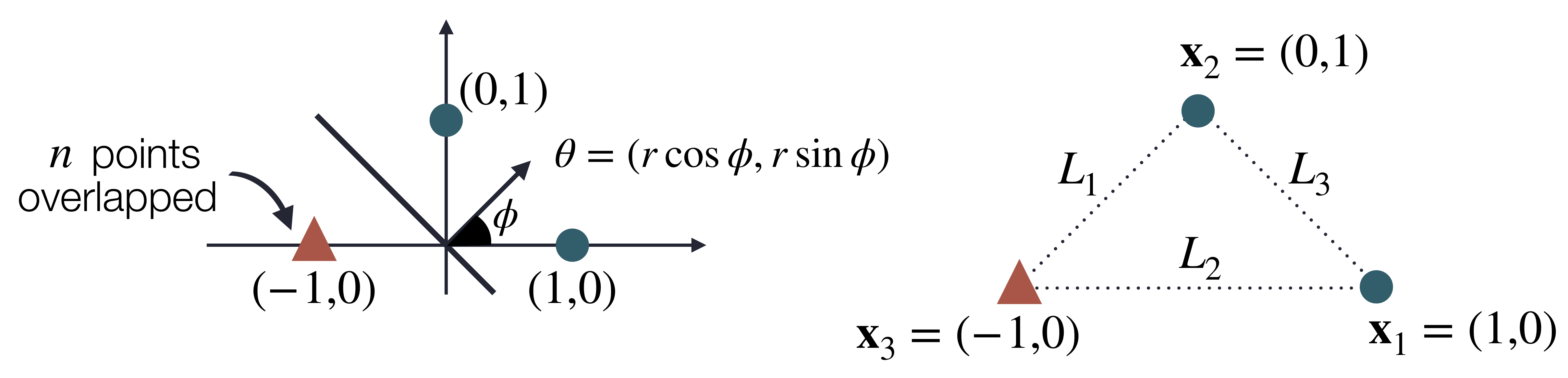}
	\vspace{-2mm}
	\caption
	{
	Left: the dataset $S$ used in Example~\ref{ex:n+2_dots}. 
	The feature-label pairs are defined as
	$(\vx_1, y_1) = ([1, 0],  +1)$, $(\vx_2, y_2) = ([0, 1],  +1)$, and $(\vx_i, y_i) = ([-1, 0], -1)$ for    $i=3, 4, \cdots, n+2$. Right: the line segments connecting training data points.
	}
	\label{Fig:toy_label_appendix}
\end{figure}

\subsection{Proof for Example~\ref{ex:n+2_dots}}\label{Sec:toy_proof}

Consider the problem of classifying $n+2$ data points $S = \{ (\vx_i, y_i) \}_{i=1}^{n+2}$, where the feature $\vx_i \in \mathbb{R}^2$ and the label $y_i \in \{+1, -1\}$ of each point is specified in Fig.~\ref{Fig:toy_label_appendix}.
We use the one-hot label $\vy_i = [1, 0]$ for class $+1$ and $\vy_i = [0, 1]$ for class $-1$.
Consider applying logistic regression to this problem, where the solution is represented as $\bm{\theta} = [r \cos \phi, r \sin \phi]$.
Here we compare three 
different schemes: (1) vanilla training, (2) mixup, and (3) mixup with GenLabel (dubbed as new-mixup).
The first scheme is nothing but training only using the given training data $S$. Both mixup and new-mixup generate mixed points using linear combination of data points, \ie $\vx_{ij} = \lambda \vx_{i} + (1- \lambda) \vx_{j}$ for some $\lambda \sim \text{Beta}(\alpha, \alpha)$, while the labeling method is different. The original mixup uses $\vy_{ij} = \lambda \vy_{i} + (1- \lambda) \vy_{j}$, whereas the new-mixup
uses $\vy_{ij} = \rho \vy_{i} + (1- \rho) \vy_{j}$ where $\rho = \frac{1}{1 + \exp\{- (\lambda - 1/2)/\sigma^2\}}$ for some small $\sigma$, according to Proposition~\ref{Prop:Gaussian}, assuming 
the class +1 is modeled as Gaussian mixture.
We analyze the solutions of these schemes, denoted by $\bm{\theta}_{\text{vanilla}}$, $\bm{\theta}_{\text{mixup}}$ and $\bm{\theta}_{\text{new-mixup}}$, and compare it with the $L_2$ max-margin classifier obtained from support vector machine (SVM), represented as $\bm{\theta}_{\op{svm}} = ( \cos \frac{\pi}{4}, \sin  \frac{\pi}{4} )$. Here, we denote the angle of SVM solution by $\phi_{\op{svm}} = \pi/4$. 
Below we first analyze the loss of vanilla training, and then provide analysis on the loss of the mixup scheme (using either original linear labeling or the suggested GenLabel).

\paragraph{Vanilla training}

Consider the vanilla training which learns $\bm{\theta}$ (or the corresponding $\phi$) by only using the given data. In this case, the sum of logistic loss over all samples can be represented as 
\begin{align}%
\ell_{\text{vanilla}} = \sum_{i=1}^{n+2} \log (1 + e^{- y_i  \bm{\theta}^T \vx_i  } ),
\end{align}
where the exponential term for each data is
\begin{align*}
-y_1 \vtheta^T \vx_1 &= - (r \cos \phi, r \sin \phi )^T (1, 0) = -r \cos \phi, \\
-y_2 \vtheta^T \vx_2 &= - (r \cos \phi, r \sin \phi )^T (0, 1) = -r \sin \phi, \\
-y_i \vtheta^T \vx_i &=  (r \cos \phi, r \sin \phi )^T (-1, 0) = - r \cos \phi,  \quad \quad \quad i = 3,4, \cdots, n+2 
\end{align*}
Then, the loss of vanilla training is 
\begin{align}\label{Eqn:loss_vanilla}
\ell_{\text{vanilla}} = (n+1) \log (1 + e^{-r \cos \phi}) + \log (1 + e^{-r \sin \phi}).
\end{align}
The derivative of the loss with respect to $\phi$ is given as
\begin{align*}
R(\phi) \coloneqq \frac{d}{d\phi} \ell_{\text{vanilla}} = 
(n+1) \frac{r \sin \phi \exp\{-r \cos \phi \}}{1 + \exp\{-r \cos \phi\}} + \frac{-r \cos \phi \exp\{-r \sin \phi\}}{1 + \exp\{-r \sin \phi\}}.
\end{align*}
By plugging in $\phi_{\op{svm}} = \pi/4$ in this expression, we have 
\begin{align*}
R(\phi = \phi_{\op{svm}}) = \frac{nr}{\sqrt{2}} \cdot  \frac{ \exp\{-r/\sqrt{2}\} }{1 + \exp\{-r/\sqrt{2}\}} 
= 
\frac{n}{\sqrt{2}} \cdot  \frac{ r}{1 + e^{r/\sqrt{2}}}
\neq 0,
\end{align*}
meaning that vanilla training cannot achieve the max-margin classifier $\vtheta_{\text{svm}}$ for a fixed $r > 0$.
Note that $R(\phi=\phi_{\op{svm}}) \rightarrow 0$ holds when $\lVert \vtheta \rVert = r \rightarrow \infty$, \ie the vanilla gradient descent training achieves the SVM solution. This coincides with the result of ~\citep{soudry2018implicit} which showed that for linearly separable data, the model parameter $\vw(t)$ updated by gradient descent satisfies both $\lim_{t \rightarrow \infty} \lVert \vtheta(t) \rVert = \infty$ and $\lim_{t \rightarrow \infty} \vtheta(t) / \lVert \vtheta(t) \rVert = \vtheta_{\op{svm}}$.

\paragraph{Mixup}
Now we analyze the case of mixup + GenLabel (or new-mixup). Here we briefly recap how the suggested data augmentation works.
Basically, following the vanilla mixup scheme, we randomly sample data points $\vx_i$ and $\vx_j$, and generate augmented data $\vx_{ij} = \lambda \vx_i + (1-\lambda) \vx_j$ where $\lambda \sim \op{Beta}(\alpha,\alpha)$ for some $\alpha > 0$. Then, we label this augmented data as 
$\vy_{ij} = \rho \vy_{i} + (1-\rho) \vy_{j}$ where
$\rho = \lambda$ for vanilla mixup with linear labeling, and $\rho = \frac{1}{1 + \exp\{-(\lambda - 1/2)/\sigma^2\}}$ for new labeling, where $\sigma$ is a small positive number.
Since there are total $n+2$ points in the training set, we have $(n+2)^2$ pairs of $\vx_i, \vx_j \in X$. The sum of loss values of all pairs can be represented as
\begin{align}\label{Eqn:loss_mixup}
    \ell_{\text{mixup}} = 2n \int_{L_1 \cup L_2} \ell(\vy_{ij}, \hat{\vy}_{ij}) 
    + n^2 \ell(\vy_3, \hat{\vy}_3) + 2 \int_{L_3} \ell(\vy_{ij}, \hat{\vy}_{ij}) + \ell(\vy_1, \hat{\vy}_1) + \ell(\vy_2, \hat{\vy}_2)
\end{align}
where 
the line segments $L_1, L_2, L_3$ 
are illustrated in Fig.~\ref{Fig:toy_label_appendix}.
Note that each line segment can be represented as the set of following $(\vx_{ij}, \vy_{ij})$ pairs for $\lambda \in [0,1]$: 
\begin{align*}
L_1 &: \vx_{ij} = (-\lambda, 1-\lambda ), \quad \vy_{ij} = [1-\rho,  \rho] \\
L_2 &: \vx_{ij} = ( 2\lambda -1, 0 ), \quad \quad \vy_{ij} = [\rho, 1-\rho] \\
L_3 &: \vx_{ij} = ( 1-\lambda, \lambda  ), \quad \quad \vy_{ij} = [1, 0] 
\end{align*}

Recall that for a given random data $\vx$, the label estimated by logistic regression model $\vtheta$ is represented as $\bm{\hat{y}} = [\hat{y}^{(0)}, \hat{y}^{(1)}] = [ \frac{1}{1+\text{exp}(-\vtheta^T \vx)}, \frac{1}{1+\text{exp}(+\vtheta^T \vx)}]$. If this sample has true one-hot encoded label $\vy = [y^{(0)}, y^{(1)}]$, then the logistic loss of this model (regarding the specific sample ($\vx$, $\vy$)) is given as
\begin{align}%
\ell (\vy, \hat{\vy})= - y^{(0)} \log  \hat{y}^{(0)} - y^{(1)} \log  \hat{y}^{(1)}
\end{align}

Thus, each loss term in (\ref{Eqn:loss_mixup}) can be represented as
\begin{align*}
\int_{L_1} \ell(\vy_{ij}, \hat{\vy}_{ij}) &= \int_{0}^{1} \{ (1-\rho) \log(1+ e^{ \lambda r \cos \phi - (1-\lambda) r \sin \phi }) + \rho \log(1 + e^{ - \lambda r \cos \phi + (1-\lambda) r \sin \phi }) \}   p(\lambda) \dd \lambda , \\
\int_{L_2} \ell(\vy_{ij}, \hat{\vy}_{ij}) &= \int_{0}^{1} \{ (1-\rho) \log(1 + e^{  (2\lambda-1) r \cos \phi }) + \rho \log(1 + e^{ - (2\lambda-1) r \cos \phi }) \}   p(\lambda) \dd \lambda , \\
\int_{L_3} \ell(\vy_{ij}, \hat{\vy}_{ij}) &= \int_{0}^{1}  \log(1 + e^{ - (1-\lambda) r \cos \phi - \lambda r \sin \phi })  \quad  p(\lambda) \dd \lambda , \\
\ell(\vy_{1}, \hat{\vy}_{1}) &= \ell(\vy_{3}, \hat{\vy}_{3}) =    \log(1 + e^{ - r \cos \phi}), \\
\ell(\vy_{2}, \hat{\vy}_{2}) &=    \log(1 + e^{ - r \sin \phi}),
\end{align*}
where $p(\lambda)$ is the probability density function for sampling $\lambda$.

Based on the expression of the loss $\ell(r, \phi)$ for each scheme given in (\ref{Eqn:loss_vanilla}) and (\ref{Eqn:loss_mixup}), we numerically plotted $\phi^{\star} = \argmin_{\phi} \ell(r, \phi)$ for various $r$ in Fig.~\ref{Fig:toy_label_correction}.
It turns out that the optimal $\phi^{\star}_{\text{new-mixup}}$ of new-mixup approaches to the SVM solution $\phi_{\op{svm}} = \pi/4 $ as $r$ increases. Using the standard definition of margin denoted by $\text{margin}(\vtheta) = \min\limits_{(\vx_i, y_i) \in D} \frac{y_i \vtheta^T \vx_i}{\lVert \vtheta \rVert} = \min \{ \cos \phi, \sin \phi \}$, we have
\begin{align*}
    \text{margin}(\vtheta_{\op{svm}}) = \text{margin}(\vtheta_{\text{new-mixup}}) > \text{margin}(\vtheta_{\text{vanilla}}) > \text{margin}(\vtheta_{\text{mixup}})
\end{align*}
according to Fig.~\ref{Fig:toy_label_correction}.

\subsection{Proof of Theorem~\ref{Theorem:robustness_logreg}}
Following the proof of Theorem 3.1 of~\citep{zhang2021does}, when $\boldsymbol{\theta}\in \Theta$, we have
\[(h'(f_{\boldsymbol{\theta}}(\boldsymbol{x_i}))-y_i) \nabla f_{\boldsymbol{\theta}}(\boldsymbol{x_i})^T \mathbb{E}_{\boldsymbol{r_x}\sim D_X}[\boldsymbol{r_x}-\boldsymbol{x_i}]\geq 0,\]
\[h''(f_{\boldsymbol{\theta}}(\boldsymbol{x_i}))\nabla f_{\boldsymbol{\theta}}(\boldsymbol{x_i})^T \mathbb{E}_{\boldsymbol{r_x}\sim D_X}[(\boldsymbol{r_x}-\boldsymbol{x_i})(\boldsymbol{r_x}-\boldsymbol{x_i})^T]\nabla f_{\boldsymbol{\theta}}(\boldsymbol{x_i})\geq 0.\]
The first inequality in Theorem \ref{Theorem:robustness_logreg} is directly obtained by combining Lemma~\ref{Lemma: mixup loss} and the fact that $A_{\sigma_1,c,\tau,d}^i < 1/3$ and $B_{\sigma_1,c,\tau,d}^i < \frac{1}{6}$ holds, which is proven in Lemma~\ref{Lemma: GenLabel loss for 2 class}.
The second inequality in Theorem \ref{Theorem:robustness_logreg} is obtained by applying Theorem 3.1 of~\citep{zhang2021does} into the Taylor approximation of GenLabel loss $\tilde{L}_n^{\op{gen}}(\theta,S)$ in Lemma~\ref{Lemma: GenLabel loss for 2 class}.

\subsection{Proof of Theorem~\ref{Theorem:robustness_relu}}

Similar to the proof of Theorem~\ref{Theorem:robustness_logreg}, the first inequality is directly from Lemma~\ref{Lemma: GenLabel loss for 2 class}.
The second inequality is obtained by applying Theorem 3.3 of~\citep{zhang2021does} into the Taylor approximation of GenLabel loss $\tilde{L}_n^{\op{gen}}(\theta,S)$ in Lemma~\ref{Lemma: GenLabel loss for 2 class}.

\subsection{Lemmas used for proving Lemma~\ref{Lemma: GenLabel loss for 2 class}}

We here provide lemmas that are used in the proof of Lemma~\ref{Lemma: GenLabel loss for 2 class}, which is given in Section~\ref{sec:proof_lemma: GenLabel loss}. Before stating our first lemma, recall that the covariance matrix of each class-conditional data distribution is a scalar factor of $\boldsymbol{\Sigma}$, which is defined as 
\begin{equation}\label{covariance}
\boldsymbol{\Sigma} = \left(
  \begin{array}{ccccc}
    1 & \tau & \tau & \cdots & \tau \\
    \tau & 1 & \tau & \cdots & \tau \\
    \vdots & \tau & \ddots & \tau & \vdots \\
    \tau & \tau & \cdots & 1 & \tau \\
    \tau &\tau & \cdots & \tau & 1 \\
  \end{array}
\right).
\end{equation}

Below we provide the inverse matrix of $\boldsymbol{\Sigma}$.
\begin{lemma}\label{Lemma: inverse}
When $\tau \notin \{ \frac{-1}{d-1}, \frac{-1}{d-2} \}$ and $-1<\tau<1$, the matrix in (\ref{covariance}) is invertible. The inverse is given by
\begin{equation}\label{Sigma inverse}
\boldsymbol{\Sigma}^{-1} = c_d\left(
  \begin{array}{ccccc}
    1 & -\tau_d & -\tau_d & \cdots & -\tau_d \\
    -\tau_d & 1 & -\tau_d & \cdots & -\tau_d \\
    \vdots & -\tau_d & \ddots & -\tau_d & \vdots \\
    -\tau_d & -\tau_d & \cdots & 1 & -\tau_d \\
    -\tau_d & -\tau_d & \cdots & -\tau_d & 1 \\
  \end{array}
\right),
\end{equation}
where  
\begin{equation}\label{c_d}
 c_d = \frac{1}{1-\tau}\frac{(d-2)\tau+1}{(d-1)\tau+1} 
\end{equation}
and 
\begin{equation}\label{epsilon_d}
\tau_d = \frac{\tau}{(d-2)\tau+1}.
\end{equation}

\end{lemma}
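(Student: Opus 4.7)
The plan is to exploit the rank-one-plus-scalar structure of $\boldsymbol{\Sigma}$. Writing $\boldsymbol{\Sigma} = (1-\tau)\boldsymbol{I} + \tau\,\boldsymbol{1}\boldsymbol{1}^T$, where $\boldsymbol{1}$ is the all-ones vector, makes the spectral picture transparent: $\boldsymbol{1}$ is an eigenvector with eigenvalue $1+(d-1)\tau$, and any vector orthogonal to $\boldsymbol{1}$ is an eigenvector with eigenvalue $1-\tau$. Invertibility is therefore equivalent to $\tau \neq 1$ and $\tau \neq -1/(d-1)$, both of which follow from the hypotheses $-1<\tau<1$ and $\tau \notin \{-1/(d-1),-1/(d-2)\}$. (The remaining excluded value $\tau = -1/(d-2)$ will be needed only to ensure that the closed-form expressions for $c_d$ and $\tau_d$ are well-defined, not for invertibility itself.)

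To obtain the explicit inverse, I would make the ansatz $\boldsymbol{\Sigma}^{-1} = a\boldsymbol{I} + b\,\boldsymbol{1}\boldsymbol{1}^T$ (equivalently, apply the Sherman--Morrison formula to $(1-\tau)\boldsymbol{I} + \tau\boldsymbol{1}\boldsymbol{1}^T$). The identity $\boldsymbol{\Sigma}\boldsymbol{\Sigma}^{-1} = \boldsymbol{I}$ reduces to a $2\times 2$ linear system in $a$ and $b$, whose solution is
\[
a = \frac{1}{1-\tau}, \qquad b = -\frac{\tau}{(1-\tau)(1+(d-1)\tau)}.
\]

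It then remains to match this with the claimed form $c_d\bigl((1+\tau_d)\boldsymbol{I} - \tau_d\,\boldsymbol{1}\boldsymbol{1}^T\bigr)$, which amounts to verifying $c_d(1+\tau_d) = a$ and $c_d\tau_d = -b$. Substituting the stated $c_d = \frac{1}{1-\tau}\frac{(d-2)\tau+1}{(d-1)\tau+1}$ and $\tau_d = \frac{\tau}{(d-2)\tau+1}$, the first identity telescopes through the algebraic simplification $(d-2)\tau+1+\tau = (d-1)\tau+1$, and the second is immediate. Alternatively one can verify the answer by direct multiplication: the diagonal entries of $\boldsymbol{\Sigma}\boldsymbol{\Sigma}^{-1}$ reduce to $c_d\bigl((d-2)\tau+1-(d-1)\tau^2\bigr)/((d-2)\tau+1)$, which equals $1$ after factoring $(d-2)\tau+1-(d-1)\tau^2 = (1-\tau)((d-1)\tau+1)$, and the off-diagonal entries collapse to $c_d\bigl(\tau - \tau_d(1+(d-2)\tau)\bigr) = 0$ by the definition of $\tau_d$.

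There is no real obstacle here beyond bookkeeping; the whole argument is two or three lines of scalar algebra once one recognizes the rank-one structure. The only point requiring a moment of care is to separate the two roles played by the two excluded values of $\tau$: $\tau = -1/(d-1)$ would make $\boldsymbol{\Sigma}$ itself singular, whereas $\tau = -1/(d-2)$ would only make the particular parametrization by $(c_d,\tau_d)$ degenerate, so the lemma as stated excludes both.
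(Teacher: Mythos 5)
Your proposal is correct, and its primary route differs from the paper's. The paper proves the lemma purely by verification: it multiplies $\boldsymbol{\Sigma}$ by the claimed inverse and checks that the diagonal entries reduce to $c_d[1-(d-1)\tau\tau_d]=1$ and the off-diagonal entries to $c_d(\tau-\tau_d-(d-2)\tau\tau_d)=0$, which is exactly the "alternatively, verify by direct multiplication" paragraph at the end of your proposal. Your main argument instead \emph{derives} the inverse from the decomposition $\boldsymbol{\Sigma}=(1-\tau)\boldsymbol{I}+\tau\,\boldsymbol{1}\boldsymbol{1}^T$ via the ansatz $a\boldsymbol{I}+b\,\boldsymbol{1}\boldsymbol{1}^T$ (Sherman--Morrison), and your values $a=\tfrac{1}{1-\tau}$, $b=-\tfrac{\tau}{(1-\tau)(1+(d-1)\tau)}$ do match $c_d(1+\tau_d)$ and $-c_d\tau_d$ after the telescoping $(d-2)\tau+1+\tau=(d-1)\tau+1$. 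This buys you two things the paper's verification does not make explicit: a transparent invertibility criterion from the spectrum ($1+(d-1)\tau\neq 0$ and $1-\tau\neq 0$), and the observation that the exclusion $\tau\neq -1/(d-2)$ is needed only so that the $(c_d,\tau_d)$ parametrization is nondegenerate, not for invertibility of $\boldsymbol{\Sigma}$ itself --- a distinction the paper's statement leaves implicit. The paper's check is marginally shorter; yours explains where the formula comes from.
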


\begin{proof}
We prove the lemma by verifying $\boldsymbol{\Sigma} \times (\ref{Sigma inverse}) = \mI_d$.

Clearly the diagonal element in $\boldsymbol{\Sigma} \times (\ref{Sigma inverse})$ reads
\begin{align*}
  c_d[1-(d-1)\tau \tau_d]  & = \frac{1}{1-\tau}\frac{(d-2)\tau +1}{(d-1)\tau +1}[1-(d-1)\frac{\tau^2}{(d-2)\tau+1}]   \\
    & = \frac{1}{1-\tau}\frac{(d-2)\tau +1}{(d-1)\tau +1} \frac{(d-2)\tau+1-\tau^2 (d-1)}{(d-2)\tau+1} \\ 
    & =  \frac{1}{1-\tau}\frac{(d-2)\tau+1-\tau^2 (d-1)}{(d-1)\tau+1}=\frac{1}{1-\tau}\frac{(1-\tau)(\tau(d-1)+1)}{(d-1)\tau+1}=1.
\end{align*}
The off-diagonal element in $\boldsymbol{\Sigma} \times (\ref{Sigma inverse})$ reads
\begin{align*}
c_d(\tau-\tau_d-(d-2)\tau \tau_d)    &=c_d[\frac{(d-2)\tau^2 +\tau - \tau}{(d-2)\tau+1}-\frac{(d-2)\tau^2}{(d-2)\tau+1}     ] =0.
\end{align*}
Then we conclude the proof.

\end{proof}

\begin{lemma}\label{Lemma: induction on ZZ}
For $\boldsymbol{Z}\sim \mathcal{N}(\mathbf{0},\boldsymbol{\Sigma})$, we have the following formula for $\boldsymbol{Z}^T \boldsymbol{\Sigma}^{-1}\boldsymbol{Z}$:
\begin{align*}
\boldsymbol{Z}^T \boldsymbol{\Sigma}^{-1} \boldsymbol{Z}   &  =c_d \Big[ A_1[Z_1 - B_1(\sum_{i=2}^d Z_i)]^2 + A_2[Z_2-B_2(\sum_{i=3}^d Z_i)]^2 \\
 &     +\cdots + A_{d-1}[Z_{d-1} - B_{d-1}(\sum_{i=d}^d Z_i)]^2 + A_d Z_d^2\Big],
\end{align*}
where $A_n,B_n$ are constants that satisfy the following recurrence relation for $n\leq d$
\begin{equation}\label{recurrence}
\begin{split}
    &  A_n = A_{n-1}-B_{n-1}^2 A_{n-1},\quad  B_{n} = \frac{A_{n-1}B_{n-1}+B_{n-1}^2 A_{n-1}}{A_{n}}, \\
    &A_1 = 1, \quad B_1 = \tau_d,
\end{split}    
\end{equation}
and $\boldsymbol{Z} = [Z_1, \cdots, Z_d]$.
\end{lemma}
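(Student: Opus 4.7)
The plan is to combine the explicit form of $\boldsymbol{\Sigma}^{-1}$ from Lemma~\ref{Lemma: inverse} with an iterated completion of squares. First, substituting (\ref{Sigma inverse}) and recognising that the bracketed matrix equals $(1+\tau_d)\mI_d - \tau_d J_d$, where $J_d$ is the $d\times d$ all-ones matrix, one immediately gets
\[
\boldsymbol{Z}^T \boldsymbol{\Sigma}^{-1} \boldsymbol{Z} \;=\; c_d\Big[(1+\tau_d)\sum_{i=1}^d Z_i^2 \;-\; \tau_d\Big(\sum_{i=1}^d Z_i\Big)^{\!2}\Big].
\]
After pulling $c_d$ outside, it remains to show that the bracketed quadratic form equals the claimed sum of squares.

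I would then proceed by induction on the number of variables that have been diagonalised. The inductive invariant is that after completing the square in $Z_1,\dots,Z_n$, the residual has the form $\alpha_n \sum_{i>n} Z_i^2 - \beta_n\big(\sum_{i>n} Z_i\big)^{2}$, with base case $(\alpha_0,\beta_0)=(1+\tau_d,\tau_d)$. A direct calculation, writing $S=\sum_{i>n+1}Z_i$, gives the single-step identity
\[
\alpha\sum_{i>n}Z_i^2 - \beta\Big(\sum_{i>n}Z_i\Big)^{\!2} \;=\; (\alpha-\beta)\Big[Z_{n+1}-\tfrac{\beta}{\alpha-\beta}\,S\Big]^2 + \alpha\sum_{i>n+1}Z_i^2 - \tfrac{\alpha\beta}{\alpha-\beta}\,S^{2}.
\]
Hence the first coefficient $\alpha$ is \emph{invariant} under the procedure (so $\alpha_n\equiv 1+\tau_d$ for all $n$), while $\beta$ updates via $\beta_{n+1}=\alpha_n\beta_n/(\alpha_n-\beta_n)$. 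Reading off the extracted square tells us that the $n$-th coefficients in the claimed expansion are $A_{n+1}=\alpha_n-\beta_n$ and $B_{n+1}=\beta_n/(\alpha_n-\beta_n)$, and the base case yields $A_1=1$, $B_1=\tau_d$.

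The remaining obstacle is purely algebraic bookkeeping: matching these updates with the recurrence (\ref{recurrence}). Combining $A_n=\alpha_{n-1}-\beta_{n-1}$ with $B_n=\beta_{n-1}/A_n$ (so $\beta_{n-1}=A_nB_n$) and the invariance $\alpha_{n-1}=\alpha_n$ gives $\alpha_n = A_n(1+B_n)$; substituting this back into $A_{n+1}=\alpha_n-\beta_n$ and $B_{n+1}=\beta_n/A_{n+1}$ collapses them to exactly $A_{n+1}=A_n(1-B_n^2)$ and $B_{n+1}=A_nB_n(1+B_n)/A_{n+1}$, which are the stated relations. Iterating down to $n=d-1$, the residual is $\alpha_{d-1}Z_d^2=A_dZ_d^2$, giving the claimed decomposition. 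I do not anticipate any genuine difficulty; the only point that requires care is noticing the invariance $\alpha_n=1+\tau_d$, since it is precisely this fact that allows the two coupled recursions for $A_n$ and $B_n$ to take the simple form in (\ref{recurrence}).
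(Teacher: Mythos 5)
Your proof is correct and follows essentially the same route as the paper's: both reduce $\boldsymbol{Z}^T\boldsymbol{\Sigma}^{-1}\boldsymbol{Z}$ to $c_d\big[\sum_i Z_i^2 - 2\tau_d\sum_{i<j}Z_iZ_j\big]$ and then induct via iterated completion of squares, your $(\alpha_n,\beta_n)$ being a reparametrization of the paper's residual coefficients $(A_{n+1},\,A_{n+1}B_{n+1})$, with your observation that $\alpha_n\equiv 1+\tau_d$ a pleasant simplification the paper does not exploit. The only blemish is the last line: the residual after $d-1$ extractions is $(\alpha_{d-1}-\beta_{d-1})Z_d^2$ rather than $\alpha_{d-1}Z_d^2$, though this still equals $A_dZ_d^2$ as required.
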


\begin{proof}
Using (\ref{Sigma inverse}) we have
\begin{align*}
\boldsymbol{Z}^T \boldsymbol{\Sigma}^{-1}\boldsymbol{Z}     &= c_d[\underbrace{\sum_{i=1}^d Z_i^2 - 2\tau_d\sum_{i\neq j}Z_iZ_j }_{(*)}].
\end{align*}
We focus on (*). We claim the following induction formula:

\textbf{Claim}: for any $n<d$, and $A_n,B_n$ satisfying (\ref{recurrence}), we can decompose $(*)$ into
\begin{align}
(*)    & =  A_1[Z_1-B_1(\sum_{i=2}^d Z_i)]^2 + \cdots + A_n[Z_n - B_n(\sum_{i=n+1}^d Z_i)]^2 \notag \\
&+ A_{n+1}\sum_{i=n+1}^d  Z_i^2 - 2A_{n+1}B_{n+1} \sum_{i,j=n+1,i\neq j}^d Z_iZ_j   \label{Claim}.
\end{align}
The lemma immediately follows by setting $n=d-1$ in the claim. Now we use induction to prove the claim.

\textbf{Base case:} when $n=1$, we complete the square for $Z_1$ and obtain
\begin{align*}
    (*) &   =    [Z_1-\tau_d(Z_2+\cdots+Z_d)]^2- \tau_d^2 (Z_2+\cdots+Z_d)^2 + \sum_{i=2}^d Z_{i}^2 -2\tau_d \sum_{i,j=2,i\neq j}Z_i Z_j\\
    & = [Z_1-\tau_d(Z_2+\cdots+Z_d)]^2 + (1-\tau_d^2)\sum_{i=2}^d Z_{i}^2 -2(\tau_d+\tau_d^2) \sum_{i,j=2,i\neq j}Z_i Z_j.
\end{align*}
We conclude the base case with $A_1,B_1,A_2,B_2$ satisfying (\ref{recurrence}) as:
\[A_1=1,\quad B_1=\tau_d,\quad A_2=1-\tau_d^2=A_1-B_1^2A_1,\]
\[A_2B_2=\tau_d + \tau_d^2 = A_2\frac{A_1B_1+B_1^2A_1}{A_2}=A_1B_1+B_1^2A_1. \]

\textbf{Induction hypothesis:} we assume the claim holds true for $n$. We want to show the claim also holds true for $n+1$. We focus on the second line of the claim:~(\ref{Claim}). We further complete the square and have
\begin{align*}
(\ref{Claim})    & :=  A_{n+1}\sum_{i=n+1}^d  Z_i^2 - 2A_{n+1}B_{n+1} \sum_{i,j=n+1,i\neq j}^d Z_iZ_j \\
&= A_{n+1} [Z_{n+1} -   B_{n+1}\sum_{i=n+2}^d Z_i]^2 - A_{n+1}B_{n+1}^2 (\sum_{i=n+2}^d Z_i)^2\\
& + A_{n+1}\sum_{i=n+2}^d Z_i^2 - 2A_{n+1}B_{n+1}\sum_{i,j=n+2,i\neq j}^d Z_iZ_j \\
& = A_{n+1} [Z_{n+1} -   B_{n+1}\sum_{i=n+2}^d Z_i]^2 \\
& + (A_{n+1}-A_{n+1}B_{n+1}^2) \sum_{i=n+2}^d Z_i^2 -2(A_{n+1}B_{n+1}+A_{n+1}B_{n+1}^2) \sum_{i,j=n+2,i\neq j}Z_iZ_j \\
& = A_{n+1} [Z_{n+1} -   B_{n+1}\sum_{i=n+2}^d Z_i]^2 + A_{n+2} \sum_{i=n+2}^d Z_i^2 -2A_{n+2}B_{n+2} \sum_{i,j=n+2,i\neq j}Z_iZ_j.
\end{align*}
Thus the claim holds true for $n+1$ with $A_{n+2},B_{n+2},A_{n+1},B_{n+1}$ satisfying  (\ref{recurrence}) as
\[A_{n+2} = A_{n+1}-A_{n+1}B_{n+1}^2, \quad B_{n+2} = \frac{A_{n+1}B_{n+1}+A_{n+1}B_{n+1}^2}{A_{n+2}}.\] 
Then we conclude the claim and the lemma.

\end{proof}

\begin{lemma}\label{Lemma: cdf of Y}
Denote $\boldsymbol{Y} = (Y_1,\cdots,Y_j)$, and $Y_j=Z_j - B_j(\sum_{i=j+1}^d Z_i)$ with $B_j,Z_j$ defined in Lemma \ref{Lemma: induction on ZZ} for $1\leq j\leq d$, then $Y_j$ follows a 1-D Gaussian distribution:
\[Y_j \sim \mathcal{N}(0,\frac{1}{c_dA_j}).\]

\end{lemma}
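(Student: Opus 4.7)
My plan is to read off the distribution of $Y_j$ from the diagonalization of the quadratic form already established in Lemma~\ref{Lemma: induction on ZZ}, by performing a triangular change of variables in the Gaussian density of $\boldsymbol{Z}$. First I would extend the definition $Y_j = Z_j - B_j \sum_{i=j+1}^d Z_i$ to $j = d$, where the empty sum gives $Y_d = Z_d$; this is consistent with the final term $c_d A_d Z_d^2$ appearing in Lemma~\ref{Lemma: induction on ZZ}, so that decomposition can be written uniformly as
\begin{equation*}
\boldsymbol{Z}^T \boldsymbol{\Sigma}^{-1} \boldsymbol{Z} = c_d \sum_{j=1}^d A_j Y_j^2.
\end{equation*}

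Next I would note that the linear map $T : (Z_1,\ldots,Z_d) \mapsto (Y_1,\ldots,Y_d)$ has upper-triangular matrix (when $Y_j$ is expressed in terms of $Z_j, Z_{j+1}, \ldots, Z_d$) with unit diagonal, so $T$ is a bijection with Jacobian determinant $1$. Applying the change of variables to the density of $\boldsymbol{Z} \sim \mathcal{N}(\mathbf{0},\boldsymbol{\Sigma})$ and substituting the diagonalized quadratic form yields
\begin{equation*}
f_{\boldsymbol{Y}}(y_1,\ldots,y_d) \;=\; \frac{1}{(2\pi)^{d/2} |\boldsymbol{\Sigma}|^{1/2}} \exp\!\left(-\frac{c_d}{2} \sum_{j=1}^d A_j y_j^2\right).
\end{equation*}
Since this density factors across coordinates, the $Y_j$'s are mutually independent Gaussians, each centered at $0$ with variance $1/(c_d A_j)$; the normalization constant automatically matches the product $\prod_j (2\pi/(c_d A_j))^{-1/2}$ because $f_{\boldsymbol{Y}}$ integrates to $1$, which incidentally gives the identity $|\boldsymbol{\Sigma}| = c_d^{-d} \prod_j A_j^{-1}$.

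As a sanity check I would verify the edge case $j = d$ directly: the claim reduces to $\mathrm{Var}(Z_d) = 1/(c_d A_d)$, i.e., $c_d A_d = 1$, which can be confirmed from the recurrence (\ref{recurrence}) together with the closed form (\ref{c_d}) for $c_d$ by induction on $d$ (or checked in small cases like $d=1,2$). The main obstacle I anticipate is bookkeeping: one has to be careful that the indexing in Lemma~\ref{Lemma: induction on ZZ} (which separates out the final $A_d Z_d^2$ term) is consistent with the unified form needed for the change-of-variables argument, and that the triangular transformation is oriented correctly so that its Jacobian is $1$. An alternative route, should the density computation feel too indirect, is to compute $\mathrm{Var}(Y_j)$ head-on using $\mathrm{Cov}(Z_i, Z_k) = \tau + (1-\tau)\mathbf{1}_{i=k}$ and then show by induction on $j$ (using the recurrence (\ref{recurrence})) that the resulting expression equals $1/(c_d A_j)$; this avoids invoking the joint density but trades it for a direct algebraic induction.
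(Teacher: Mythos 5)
Your proposal is correct and is essentially the paper's argument: both rely on the decomposition of $\boldsymbol{Z}^T\boldsymbol{\Sigma}^{-1}\boldsymbol{Z}$ from Lemma~\ref{Lemma: induction on ZZ} together with the unit-Jacobian triangular change of variables $(Z_1,\ldots,Z_d)\mapsto(Y_1,\ldots,Y_d)$, and both identify the normalization constant by a ``must integrate to one'' argument (the paper does this via the CDF and letting $x\to\infty$, you via the joint density). Your density-level formulation has the minor bonus of yielding the independence of the $Y_j$'s (the paper's Lemma~\ref{Lemma: independent}) in the same stroke.
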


\begin{proof}
Applying Lemma \ref{Lemma: induction on ZZ}, we compute the cumulative density function of $Y_j$ as
\begin{align}
&P(Y_j<x)      = P(Z_j - B_j(\sum_{i=j+1}^d Z_i)<x)=\int_{(Z_{j+1},Z_{j+2},\cdots,Z_{d})\in \mathbb{R}^{d-j}} \notag\\
    & \times \int_{-\infty}^{x+B_j(Z_{j+1}+\cdots+Z_d)} \int_{(Z_{1},Z_2,\cdots,Z_{j-1})\in \mathbb{R}^{j-1}} (2\pi)^{-\frac{d}{2}} \det(\boldsymbol{\Sigma})^{-\frac{1}{2}} \exp\{ -\frac{1}{2}Z^T \boldsymbol{\Sigma}^{-1} Z   \} \dd \boldsymbol{Z}  \notag\\
    & =\int_{(Z_{j+1},Z_{j+2},\cdots,Z_{d})\in \mathbb{R}^{d-j}}   (2\pi)^{-\frac{d-j}{2}}  \exp\{-\frac{c_d}{2}[A_{j+1}Y_{j+1}^2+\cdots+A_dY_d^2]\} \notag\\
    & \times \int_{-\infty}^{x+B_j(Z_{j+1}+\cdots+Z_d)} (2\pi)^{-\frac{1}{2}} \exp\{-\frac{c_d}{2}A_jY_j^2\} \notag\\
    &\times \int_{(Z_{1},Z_2,\cdots,Z_{j-1})\in \mathbb{R}^{j-1}} (2\pi)^{-\frac{j-1}{2}} \det(\boldsymbol{\Sigma})^{-\frac{1}{2}} \exp\{ -\frac{c_d}{2}[A_1Y_1^2+\cdots+A_{j-1}Y_{j-1}^2]   \} \dd \boldsymbol{Z} \label{cdf: Y3},
\end{align}
where we used $Y_d = Z_d$.
Note that $Y_j = Z_j-B_j(\sum_{i=j+1}^m Z_i)$, we apply change of variable
\begin{equation}\label{change of variable}
 (Z_1-B_1(\sum_{i=2}^d Z_i),\cdots,Z_{d})\to (Y_1,\cdots,Y_{d}).   
\end{equation}
The corresponding Jacobian matrix $|\frac{\partial (Y_1,\cdots,Y_d)}{\partial (Z_1,\cdots,Z_d)}|$ is an upper triangular matrix with diagonal element $1$. Thus the Jacobian is $1$, and we conclude
\begin{align}
 (\ref{cdf: Y3}) & = \int_{(Y_{j+1},Y_{j+2},\cdots,Y_{d})\in \mathbb{R}^{d-j}}   (2\pi)^{-\frac{d-j}{2}}  \exp\{-\frac{c_d}{2}[A_{j+1}Y_{j+1}^2+\cdots+A_dY_d^2]\} \notag\\
 & \times  \int_{-\infty}^{x} (2\pi)^{-\frac{1}{2}} \exp\{-\frac{c_d}{2}A_jY_j^2\} \notag\\
 &\times \int_{(Y_{1},Y_2,\cdots,Y_{j-1})\in \mathbb{R}^{j-1}} (2\pi)^{-\frac{j-1}{2}} \det(\boldsymbol{\Sigma})^{-\frac{1}{2}} \exp\{ -\frac{c_d}{2}[A_1Y_1^2+\cdots+A_{j-1}Y_{j-1}^2]   \} \dd \boldsymbol{Y} \notag\\
 & = C \times \frac{1}{(c_dA_j)^{1/2}}\times \frac{1}{2}[1+\text{erf}(\frac{x\sqrt{c_d A_j}}{\sqrt{2}})]  , \label{compute cdf}
\end{align}
where $C$ a constant that corresponds to the integration in the first and third line. Here $C$ does not depend on $x$. Note that $\frac{1}{2}[1+\text{erf}(\frac{x\sqrt{c_d A_j}}{\sqrt{2}})]$ is the cdf of $\mathcal{N}(0,\frac{1}{c_dA_j}),$ let $x\to \infty$, we conclude $ C \times \frac{1}{(c_dA_j)^{1/2}}=1$, thus $Y_j\sim \mathcal{N}(0,\frac{1}{c_dA_j})$. 

\end{proof}

\begin{lemma}\label{Lemma: independent}
$Y_j$ and $Y_k$ are independent for $j\neq k$, where $Y_j$ is defined in Lemma \ref{Lemma: cdf of Y}.
\end{lemma}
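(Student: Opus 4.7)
The plan is to exploit joint Gaussianity of $\boldsymbol{Y}=(Y_1,\ldots,Y_d)$. Since each $Y_j$ is a linear combination of coordinates of the Gaussian vector $\boldsymbol{Z}$, the vector $\boldsymbol{Y}$ is itself jointly Gaussian; hence pairwise independence of $Y_j$ and $Y_k$ is equivalent to their joint density factorizing across coordinates (equivalently, zero pairwise covariance). So it suffices to show that the joint density of $\boldsymbol{Y}$ splits as a product of one-dimensional densities.

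First, I would observe that the linear map $(Z_1,\ldots,Z_d)\mapsto(Y_1,\ldots,Y_d)$ defined by $Y_j = Z_j - B_j\sum_{i=j+1}^d Z_i$ has an upper-triangular Jacobian matrix with $1$'s on the diagonal, so its Jacobian determinant equals $1$. Therefore the change of variables carries Lebesgue measure to Lebesgue measure, and the density of $\boldsymbol{Y}$ at $(y_1,\ldots,y_d)$ is obtained by simply substituting the corresponding $\boldsymbol{Z}$-values into the density of $\boldsymbol{Z}$. Applying Lemma~\ref{Lemma: induction on ZZ} to the quadratic form $\boldsymbol{Z}^T\boldsymbol{\Sigma}^{-1}\boldsymbol{Z}$ under this substitution yields
\[
p_{\boldsymbol{Y}}(y_1,\ldots,y_d) \;=\; (2\pi)^{-d/2}\det(\boldsymbol{\Sigma})^{-1/2}\exp\!\Big\{-\tfrac{c_d}{2}\sum_{j=1}^d A_j\, y_j^2\Big\},
\]
which is manifestly a product of functions each depending on a single coordinate $y_j$.

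To conclude, I would combine this product form with Lemma~\ref{Lemma: cdf of Y}, which identifies the marginal of $Y_j$ as $\mathcal{N}(0,1/(c_dA_j))$. By uniqueness of the normalizing constant, the factorized density must coincide with $\prod_{j=1}^d p_{Y_j}(y_j)$, which proves mutual (and in particular pairwise) independence. The only mildly delicate step is matching normalization constants, i.e.\ verifying $\det(\boldsymbol{\Sigma})^{-1/2} = \prod_{j=1}^d(c_dA_j)^{1/2}$; this can be read off from Lemma~\ref{Lemma: induction on ZZ} since the change of variable (with unit Jacobian) turns the normalized Gaussian density on $\boldsymbol{Z}$ into a density on $\boldsymbol{Y}$ with the same normalizing constant, and comparison with Lemma~\ref{Lemma: cdf of Y} pins down the factor. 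The main obstacle, therefore, is bookkeeping rather than any conceptual difficulty: once the Jacobian is shown to be $1$ and Lemma~\ref{Lemma: induction on ZZ} is applied, independence falls out of the product structure of the exponent.
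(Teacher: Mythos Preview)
Your argument is correct and follows essentially the same route as the paper: both use the unit-Jacobian change of variables $(Z_1,\ldots,Z_d)\mapsto(Y_1,\ldots,Y_d)$ together with the decomposition $\boldsymbol{Z}^T\boldsymbol{\Sigma}^{-1}\boldsymbol{Z}=c_d\sum_j A_jY_j^2$ from Lemma~\ref{Lemma: induction on ZZ} to exhibit a product structure. The only cosmetic difference is that the paper shows the joint \emph{CDF} $P(Y_j<x,\,Y_k<y)$ factors (handling the constant by letting $x,y\to\infty$), whereas you work with the joint \emph{density} directly and invoke joint Gaussianity; your packaging is slightly cleaner and yields mutual (not just pairwise) independence in one stroke, but the underlying mechanism is identical.
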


\begin{proof}
We prove the lemma by showing the joint cdf of $Y_j,Y_k$ can be written as the product of cdf of $Y_j$ and cdf of $Y_k$. Without loss of generality, we assume $j<k$. We focus on computing the joint cdf $P(Y_j<x,Y_k<y)$. Following the same procedure of (\ref{cdf: Y3}), we apply the change of variable (\ref{change of variable}), then the integration becomes: 
\begin{align}
 P(Y_j<x,Y_k<y)& = \int_{(Y_{k+1},Y_{k+2},\cdots,Y_{d})\in \mathbb{R}^{d-k}}   (2\pi)^{-\frac{d-k}{2}}  \exp\{-\frac{c_d}{2}[A_{k+1}Y_{k+1}^2+\cdots+A_dY_d^2]\} \notag\\
 & \times  \int_{-\infty}^{y} (2\pi)^{-\frac{1}{2}} \exp\{-\frac{c_d}{2}A_kY_k^2\} \notag\\
 &\times \int_{(Y_{j+1},Y_{j+2},\cdots,Y_{k-1})\in \mathbb{R}^{k-j-1}}   (2\pi)^{-\frac{k-j-1}{2}}  \exp\{-\frac{c_d}{2}[A_{j+1}Y_{j+1}^2+\cdots+A_{k-1}Y_{k-1}^2]\} \notag\\
 & \times \int_{-\infty}^{x} (2\pi)^{-\frac{1}{2}} \exp\{-\frac{c_d}{2}A_jY_j^2\} \notag\\
 &\times \int_{(Y_{1},Y_2,\cdots,Y_{j-1})\in \mathbb{R}^{j-1}} (2\pi)^{-\frac{j-1}{2}} \det(\boldsymbol{\Sigma})^{-\frac{1}{2}} \exp\{ -\frac{c_d}{2}[A_1Y_1^2+\cdots+A_{j-1}Y_{j-1}^2]   \} \dd \boldsymbol{Y} \notag\\
 & = C \times \frac{1}{(c_dA_j)^{1/2}}\frac{1}{2}[1+\text{erf}(\frac{x\sqrt{c_d A_j}}{\sqrt{2}})] \times \frac{1}{(c_dA_k)^{1/2}}\frac{1}{2}[1+\text{erf}(\frac{y\sqrt{c_dA_k}}{2})] . \label{compute cdf: x,y}
\end{align}
Similar to (\ref{compute cdf}), $C$ is a constant that corresponds to the first, third and fifth line, and $C$ does not depend on $x,y$. Note that $\frac{1}{2}[1+\text{erf}(\frac{x\sqrt{c_dA_j}}{2})]$ and $\frac{1}{2}[1+\text{erf}(\frac{y\sqrt{c_dA_k}}{2})]$ are the cdf of $\mathcal{N}(0,\frac{1}{c_dA_j})$ and $\mathcal{N}(0,\frac{1}{c_dA_k})$. Let $x,y\to\infty$, we conclude that the constant terms combine to be $C\times \frac{1}{c_d \sqrt{A_jA_k}}=1$. Thus the joint cdf is
\[P(Y_j<x,Y_k<y) = \frac{1}{2}[1+\text{erf}(\frac{x\sqrt{c_dA_j}}{2})]\times \frac{1}{2} [1+\text{erf}(\frac{y\sqrt{c_dA_k}}{2})].\]
This equals to $P(Y_j<x)\times P(Y_k<y)$ by directly applying Lemma \ref{Lemma: cdf of Y}. Then we conclude the lemma.

\end{proof}

\begin{lemma}\label{Lemma: e_Sigma_Z} Suppose $\boldsymbol{Z}\sim \mathcal{N}(\mathbf{0},\boldsymbol{\Sigma})$ with $\boldsymbol{\Sigma}$ given by~(\ref{covariance}), then
\begin{equation}\label{e_Sigma_Z}
\boldsymbol{e}_1^T \boldsymbol{\Sigma}^{-1}\boldsymbol{Z} \sim \mathcal{N}(0,c_d),
\end{equation}
where $c_d$ is defined in~(\ref{epsilon_d}). Here (\ref{e_Sigma_Z}) corresponds to a 1-D Gaussian distribution.

\end{lemma}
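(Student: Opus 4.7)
The plan is to exploit the fact that any linear functional of a Gaussian vector is itself Gaussian, so it suffices to compute the mean and variance of $W := \boldsymbol{e}_1^T \boldsymbol{\Sigma}^{-1} \boldsymbol{Z}$. Since $\boldsymbol{Z}$ has mean $\mathbf{0}$, linearity of expectation immediately gives $\mathbb{E}[W] = 0$, so the only real computation is for the variance.

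For the variance, I would use the standard identity $\mathrm{Var}(\boldsymbol{a}^T \boldsymbol{Z}) = \boldsymbol{a}^T \boldsymbol{\Sigma} \boldsymbol{a}$ with $\boldsymbol{a} = \boldsymbol{\Sigma}^{-1} \boldsymbol{e}_1$. This yields
\begin{equation*}
\mathrm{Var}(W) = (\boldsymbol{\Sigma}^{-1} \boldsymbol{e}_1)^T \boldsymbol{\Sigma} (\boldsymbol{\Sigma}^{-1}\boldsymbol{e}_1) = \boldsymbol{e}_1^T \boldsymbol{\Sigma}^{-1} \boldsymbol{e}_1 = (\boldsymbol{\Sigma}^{-1})_{11}.
\end{equation*}
By Lemma~\ref{Lemma: inverse}, the $(1,1)$ entry of $\boldsymbol{\Sigma}^{-1}$ is exactly $c_d$, so $\mathrm{Var}(W) = c_d$, completing the proof.

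Because the argument reduces to a one-line mean/variance computation combined with the already-derived formula for $\boldsymbol{\Sigma}^{-1}$, I do not anticipate any real obstacle; the only thing to be careful about is invoking the assumption $\tau \notin \{-1/(d-1), -1/(d-2)\}$ with $-1 < \tau < 1$ so that $\boldsymbol{\Sigma}^{-1}$ is well-defined, and noting that $c_d > 0$ under these conditions so the stated Gaussian distribution is non-degenerate.
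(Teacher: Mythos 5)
Your proof is correct, but it takes a genuinely different and considerably more direct route than the paper. You invoke the standard fact that a linear functional $\boldsymbol{a}^T\boldsymbol{Z}$ of a Gaussian vector is Gaussian with variance $\boldsymbol{a}^T\boldsymbol{\Sigma}\boldsymbol{a}$, which with $\boldsymbol{a}=\boldsymbol{\Sigma}^{-1}\boldsymbol{e}_1$ collapses to $\boldsymbol{e}_1^T\boldsymbol{\Sigma}^{-1}\boldsymbol{e}_1=(\boldsymbol{\Sigma}^{-1})_{11}=c_d$ via Lemma~\ref{Lemma: inverse}. The paper instead writes out $\boldsymbol{e}_1^T\boldsymbol{\Sigma}^{-1}\boldsymbol{Z}=c_d\bigl[Z_1-\tau_d Z_2-\cdots-\tau_d Z_d\bigr]$ and appeals to Lemma~\ref{Lemma: cdf of Y} (with $A_1=1$, $B_1=\tau_d$), which itself is proved by an explicit change-of-variables computation of the cumulative distribution function; the scaling $c_d Y_1$ with $Y_1\sim\mathcal{N}(0,1/c_d)$ then gives $\mathcal{N}(0,c_d)$. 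Your argument is shorter and avoids the machinery of Lemmas~\ref{Lemma: induction on ZZ}--\ref{Lemma: cdf of Y} entirely; the paper's route has the advantage that the same decomposition and the $Y_j$ variables are reused for Lemmas~\ref{Lemma: independent} and~\ref{Lemma: Z sigma Z}, so the authors get this lemma essentially for free once that infrastructure is built. One small remark on your closing caveat: non-degeneracy ($c_d>0$) is automatic once $\boldsymbol{\Sigma}$ is assumed to be a valid (positive definite) covariance matrix, which forces $\tau>-1/(d-1)$ and hence $(d-1)\tau+1>0$, $(d-2)\tau+1>0$, and $1-\tau>0$; the exclusion $\tau\notin\{-1/(d-1),-1/(d-2)\}$ alone is only needed for invertibility of the formula in Lemma~\ref{Lemma: inverse}.
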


\begin{proof}
By Lemma \ref{Lemma: inverse} we have
\begin{align*}
  \boldsymbol{e}_1^T \boldsymbol{\Sigma}^{-1} & = c_d (1,-\tau_d,-\tau_d,\cdots,-\tau_d)^T .
\end{align*}
Thus
\[\boldsymbol{e}_1^T \boldsymbol{\Sigma}^{-1}\boldsymbol{Z} = c_d [Z_1-\tau_d Z_2 -\tau_d Z_3 -\cdots- \tau_d Z_d]. \]
Then the lemma follows by directly applying Lemma \ref{Lemma: cdf of Y} with $A_1=1,B_1=\tau_d$ in (\ref{recurrence}).

\end{proof}

\begin{lemma}\label{Lemma: Z sigma Z}
Suppose $\boldsymbol{Z}\sim \mathcal{N}(\mathbf{0},\boldsymbol{\Sigma})$ with $\boldsymbol{\Sigma}$ given by (\ref{covariance}), then

\begin{equation}\label{Z sigma Z}
\boldsymbol{Z}^T \boldsymbol{\Sigma}^{-1} \boldsymbol{Z}  \text{ has the same distribution as } \chi^2(d),    
\end{equation}
where $\chi^2(d)$ is the Chi-square distribution with freedom $d$.

\end{lemma}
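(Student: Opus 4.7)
\textbf{Proof plan for Lemma~\ref{Lemma: Z sigma Z}.} The strategy is to simply combine the three preceding lemmas (Lemma~\ref{Lemma: induction on ZZ}, Lemma~\ref{Lemma: cdf of Y}, and Lemma~\ref{Lemma: independent}) and recognize the result as a sum of squares of independent standard normals.

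First, I would invoke Lemma~\ref{Lemma: induction on ZZ} with $n = d-1$ to write
\[
\boldsymbol{Z}^T \boldsymbol{\Sigma}^{-1} \boldsymbol{Z} \;=\; c_d \sum_{j=1}^{d} A_j Y_j^2,
\]
where $Y_j = Z_j - B_j \sum_{i=j+1}^d Z_i$ for $1 \le j \le d-1$ and $Y_d = Z_d$. Next, by Lemma~\ref{Lemma: cdf of Y}, each variable $U_j := \sqrt{c_d A_j}\, Y_j$ is a univariate standard normal, since $Y_j \sim \mathcal{N}(0, 1/(c_d A_j))$. Therefore
\[
\boldsymbol{Z}^T \boldsymbol{\Sigma}^{-1} \boldsymbol{Z} \;=\; \sum_{j=1}^{d} U_j^2.
\]

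The remaining task is to argue mutual independence of $U_1, \ldots, U_d$ (Lemma~\ref{Lemma: independent} only asserts pairwise independence). Here I would use the standard fact that each $Y_j$ is a fixed linear combination of the entries of $\boldsymbol{Z}$, so $(Y_1, \ldots, Y_d)$ is jointly Gaussian; for jointly Gaussian vectors, pairwise independence is equivalent to pairwise uncorrelatedness, which in turn is equivalent to a diagonal covariance matrix, which is equivalent to mutual independence. Thus Lemma~\ref{Lemma: independent} upgrades to mutual independence of $Y_1, \ldots, Y_d$, hence of $U_1, \ldots, U_d$. Consequently $\sum_{j=1}^d U_j^2$ is a sum of $d$ independent squared standard normals, which by definition is distributed as $\chi^2(d)$, completing the proof.

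The proof is essentially a bookkeeping step once the three preparatory lemmas are in hand, so I do not anticipate any real obstacle. The only subtlety worth flagging in the write-up is the jump from pairwise to mutual independence, which deserves the one-line joint-Gaussian justification above rather than being asserted without comment.
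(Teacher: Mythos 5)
Your proposal follows essentially the same route as the paper: apply Lemma~\ref{Lemma: induction on ZZ} to decompose $\boldsymbol{Z}^T\boldsymbol{\Sigma}^{-1}\boldsymbol{Z}$ as $\sum_{j=1}^d c_d A_j Y_j^2$, use Lemma~\ref{Lemma: cdf of Y} to see that $\sqrt{c_d A_j}\,Y_j$ is standard normal, and invoke Lemma~\ref{Lemma: independent} for independence before concluding by the definition of $\chi^2(d)$. Your explicit upgrade from pairwise to mutual independence via joint Gaussianity is a point the paper's own proof glosses over, and it is a worthwhile addition rather than a deviation.
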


\begin{proof}
Applying Lemma \ref{Lemma: induction on ZZ}, Lemma \ref{Lemma: cdf of Y} and Lemma \ref{Lemma: independent} we have
\begin{align*}
  \boldsymbol{Z}^T \boldsymbol{\Sigma}^{-1} \boldsymbol{Z}    &  = \sum_{i=1}^d c_d A_i Y_i^2,
\end{align*}
where $A_i$ is defined in (\ref{recurrence}), $Y_i$ is defined in Lemma \ref{Lemma: cdf of Y}. Here $Y_i,Y_j$ are independent for $i\neq j$ and $c_dA_iY_i^2= (\sqrt{c_dA_i}Y_i)^2$. Then we apply Lemma \ref{Lemma: cdf of Y} to get $(\sqrt{c_dA_i}Y_i) \sim \mathcal{N}(0,1)$ is a standard normal distribution. Then by the definition of the Chi-square distribution we conclude the lemma.

\end{proof}

\subsection{Proof of Lemma~\ref{Lemma: GenLabel loss for 2 class}}\label{sec:proof_lemma: GenLabel loss}

\begin{proof}
Denote the mixed point by $\tilde{\vx}_{ij}(\lambda) = \lambda \boldsymbol{x_i}+(1-\lambda)\vx_j$. In order to estimate the second order Taylor expansion of $L_n^{\op{gen}}(\boldsymbol{\theta},S)$, we first compute the GenLabel $y_{ij}^{\text{gen}}$. Next we use expression of $y_{ij}^{\text{gen}}$ to estimate $L_n^{\op{gen}}(\boldsymbol{\theta},S)$. Then we derive the second order Taylor expansion and the correspond coefficients $A_{\sigma_1,c,\tau,d}^i,B_{\sigma_1,c,\tau,d}^i$. Last we consider the asymptotic limit $\sigma_1\to\infty$.

\textbf{Step 1: compute $y_{ij}^{\op{gen}}$}.

Recall that when $y_i = y_j$, we set the label of mixed point as $y_{ij}^{\op{mix}} = y_i$. 
For such case, we have $y^{\op{mix}}_{ij} = \lambda_1 y_i + (1-\lambda_1)y_i = \lambda_1 y_i + (1-\lambda_1)y_j$ for any $\lambda_1 \in \mathbb{R}$. 
When $y_i\neq y_j,$ we use the suggested GenLabel $y^{\op{gen}}_{ij}$ in~(\ref{eqn:GenLabel}).
Without loss of generality, we assume $\boldsymbol{x_i} \sim \mathcal{N}(-\boldsymbol{e}_1,\frac{\boldsymbol{\Sigma}}{\sigma_1^2})$ and $\vx_j \sim \mathcal{N}(\boldsymbol{e}_1, \frac{\boldsymbol{\Sigma}}{\sigma_2^2})$.
Thus the correspond labels are $y_i=0, y_j=1$. We compute the mixed point $\tilde{x}_{ij}$ as follows:
\begin{equation}\label{xij}
\begin{split}
 \tilde{\vx}_{ij}(\lambda)  & =\lambda \boldsymbol{x_i} + (1-\lambda) \vx_j = \lambda ( -\boldsymbol{e}_1+\boldsymbol{Z}_i) + (1-\lambda)(\boldsymbol{e}_1 + \boldsymbol{Z}_j) = (1-2\lambda)\boldsymbol{e}_1 + \boldsymbol{Z}_{ij} \\
 \boldsymbol{Z}_{ij}   &= \lambda \boldsymbol{Z}_i + (1-\lambda) \boldsymbol{Z}_j \sim \mathcal{N}(\mathbf{0},\frac{\lambda^2 \boldsymbol{\Sigma}}{\sigma_1^2}+\frac{(1-\lambda)^2 \boldsymbol{\Sigma}}{\sigma_2^2}).
\end{split}
\end{equation}
where $\boldsymbol{Z}_i = \boldsymbol{x_i}+\boldsymbol{e}_1$ and $\boldsymbol{Z}_j = \vx_j - \ve_1$. Now we compute the GenLabel $y_{ij}^{\op{gen}}$ and express it as a convex combination of $y_i$ and $y_j$.
To compute the $y^{\op{gen}}_{ij}$, we denote the density function of $\mathcal{N}(-\boldsymbol{e}_1,\frac{\boldsymbol{\Sigma}}{\sigma_1^2})$ as
\begin{equation}\label{pdf: N(0)}
p(\vx) =  (2\pi)^{-\frac{d}{2}} \det(\boldsymbol{\Sigma})^{-\frac{1}{2}} \sigma_1^{d} e^{-\frac{\sigma_1^2}{2} (\vx+\boldsymbol{e}_1)^T \boldsymbol{\Sigma}^{-1} (\vx+\boldsymbol{e}_1)},
\end{equation}
and we denote the density function of $\mathcal{N}(\boldsymbol{e}_1,\frac{\boldsymbol{\Sigma}}{\sigma_2^2})$ as
\begin{equation}\label{pdf: N(e1)}
q(\vx) =  (2\pi)^{-\frac{d}{2}} \det(\boldsymbol{\Sigma})^{-\frac{1}{2}} \sigma_2^{d} e^{-\frac{\sigma_2^2}{2} (\vx-\boldsymbol{e}_1)^T \boldsymbol{\Sigma}^{-1} (\vx-\boldsymbol{e}_1)}.
\end{equation}
Then the GenLabel $y_{ij}^{\op{gen}}$ in (\ref{eqn:GenLabel}) is given by the ratio:
\begin{align*}
 y_{ij}^{\op{gen}}  & =    \frac{q(\tilde{\vx}_{ij}(\lambda))}{p(\tilde{\vx}_{ij}(\lambda))+q(\tilde{\vx}_{ij}(\lambda))}=\frac{1}{1+\frac{p(\tilde{\vx}_{ij}(\lambda))}{q(\tilde{\vx}_{ij}(\lambda))}}   \\
   & =\frac{1}{1+\frac{\sigma_1^d}{\sigma_2^d}\exp\{-\frac{\sigma_1^2}{2}[(\tilde{\vx}_{ij}(\lambda)+\boldsymbol{e}_1)^T \boldsymbol{\Sigma}^{-1}(\tilde{\vx}_{ij}(\lambda)+\boldsymbol{e}_1)- \frac{
  \sigma_2^2}{\sigma_1^2}(\tilde{\vx}_{ij}(\lambda)-\boldsymbol{e}_1)^T \boldsymbol{\Sigma}^{-1}(\tilde{\vx}_{ij}(\lambda)-\boldsymbol{e}_1)]\}} .
\end{align*}
We use $\tilde{\vx}_{ij}(\lambda)   = (1-2\lambda)\boldsymbol{e}_1 + \boldsymbol{Z}_{ij}$ in (\ref{xij}) to express the exponential term in the denominator as
\begin{align*}
   & \exp\{-\frac{\sigma_1^2}{2}[(2-2\lambda)^2 \boldsymbol{e}_1^T \boldsymbol{\Sigma}^{-1} \boldsymbol{e}_1 + 4(1-\lambda) \boldsymbol{e}_1^T \boldsymbol{\Sigma}^{-1} \boldsymbol{Z}_{ij} + \boldsymbol{Z}_{ij}^T \boldsymbol{\Sigma}^{-1} \boldsymbol{Z}_{ij}]\}\\
   & \times \exp\{\frac{\sigma_1^2}{2}[ 4\lambda^2 \boldsymbol{e}_1^T \boldsymbol{\Sigma}^{-1} \boldsymbol{e}_1 - 4\lambda \boldsymbol{e}_1^T \boldsymbol{\Sigma}^{-1}\boldsymbol{Z}_{ij} + \boldsymbol{Z}_{ij}^T \boldsymbol{\Sigma}^{-1} \boldsymbol{Z}_{ij}             ]\} \\
   &\times \exp\{\frac{\sigma_2^2-\sigma_1^2}{2}[ 4\lambda^2 \boldsymbol{e}_1^T \boldsymbol{\Sigma}^{-1} \boldsymbol{e}_1 - 4\lambda \boldsymbol{e}_1^T \boldsymbol{\Sigma}^{-1}\boldsymbol{Z}_{ij} + \boldsymbol{Z}_{ij}^T \boldsymbol{\Sigma}^{-1} \boldsymbol{Z}_{ij}             ]\}         \\
   & = \exp\{-\sigma_1^2[(2-4\lambda)\boldsymbol{e}_1^T \boldsymbol{\Sigma}^{-1}\boldsymbol{e}_1 +  2\boldsymbol{e}_1^T \boldsymbol{\Sigma}^{-1}\boldsymbol{Z}_{ij}    ]\} \exp\{(\sigma_2^2-\sigma_1^2)[2\lambda^2 \boldsymbol{e}_1^T \boldsymbol{\Sigma}^{-1} \boldsymbol{e}_1-2\lambda \boldsymbol{e}_1^T \boldsymbol{\Sigma}^{-1} \boldsymbol{Z}_{ij} + \frac{\boldsymbol{Z}_{ij}^T\boldsymbol{\Sigma}^{-1}\boldsymbol{Z}_{ij}}{2}  ]\}.
\end{align*}
Now we apply previous lemmas to estimate all terms in the exponent.

For $\boldsymbol{e}_1^T \boldsymbol{\Sigma}^{-1}\boldsymbol{e}_1$, we apply Lemma \ref{Lemma: inverse} to $\boldsymbol{\Sigma}^{-1}$ and conclude
\[(2-4\lambda)\boldsymbol{e}_1^T \boldsymbol{\Sigma}^{-1}\boldsymbol{e}_1 = (2-4\lambda)c_d,\]
where $c_d$ is defined in~(\ref{epsilon_d}). 

For the other two terms, we define $Z_{ij}':=\boldsymbol{e}_1^T \boldsymbol{\Sigma}^{-1} \boldsymbol{Z}_{ij}$ and $\bar{z}_{ij}:= \boldsymbol{Z}_{ij}^T\boldsymbol{\Sigma}^{-1} \boldsymbol{Z}_{ij}$. From (\ref{xij}), $\boldsymbol{Z}_{ij} = \sqrt{\frac{\lambda^2}{\sigma_1^2}+\frac{(1-\lambda)^2}{\sigma_2^2}}\boldsymbol{Z} \sim \mathcal{N}(\mathbf{0},[\frac{\lambda^2}{\sigma_1^2}+\frac{(1-\lambda)^2}{\sigma_2^2}]\boldsymbol{\Sigma})$, with $\boldsymbol{Z}\sim \mathcal{N}(\mathbf{0},\boldsymbol{\Sigma})$. Then we apply Lemma \ref{Lemma: e_Sigma_Z} to $Z'_{ij}$ and have
\begin{equation}\label{Z'}
Z_{ij}'=\boldsymbol{e}_1^T \boldsymbol{\Sigma}^{-1}\boldsymbol{Z}_{ij}\sim \mathcal{N}(0,[\frac{\lambda^2}{\sigma_1^2}+\frac{(1-\lambda)^2}{\sigma_2^2}]c_d).
\end{equation}
For $\bar{z}_{ij}$, we apply Lemma \ref{Lemma: Z sigma Z} and have
\begin{equation}\label{bar Z}
\bar{z}_{ij} = \boldsymbol{Z}_{ij}^T \boldsymbol{\Sigma}^{-1}\boldsymbol{Z}_{ij} = [\frac{\lambda^2}{\sigma_1^2}+\frac{(1-\lambda)^2}{\sigma_2^2}]\boldsymbol{Z}^T \boldsymbol{\Sigma}^{-1}\boldsymbol{Z} \sim [\frac{\lambda^2}{\sigma_1^2}+\frac{(1-\lambda)^2}{\sigma_2^2}] \chi^2(d)    
\end{equation}
where $\chi^2(d)$ is the Chi-square distribution with freedom $d$. Thus we conclude that the GenLabel reads
\begin{equation}\label{y_gen}
y_{ij}^{\op{gen}} = \frac{1}{1+\frac{\sigma_1^d}{\sigma_2^d}\exp\{-\sigma_1^2[(2-4\lambda)c_d +2 Z_{ij}' ]\}\exp\{(\sigma^2_2-\sigma^2_1)[2\lambda^2 c_d-2\lambda Z_{ij}'+\frac{\bar{z}_{ij}}{2}]\}}.
\end{equation}
In other words, $y_{ij}^{\op{gen}}$ can be written as a convex combination of $y_i$ and $y_j$ as follows:
\begin{equation}\label{lambda1}
\begin{split}
 y_{ij}^{\op{gen}}   & = \lambda_1 y_i + (1-\lambda_1)y_j = 1-\lambda_1 = (\ref{y_gen}), \\
\lambda_1    & = 1-(\ref{y_gen}) = \frac{1}{1+\frac{\sigma_2^d}{\sigma_1^d}\exp\{\sigma_1^2[(2-4\lambda)c_d +2 Z_{ij}' ]\}\exp\{(\sigma^2_1-\sigma^2_2)[2\lambda^2 c_d-2\lambda Z_{ij}'+\frac{\bar{z}_{ij}}{2}]\}}.
\end{split}
\end{equation}

\textbf{Step 2: estimate $L_n^{\op{gen}}(\boldsymbol{\theta},S)$.}

Now we plug the expression of $y_{ij}^{\op{gen}}$ (\ref{lambda1}) into the GenLabel loss, we have
\begin{align}
 L_n^{\op{gen}}(\boldsymbol{\theta},S)  & = \frac{1}{n^2}\mathbb{E}_{\lambda \sim \text{Unif}([0,1])}   \sum_{i,j=1}^n   [h(f_{\boldsymbol{\theta}}(\tilde{\vx}_{ij}(\lambda)))-(\lambda_1 y_i+(1-\lambda_1)y_j)]f_{\boldsymbol{\theta}}(\tilde{\vx}_{ij}(\lambda)) \notag \\
   & =\frac{1}{n^2}\mathbb{E}_{\lambda \sim \text{Unif}([0,1])}  \sum_{i,j=1}^n \Bigg\{\mathbb{E}_{B\sim \text{Bern}(\lambda_1)}\big[ B[h(f_{\boldsymbol{\theta}}(\tilde{\vx}_{ij}))-y_i f_{\boldsymbol{\theta}}(\tilde{\vx}_{ij})]         \notag \\
   & +(1-B)[h(f_{\boldsymbol{\theta}}(\tilde{\vx}_{ij}))-y_j f_{\boldsymbol{\theta}}(\tilde{\vx}_{ij})]\big] \Bigg\}. \label{d finite}
\end{align}

For $\lambda \sim \text{Unif}([0,1])$, $B\vert \lambda  \sim \text{Bern}(\lambda_1)$, we can exchange them in order and have
\[B\sim \text{Bern($a_{ij}$)}, \quad \lambda | B \sim \left\{
                                             \begin{array}{ll}
                                               \mathcal{F}^1_{ij}, & \hbox{$B=1$;} \\
                                               \mathcal{F}^2_{ij}, & \hbox{$B=0$.}
                                             \end{array}
                                           \right.
\]
\[a_{ij}= \int_0^1 \lambda_1 \dd \lambda = \int_0^1 \frac{1}{1+\frac{\sigma_2^d}{\sigma_1^d}\exp\{\sigma_1^2[(2-4\lambda)c_d +2 Z_{ij}' ]\}\exp\{(\sigma^2_1-\sigma^2_2)[2\lambda^2 c_d-2\lambda Z_{ij}'+\frac{\bar{z}_{ij}}{2}]\}} \dd \lambda .        \]
$\mathcal{F}^1_{ij}$ has density function
\[ \mathcal{F}^1_{ij} \sim \frac{\lambda_1}{a_{ij}} = \frac{1}{a_{ij}} \frac{1}{1+\frac{\sigma_2^d}{\sigma_1^d}\exp\{\sigma_1^2[(2-4\lambda)c_d +2 Z_{ij}' ]\}\exp\{(\sigma^2_1-\sigma^2_2)[2\lambda^2 c_d-2\lambda Z_{ij}'+\frac{\bar{z}_{ij}}{2}]\}}.\]
$\mathcal{F}^2_{ij}$ has density function
\[\mathcal{F}^2_{ij} \sim \frac{1-\lambda_1}{1-a_{ij}} = \frac{1}{1-a_{ij}} \frac{1}{1+\frac{\sigma_1^d}{\sigma_2^d}\exp\{-\sigma_1^2[(2-4\lambda)c_d +2 Z_{ij}' ]\}\exp\{(\sigma^2_2-\sigma^2_1)[2\lambda^2 c_d-2\lambda Z_{ij}'+\frac{\bar{z}_{ij}}{2}]\}}.  \]

After changing the order of $\lambda$ and $B$ in (\ref{d finite}), we get 
\begin{align}
  (\ref{d finite}) & = \frac{1}{n^2}\bigg\{ \sum_{i,j=1}^n a_{ij}\mathbb{E}_{\lambda \sim \mathcal{F}^1_{ij}}[h(f_{\boldsymbol{\theta}}(\tilde{\vx}_{ij}(\lambda)))-y_i f_{\boldsymbol{\theta}}(\tilde{\vx}_{ij}(\lambda))] \label{d finite i}\\
   & +\sum_{i,j=1}^n (1-a_{ij})\mathbb{E}_{\lambda \sim \mathcal{F}^2_{ij}}[h(f_{\boldsymbol{\theta}}(\tilde{\vx}_{ij}(\lambda)))-y_j f_{\boldsymbol{\theta}}(\tilde{\vx}_{ij}(\lambda))]\bigg \} \label{d finite j}.
\end{align}

Since $\tilde{\vx}_{ij}(\lambda) = \tilde{\vx}_{ji}(1-\lambda)$, we can rewrite (\ref{d finite j}) as
\begin{equation}\label{d finite j: variant}
(\ref{d finite j})= \frac{1}{n^2} \sum_{i,j=1}^n  (1-a_{ij})\mathbb{E}_{\lambda   \sim \mathcal{F}^3_{ij}}  [h(f_{\boldsymbol{\theta}}(\tilde{\vx}_{ij}(\lambda)))-y_i f_{\boldsymbol{\theta}}(\tilde{\vx}_{ij}(\lambda))].     
\end{equation}
Here $\mathcal{F}^3_{ij}$ has density function $\mathcal{F}^{2}_{ij}(1-\lambda)$:
\[\mathcal{F}^3_{ij} \sim \frac{1}{1-a_{ij}} \frac{1}{1+\frac{\sigma_1^d}{\sigma_2^d}\exp\{-\sigma_1^2[(4\lambda-2)c_d +2 Z_{ij}' ]\}\exp\{(\sigma^2_2-\sigma^2_1)[2(1-\lambda)^2c_d-2(1-\lambda) Z_{ij}'+\frac{\bar{z}_{ij}}{2}]\}}.\]
From (\ref{d finite i}) and (\ref{d finite j: variant}), we denote $\mathcal{F}_{ij}$ as a mixture distribution:
\begin{equation}\label{Fij}
 \mathcal{F}_{ij} = a_{ij} \mathcal{F}_{ij}^1 + (1-a_{ij})\mathcal{F}_{ij}^3.
\end{equation}
Then (\ref{d finite}) reads
\begin{align}
 (\ref{d finite}) &  =\frac{1}{n^2}\sum_{i,j=1}^n \mathbb{E}_{\lambda \sim \mathcal{F}_{ij}}[h(f_{\boldsymbol{\theta}}(\tilde{\vx}_{ij}(\lambda)))-y_i f_{\boldsymbol{\theta}}(\tilde{\vx}_{ij}(\lambda))]\notag\\
  & =\frac{1}{n}\sum_{i=1}^n \mathbb{E}_{\lambda \sim \mathcal{F}_{i}} \mathbb{E}_{\boldsymbol{r_x} \sim D_X} \ell_{\check{\vx}_i,y_i}(\boldsymbol{\theta}). \label{step 2 conclusion.}
\end{align}
Here we defined $\mathcal{F}_i$ as a mixture distribution:
\begin{equation}\label{Fi}
 \mathcal{F}_{i} = \frac{1}{n}\sum_{j=1}^n \mathcal{F}_{ij}.
\end{equation}
$D_X$ is the empirical distribution induced by training samples and $\check{\vx}_i=\lambda \boldsymbol{x_i}+(1-\lambda)\boldsymbol{r_x}$.

\textbf{Step 3: derive the second order Taylor expansion}.

Given the expression of $L_n^{\op{gen}}(\boldsymbol{\theta},S)$ in (\ref{step 2 conclusion.}), we follow the proof of Lemma \ref{Lemma: mixup loss} and conclude that the second order Taylor expansion is given by Lemma \ref{Lemma: GenLabel loss for 2 class}, with the coefficients $A_{\sigma_1,c,\tau,d}^i,B_{\sigma_1,c,\tau,d}^i$ given by
\begin{equation}\label{A,B}
A_{\sigma_1,c,\tau,d}^i = \mathbb{E}_{\lambda \sim \mathcal{F}_i}[1-\lambda],\quad  B_{\sigma_1,c,\tau,d}^i = \mathbb{E}_{\lambda \sim \mathcal{F}_i}[(1-\lambda)^2].    
\end{equation}
Here $\mathcal{F}_i$ has density function
\begin{align*}
\frac{1}{n}\sum_{j=1}^n &\{ \frac{1}{1+\frac{\sigma_2^d}{\sigma_1^d}\exp\{\sigma_1^2[(2-4\lambda)c_d + 2Z_{ij}' ]\}\exp\{(\sigma^2_1-\sigma^2_2)[2\lambda^2c_d-2\lambda Z_{ij}'+\frac{\bar{z}_{ij}}{2}]\}} \\
& + \frac{1}{1+\frac{\sigma_1^d}{\sigma_2^d}\exp\{-\sigma_1^2[(4\lambda-2)c_d + 2Z_{ij}' ]\}\exp\{(\sigma^2_2-\sigma^2_1)[2(1-\lambda)^2c_d-2(1-\lambda) Z_{ij}'+\frac{\bar{z}_{ij}}{2}]\}} \},    
\end{align*}
where $Z'_{ij}$, $\bar{z}_{ij}$ and $c_d$ are defined in (\ref{Z'}), (\ref{bar Z}) and (\ref{c_d}) respectively.

It remains to prove that when $\sigma_1\to \infty$, these coefficients satisfy the properties mentioned in Lemma \ref{Lemma: GenLabel loss for 2 class}.

\textbf{Step 4: asymptotic analysis for $\sigma_1\to \infty$ }

Now we prove $\lim_{\sigma_1\to \infty}A_{\sigma_1,c,\tau,d}^i = \frac{c^2+1}{2(c+1)^2}$ and $\lim_{\sigma_1\to \infty}B_{\sigma_1,c,\tau,d}^i = \frac{c^2-c+1}{3(1+c)^2}.$
Recall that GenLabel $y_{ij}^{\op{gen}}$ is 
given in (\ref{lambda1}). When $\sigma_1\to \infty$, we have $Z_{ij}',\bar{z}_{ij} \to 0$, then $\lambda_1$ in (\ref{lambda1}) becomes
\begin{align*}
\lambda_1    & = \frac{1}{1+c^d\exp\{\sigma_1^2[(2-4\lambda)]c_d+2(\sigma_1^2-c^2 \sigma_1^2 )\lambda^2 c_d\} }  \\
    &  = \frac{1}{1+c^d \exp\{2\sigma_1^2 c_d[(1-c^2)\lambda^2-2\lambda+1]\}}\\
    & =\left\{
    \begin{array}{ll}
      \frac{1}{1+c^d \exp\{2\sigma_1^2 c_d (1-c^2) (\lambda-\frac{1}{1-c})(\lambda-\frac{1}{1+c})\}}, & \hbox{$c\neq 1$;} \\
      \frac{1}{1+c^d \exp\{2\sigma_1^2 c_d(1-2\lambda)\}}, & \hbox{$c=1$.}
    \end{array}
  \right.
\end{align*}

We have three cases regarding $c$.

If $c>1$, then $\frac{1}{1-c}<0$, $1-c^2<0$, which implies
\[(1-c^2)(\lambda-\frac{1}{1-c})(\lambda-\frac{1}{1+c}) \left\{
    \begin{array}{ll}
      >0, & \hbox{$\frac{1}{1-c}<0\leq \lambda < \frac{1}{1+c}$;} \\
      <0, & \hbox{$\frac{1}{1+c}< \lambda \leq 1$.}
    \end{array}
  \right.\]
If $0 < c <1$, then $\frac{1}{1-c}>1$, $1-c^2>0$, which implies  
\[(1-c^2)(\lambda-\frac{1}{1-c})(\lambda-\frac{1}{1+c}) \left\{
    \begin{array}{ll}
      >0, & \hbox{$0\leq \lambda < \frac{1}{1+c}$;} \\
      <0, & \hbox{$\frac{1}{1+c}< \lambda \leq 1<\frac{1}{1-c}$.}
    \end{array}
  \right.\]
If $c=1$, we have $1-2\lambda>0$ for $0\leq \lambda<\frac{1}{2}$ and $1-2\lambda<0$ for $\frac{1}{2}<\lambda \leq 1$. 

When $\sigma_1 \to \infty$, we combine all three cases above and conclude 
\begin{equation*}
\lambda_1 = \left\{
              \begin{array}{ll}
                0, & \hbox{$0\leq \lambda < \frac{1}{1+c}$;} \\
                1, & \hbox{$\frac{1}{1+c}<\lambda \leq 1$.}
              \end{array}
            \right. \quad y_{ij}^{\op{gen}} =  \left\{
              \begin{array}{ll}
                y_j, & \hbox{$0\leq \lambda < \frac{1}{1+c}$;} \\
                y_i, & \hbox{$\frac{1}{1+c}<\lambda \leq 1$.}
              \end{array}
            \right.
\end{equation*}
With the GenLabel given by the above equation, we compute the GenLabel loss as
\begin{align}
 L_n^{\op{gen}}(\boldsymbol{\theta},S)& =\frac{1}{n^2}\mathbb{E}_{\lambda \sim \text{Unif}([0,1])}\sum_{i,j=1}^n[h(f_{\boldsymbol{\theta}}(\tilde{\vx}_{ij}(\lambda)))-y_{ij}^{\text{gen}}f_{\boldsymbol{\theta}}(\tilde{\vx}_{ij}(\lambda))] \notag\\
   & =\frac{1}{(c+1)n^2}\mathbb{E}_{\lambda \sim \text{Unif}([0,1/(1+c)])} \sum_{i,j=1}^{n} \bigg\{h(f_{\boldsymbol{\theta}}(\tilde{\vx}_{ij}(\lambda)))-y_j f_{\boldsymbol{\theta}}(\tilde{\vx}_{ij}(\lambda)) \bigg\} \label{sum in yj}\\
& + \frac{c}{(c+1)n^2} \mathbb{E}_{\lambda \sim \text{Unif}([1/(1+c),1])} \sum_{i,j=1}^{n} \bigg\{h(f_{\boldsymbol{\theta}}(\tilde{\vx}_{ij}(\lambda)))-y_i f_{\boldsymbol{\theta}}(\tilde{\vx}_{ij}(\lambda)) \bigg\}.\notag
\end{align}
Since $1-\text{Unif}([0,1/(1+c)])$ and $\text{Unif}([1-1/(1+c),1])$ are of the same distribution and $\tilde{\vx}_{ij}(1-\lambda)=\tilde{\vx}_{ji}(\lambda)$, we have
\begin{align*}
  (\ref{sum in yj}) & = \frac{1}{(c+1)n^2} \mathbb{E}_{\lambda \sim \text{Unif}([c/(c+1),1])} \sum_{i,j=1}^{n} \bigg\{h(f_{\boldsymbol{\theta}}(\tilde{\vx}_{ij}(\lambda)))-y_i f_{\boldsymbol{\theta}}(\tilde{\vx}_{ij}(\lambda)) \bigg\}.
\end{align*}
Using the above equation, the GenLabel loss reads
\begin{align*}
 L_n^{\op{gen}}(\boldsymbol{\theta},S) &=\frac{1}{n^2} \mathbb{E}_{\lambda \sim  \frac{c}{c+1}\text{Unif}([1/(c+1),1]) + \frac{1}{c+1}\text{Unif}([c/(c+1),1])} \sum_{i,j=1}^{n} \bigg\{h(f_{\boldsymbol{\theta}}(\tilde{\vx}_{ij}(\lambda)))-y_i f_{\boldsymbol{\theta}}(\tilde{\vx}_{ij}(\lambda)) \bigg\}  \\
   & =\frac{1}{n}\sum_{i=1}^n \mathbb{E}_{\lambda \sim \frac{c}{c+1}\text{Unif}([1/(c+1),1]) + \frac{1}{c+1}\text{Unif}([c/(c+1),1])} \mathbb{E}_{\boldsymbol{r_x} \sim D_X} \ell_{\check{\vx}_i,y_i}(\boldsymbol{\theta}).
\end{align*}
Following the proof of Lemma \ref{Lemma: mixup loss}, we conclude that when $\sigma_1 \to \infty$, the coefficients $A_{\sigma_1,c,\tau,d}^i,B^i_{\sigma_1,c,\tau,d}$ are given by
\begin{align*}
 \lim_{\sigma_1 \to \infty} A_{\sigma_1,c,\tau,d}^i =  &\mathbb{E}_{\lambda \sim \frac{c}{c+1}\text{Unif}([1/(c+1),1]) + \frac{1}{c+1}\text{Unif}([c/(c+1),1])}[1-\lambda] \\
    &= \frac{c}{c+1}\int_{\frac{1}{c+1}}^1 \frac{c+1}{c} (1-\lambda)   \dd \lambda + \frac{1}{c+1}\int_{\frac{c}{c+1}}^1 \frac{c+1}{1} (1-\lambda)  \dd \lambda \\
   & = \int_{\frac{1}{c+1}}^1 (1-\lambda)  \dd \lambda + \int_{\frac{c}{c+1}}^1  (1-\lambda)  \dd \lambda = \frac{c^2}{2(c+1)^2}+ \frac{1}{2(c+1)^2} = \frac{c^2+1}{2(c+1)^2}.
\end{align*}

\begin{align*}
  \lim_{\sigma_1 \to \infty} B_{\sigma_1,c,\tau,d}^i =   &\mathbb{E}_{\lambda \sim \frac{c}{c+1}\text{Unif}([1/(c+1),1]) + \frac{1}{c+1}\text{Unif}([c/(c+1),1])}[(1-\lambda)^2] \\
    &= \int_{\frac{1}{c+1}}^1 (1-\lambda)^2  \dd \lambda + \int_{\frac{c}{c+1}}^1  (1-\lambda)^2  \dd \lambda =  \frac{c^3}{3(c+1)^3} + \frac{1}{3(c+1)^3} = \frac{c^2-c+1}{3(c+1)^2}.
\end{align*}

From direct computation, we conclude that when $2-\sqrt{3}<c<2+\sqrt{3}$,
\[ \frac{c^2+1}{2(c+1)^2} < \frac{1}{3}, \quad \frac{c^2-c+1}{3(c+1)^2} < \frac{1}{6}  \Longleftrightarrow  c^2-4c+1<0  .    \]
We conclude the lemma.
\end{proof}

\section{Mathematical results in~\citep{zhang2021does}}

\begin{lemma}[Lemma 3 of~\citep{zhang2021does}]
\label{Lemma: mixup loss}
The second order Taylor approximation of the mixup loss $L_n^{\op{mix}}(\boldsymbol{\theta},S)$ is given by
\begin{equation*}%
\tilde{L}_n^{\op{mix}}(\boldsymbol{\theta},S) = L_n^{\op{std}}(\boldsymbol{\theta},S)+ {R}_1^{\op{mix}}(\boldsymbol{\theta},S) + {R}_2^{\op{mix}}(\boldsymbol{\theta},S) + {R}_3^{\op{mix}}(\boldsymbol{\theta},S),
\end{equation*}
where
\begin{align*}
   & {R}_1^{\op{mix}}(\boldsymbol{\theta},S)=\frac{1}{n} \sum_{i=1}^n \frac{1}{3} (h'(f_{\boldsymbol{\theta}}(\boldsymbol{x_i}))-y_i) \nabla f_{\boldsymbol{\theta}}(\boldsymbol{x_i})^T \mathbb{E}_{\boldsymbol{r_x}\sim D_X}[\boldsymbol{r_x}-\boldsymbol{x_i}], \\
   & {R}_2^{\op{mix}}(\boldsymbol{\theta},S)=\frac{1}{2n}\sum_{i=1}^n \frac{1}{6} h''(f_{\boldsymbol{\theta}}(\boldsymbol{x_i}))\nabla f_{\boldsymbol{\theta}}(\boldsymbol{x_i})^T \mathbb{E}_{\boldsymbol{r_x}\sim D_X}[(\boldsymbol{r_x}-\boldsymbol{x_i})(\boldsymbol{r_x}-\boldsymbol{x_i})^T]\nabla f_{\boldsymbol{\theta}}(\boldsymbol{x_i}), \\
   & {R}_3^{\op{mix}}(\boldsymbol{\theta},S)=\frac{1}{2n}\sum_{i=1}^n \frac{1}{6} (h'(f_{\boldsymbol{\theta}}(\boldsymbol{x_i}))-y_i) \mathbb{E}_{\boldsymbol{r_x}\sim D_X}[(\boldsymbol{r_x}-\boldsymbol{x_i})^T\nabla^2f_{\boldsymbol{\theta}}(\boldsymbol{x_i})(\boldsymbol{r_x}-\boldsymbol{x_i})].
\end{align*}
\end{lemma}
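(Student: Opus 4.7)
The plan is to rewrite the double-sum mixup loss in a ``one-base-point'' form by exploiting the linearity of the loss in $y$ and the symmetry of the Beta distribution, then do a second-order Taylor expansion of a single-argument loss and read off the coefficients from two elementary Beta moments.

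First I would observe that, because the mixup label is linearly interpolated, the mixup loss decomposes pointwise as
\begin{equation*}
\ell(\theta,z_{ij}^{\op{mix}})=\lambda\,\ell_{\tilde x_{ij}(\lambda),y_i}(\theta)+(1-\lambda)\,\ell_{\tilde x_{ij}(\lambda),y_j}(\theta),
\end{equation*}
where $\ell_{x,y}(\theta)=h(f_\theta(x))-yf_\theta(x)$. Then I would apply the Beta reweighting identity $\mathbb{E}_{\lambda\sim\op{Beta}(\alpha,\alpha)}[\lambda\,g(\lambda)]=\tfrac12\,\mathbb{E}_{\lambda\sim\op{Beta}(\alpha+1,\alpha)}[g(\lambda)]$, which follows from $B(\alpha+1,\alpha)/B(\alpha,\alpha)=1/2$, together with its mirror involving $1-\lambda$ and $\op{Beta}(\alpha,\alpha+1)$. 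Applying the change of variables $\lambda\mapsto1-\lambda$ (which sends $\op{Beta}(\alpha,\alpha+1)$ to $\op{Beta}(\alpha+1,\alpha)$ and satisfies $\tilde x_{ij}(\lambda)=\tilde x_{ji}(1-\lambda)$) and relabeling $(i,j)\leftrightarrow(j,i)$ in the double sum collapses the two pieces into
\begin{equation*}
L_n^{\op{mix}}(\theta,S)=\frac1n\sum_{i=1}^n \mathbb{E}_{\lambda\sim\op{Beta}(\alpha+1,\alpha)}\,\mathbb{E}_{r_x\sim D_X}\bigl[\ell_{\check x_i,y_i}(\theta)\bigr],\quad \check x_i=\lambda x_i+(1-\lambda)r_x.
\end{equation*}
This ``one-base-point'' rewriting is the crux of the argument; it is the same form from which the later GenLabel computation (Step~2 in the proof of Lemma~\ref{Lemma: GenLabel loss for 2 class}) differs only by the mixing distribution.

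From here the proof is a mechanical second-order Taylor expansion of $\ell_{\check x_i,y_i}(\theta)$ about $\check x_i=x_i$, using $\check x_i-x_i=(1-\lambda)(r_x-x_i)$. Expanding $f_\theta(\check x_i)$ and then $h(f_\theta(\check x_i))$ to second order and collecting by powers of $(1-\lambda)$ yields a constant piece equal to $\ell_{x_i,y_i}(\theta)$ (whose average over $i$ is $L_n^{\op{std}}$), a linear-in-$(1-\lambda)$ piece weighted by $h'(f_\theta(x_i))-y_i$ and $\nabla f_\theta(x_i)$ (giving $R_1^{\op{mix}}$), and two quadratic-in-$(1-\lambda)$ pieces weighted respectively by $h''(f_\theta(x_i))$ and by $h'(f_\theta(x_i))-y_i$ (giving $R_2^{\op{mix}}$ and $R_3^{\op{mix}}$). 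Averaging over $r_x\sim D_X$ produces exactly the $\mathbb{E}[r_x-x_i]$ and $\mathbb{E}[(r_x-x_i)(r_x-x_i)^T]$ factors, and the numerical coefficients $1/3$ and $1/6$ drop out of the two elementary integrals $\mathbb{E}_{\op{Beta}(2,1)}[1-\lambda]=\int_0^12\lambda(1-\lambda)\,d\lambda=1/3$ and $\mathbb{E}_{\op{Beta}(2,1)}[(1-\lambda)^2]=\int_0^12\lambda(1-\lambda)^2\,d\lambda=1/6$, specialized to $\alpha=1$.

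The only delicate step, and the one I expect to be the main obstacle, is the symmetrization leading to the one-base-point form: one has to carefully pair the $(i,j)$ and $(j,i)$ contributions so that the Beta-weighting on $\lambda$ and the Beta-weighting on $1-\lambda$ collapse into a \emph{single} expectation against $\op{Beta}(\alpha+1,\alpha)$, rather than leaving two distinct distributions on $\lambda$. Once that bookkeeping is right, the Taylor expansion and moment computation are essentially automatic.
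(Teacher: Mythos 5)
Your proposal is correct and follows essentially the same route as the source this lemma is imported from: the paper does not reprove the statement but cites Lemma 3 of Zhang et al.\ (2021), whose argument is exactly your symmetrization to a one-base-point form (there phrased via a Bernoulli variable $B\sim\mathrm{Bern}(\lambda)$ and an exchange of expectations, which is equivalent to your linearity-in-$y$ decomposition since the loss is affine in $y$) followed by the same second-order Taylor expansion in $\check{\vx}_i-\vx_i=(1-\lambda)(\boldsymbol{r_x}-\boldsymbol{x_i})$; the paper's own Step 2 in the proof of Lemma~\ref{Lemma: GenLabel loss for 2 class} mirrors precisely this bookkeeping with a modified mixing distribution. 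Your Beta-moment computations $\mathbb{E}_{\mathrm{Beta}(2,1)}[1-\lambda]=1/3$ and $\mathbb{E}_{\mathrm{Beta}(2,1)}[(1-\lambda)^2]=1/6$ correctly reproduce the stated coefficients for the $\alpha=1$ case used throughout the paper.
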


\begin{lemma}[Lemma 3.2 of~\citep{zhang2021does}]\label{Lemma:adv_loss_approx}
Consider the logistic regression model having $f_{\boldsymbol{\theta}}(\vx)=\boldsymbol{\theta}^T \vx$. 
The second order Taylor approximation of $L_{n}^{\op{adv}} (\boldsymbol{\theta}, S)$ is $\frac{1}{n} \sum_{i=1}^n \tilde{\ell}_{\op{adv}}(\eps \sqrt{d}, (\boldsymbol{x_i}, y_i))$, where for any $\eta > 0, \vx \in \mathbb{R}^d$ and $y \in \{0, 1\}$,
\begin{equation*}
\tilde{\ell}_{\op{adv}}(\eta,(\vx, y))=\ell(\boldsymbol{\theta},(\vx, y))+\eta\left|g\left(\boldsymbol{\theta}^T \vx\right)-y\right| \cdot\|\boldsymbol{\theta}\|_{2}+\frac{\eta^{2}}{2} \cdot g\left(\boldsymbol{\theta}^T \vx\right)\left(1-g\left(\boldsymbol{\theta}^T \vx\right)\right) \cdot\|\boldsymbol{\theta}\|_{2}^{2}
\end{equation*}
and $g(x) = e^x / (1+e^x)$ is the logistic function.
\end{lemma}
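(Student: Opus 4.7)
The plan is to reduce the GenLabel loss to a ``mixup-like'' loss with a re-weighted mixing distribution, then apply the Taylor expansion machinery of Lemma~\ref{Lemma: mixup loss} verbatim.

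First I would derive an explicit formula for $y_{ij}^{\op{gen}}$ when $y_i \neq y_j$. Plugging the Gaussian densities $p(\vx)$ and $q(\vx)$ (with covariances $\boldsymbol{\Sigma}/\sigma_1^2$ and $\boldsymbol{\Sigma}/\sigma_2^2$) into the posterior ratio, and substituting $\tilde{\vx}_{ij}(\lambda) = (1-2\lambda)\ve_1 + \boldsymbol{Z}_{ij}$, the exponent collapses into three scalar ingredients: $\ve_1^T\boldsymbol{\Sigma}^{-1}\ve_1$, $Z_{ij}' := \ve_1^T\boldsymbol{\Sigma}^{-1}\boldsymbol{Z}_{ij}$, and $\bar z_{ij} := \boldsymbol{Z}_{ij}^T\boldsymbol{\Sigma}^{-1}\boldsymbol{Z}_{ij}$. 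Lemmas~\ref{Lemma: inverse}, \ref{Lemma: e_Sigma_Z}, \ref{Lemma: Z sigma Z} give these as $c_d$, a univariate Gaussian, and a scaled $\chi^2(d)$, respectively. This lets me write $y_{ij}^{\op{gen}} = \lambda_1 y_i + (1-\lambda_1) y_j$ with an explicit $\lambda_1(\lambda, Z_{ij}', \bar z_{ij})$, matching~(\ref{lambda1}).

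Next I would convert this soft label into an expectation. Introducing an auxiliary Bernoulli $B \sim \text{Bern}(\lambda_1)$ and swapping the order of $\lambda$ and $B$, the inner term $[h(f_{\boldsymbol{\theta}}(\tilde{\vx}_{ij}))-y_{ij}^{\op{gen}} f_{\boldsymbol{\theta}}(\tilde{\vx}_{ij})]$ becomes a mixture of per-sample terms weighted by $a_{ij} := \int_0^1 \lambda_1\, d\lambda$ and $1-a_{ij}$. Using the symmetry $\tilde{\vx}_{ij}(\lambda) = \tilde{\vx}_{ji}(1-\lambda)$, the two branches can be merged so that
\[
L_n^{\op{gen}}(\boldsymbol{\theta}, S) = \frac{1}{n}\sum_{i=1}^n \mathbb{E}_{\lambda \sim \mathcal{F}_i}\,\mathbb{E}_{\boldsymbol{r_x}\sim D_X}\,\ell_{\check{\vx}_i, y_i}(\boldsymbol{\theta}),
\]
with $\check{\vx}_i = \lambda \vx_i + (1-\lambda)\boldsymbol{r_x}$ and $\mathcal{F}_i$ a mixture whose density is the symmetrized version of $\lambda_1$ and $1-\lambda_1$ averaged over $j$. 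This is now structurally identical to the representation used in the proof of Lemma~\ref{Lemma: mixup loss}, except that the uniform distribution on $\lambda$ is replaced by $\mathcal{F}_i$.

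With this reduction, I would invoke the same second-order Taylor expansion of $\ell_{\check{\vx}_i, y_i}(\boldsymbol{\theta})$ around $\vx_i$ used for mixup. The zeroth-order term gives $L_n^{\op{std}}$; the first- and second-order terms produce exactly $R_1^{\op{gen}}, R_2^{\op{gen}}, R_3^{\op{gen}}$, with the coefficients identified as
\[
A_{\sigma_1,c,\tau,d}^i = \mathbb{E}_{\lambda \sim \mathcal{F}_i}[1-\lambda], \qquad B_{\sigma_1,c,\tau,d}^i = \mathbb{E}_{\lambda \sim \mathcal{F}_i}[(1-\lambda)^2],
\]
matching~(\ref{A,B}). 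This step is essentially a copy of the mixup proof and should not present difficulties.

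The main obstacle is the asymptotic analysis as $\sigma_1 \to \infty$. Here I would show that $Z_{ij}', \bar z_{ij} \to 0$ in probability (both scale like $1/\sigma_1$), so $\lambda_1$ collapses to a sharp threshold: studying the sign of the quadratic $(1-c^2)\lambda^2 - 2\lambda + 1 = (1-c^2)(\lambda - \tfrac{1}{1-c})(\lambda - \tfrac{1}{1+c})$ case-by-case for $c<1$, $c=1$, $c>1$ shows the single relevant root in $[0,1]$ is $\lambda^\ast = 1/(1+c)$, so $\lambda_1 \to \mathbf{1}\{\lambda > \lambda^\ast\}$. Consequently $\mathcal{F}_i$ converges to the mixture $\tfrac{c}{c+1}\text{Unif}([\tfrac{1}{c+1},1]) + \tfrac{1}{c+1}\text{Unif}([\tfrac{c}{c+1},1])$, and direct integration gives
\[
\lim_{\sigma_1\to\infty} A = \tfrac{c^2+1}{2(c+1)^2}, \qquad \lim_{\sigma_1\to\infty} B = \tfrac{c^2-c+1}{3(c+1)^2}.
\]
The comparisons $A < 1/3$ and $B < 1/6$ both reduce to the quadratic inequality $c^2 - 4c + 1 < 0$, which holds under the hypothesis $2 - \sqrt{3} < c < 2+\sqrt{3}$. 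The delicate part is justifying the exchange of limit and expectation when passing from the explicit $\mathcal{F}_i$ to its $\sigma_1 = \infty$ limit, which I would handle via dominated convergence using that $\lambda_1 \in [0,1]$ uniformly.
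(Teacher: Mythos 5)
Your proposal does not prove the statement in question. The lemma you were asked to prove is the second-order Taylor approximation of the \emph{adversarial} loss $L_n^{\op{adv}}(\boldsymbol{\theta},S)=\frac{1}{n}\sum_i \max_{\|\boldsymbol{\delta}_i\|_2\le \eps\sqrt{d}}\ell(\boldsymbol{\theta},(\vx_i+\boldsymbol{\delta}_i,y_i))$ for logistic regression; what you have written is instead a sketch of the proof of Lemma~\ref{Lemma: GenLabel loss for 2 class}, the Taylor expansion of the \emph{GenLabel} loss (the computation of $y_{ij}^{\op{gen}}$, the mixture distribution $\mathcal{F}_i$, the coefficients $A^i_{\sigma_1,c,\tau,d}, B^i_{\sigma_1,c,\tau,d}$, and the $\sigma_1\to\infty$ limit). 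None of that machinery is relevant here: the target statement involves no mixing, no generative model, no Gaussian data assumption, and no asymptotics in $\sigma_1$. Nothing in your argument touches the inner maximization over the perturbation $\boldsymbol{\delta}$, which is the entire content of the lemma. (Note also that the paper itself does not reprove this lemma; it imports it verbatim as Lemma 3.2 of \citet{zhang2021does}.)

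For the record, the actual argument is short. With $\ell(\boldsymbol{\theta},(\vx,y))=h(\boldsymbol{\theta}^T\vx)-y\,\boldsymbol{\theta}^T\vx$ and $h(w)=\log(1+e^w)$, one has $h'=g$ and $h''=g(1-g)$, so expanding in $\boldsymbol{\delta}$ to second order gives
\begin{equation*}
\ell(\boldsymbol{\theta},(\vx+\boldsymbol{\delta},y)) \approx \ell(\boldsymbol{\theta},(\vx,y)) + \bigl(g(\boldsymbol{\theta}^T\vx)-y\bigr)\,\boldsymbol{\theta}^T\boldsymbol{\delta} + \tfrac{1}{2}\,g(\boldsymbol{\theta}^T\vx)\bigl(1-g(\boldsymbol{\theta}^T\vx)\bigr)(\boldsymbol{\theta}^T\boldsymbol{\delta})^2 .
\end{equation*}
Maximizing over $\|\boldsymbol{\delta}\|_2\le\eta$, the linear term is bounded by $\eta\,|g(\boldsymbol{\theta}^T\vx)-y|\,\|\boldsymbol{\theta}\|_2$ and the quadratic term by $\tfrac{\eta^2}{2}\,g(1-g)\,\|\boldsymbol{\theta}\|_2^2$ via Cauchy--Schwarz, and both bounds are attained simultaneously at $\boldsymbol{\delta}=\pm\eta\,\boldsymbol{\theta}/\|\boldsymbol{\theta}\|_2$ with the sign of $g(\boldsymbol{\theta}^T\vx)-y$ (the quadratic coefficient being nonnegative). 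Averaging over $i$ with $\eta=\eps\sqrt{d}$ yields the claim. If you intended your write-up as a proof of Lemma~\ref{Lemma: GenLabel loss for 2 class}, it should be resubmitted against that statement.
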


\section{Detailed experiments setup}\label{sec:exp_setup}

Here we provide a detailed description on our experimental settings.

\subsection{Synthetic datasets}

\paragraph{Datasets} The 2D cube dataset with 2 classes (class 0 and 1) is defined as follows. Consider two adjacent squares centered at $\bm{\mu}_{0} = (-1, 0)$ and $\bm{\mu}_{1} = (1, 0)$, respectively, where the length of each side of each square is 2. We define the support of class $i$ as the area of each square. In other words, the support of class 0 is $X_0 = \{ \vx \in \mathbb{R}^2 : \lVert \bm{\mu}_0 -  \vx  \rVert_{\infty} \leq 1 \} $ where $\lVert \cdot \rVert_{\infty}$ is the $L_{\infty}$ norm operator. Similarly, the support of class 1 is $X_1 = \{ \vx \in \mathbb{R}^2 : \lVert \bm{\mu}_1 -  \vx  \rVert_{\infty} \leq 1 \} $.
The data point $\vx$ for class $i \in \{0,1\}$ is uniform-randomly sampled from the square $X_i$.

The 3D cube dataset with 8 classes is defined as below. Consider 8 adjacent cubes, each of which is located at each octant, where the center of each cube is $\bm{\mu} = (\mu^{(1)}, \mu^{(2)}, \mu^{(3)})$ for $\mu^{(1)}, \mu^{(2)}, \mu^{(3)} \in \{ -1, 1 \}$ and the length of each side of each cube is 2.
We define the support of class $i$ as the volume of each cube. For example, the class 0 corresponds to the cube centered at $\bm{\mu}_0 = (-1, -1, -1)$, and the support of class 0 is
$X_0 = \{ \vx \in \mathbb{R}^3 : \lVert \bm{\mu}_0 -  \vx  \rVert_{\infty} \leq 1 \} $. Similarly, we define the support of class $i \in \{0, 1, \cdots, 7\}$.
The data point $\vx$ for class $i$ is uniform-randomly sampled from the cube $X_i$.

The 9-class Gaussian dataset used in Fig.~\ref{Fig:intrusion_fix_margin} is defined as follows. 
We generate 9 Gaussian clusters having the covariance matrix of $\bm{\Sigma} = \frac{1}{10} \mI_2$ and centered at $\bm{\mu} = (\mu^{(1)}, \mu^{(2)})$ for $\mu^{(1)}, \mu^{(2)} \in \{ -10, 0, 10 \}$. For example, cluster 0 (or class 0) is centered at  $\bm{\mu}_0 = (-10, -10)$ and cluster 8 (or class 8) is centered at $\bm{\mu}_0 = (10, 10)$. 

The Circle and Moon datasets used in Table~\ref{Table:mixup_fail}
are from scikit-learn~\citep{scikit-learn} combined with Laplacian noise, where the exponential decay $\lambda$ of Laplacian noise is set to 0.1 for Moon and 0.02 for Circle.

The Two-circle dataset used in Table~\ref{Table:mixup_fail} is generated as follows.
We first generate a Circle dataset from scikit-learn~\citep{scikit-learn} combined with Laplacian noise, where the exponential decay $\lambda$ of Laplacian noise is set to 0.01. Then, we generate another (second) Circle dataset under the same setting (but having different realization), shift it to the right, and flip the label of the second Circle dataset. In this way, we get two adjacent Circle datasets with flipped label.

\paragraph{Training setting} For synthetic datasets, the hyperparameters used in our experiments are summarized in Table~\ref{Table:syn_cln_settings}. For both 2D and 3D cube datasets, we randomly generate 20 data samples from uniform distribution for each class as training data, and evaluate the decision boundary by another 10000 randomly generated data samples for each class. For 9-class Gaussian dataset, each cluster has 5000 randomly generated samples as the training data. For Moon and Circle datasets, we randomly generate 1000 data samples for both training and testing. For Two-circle dataset, we randomly generate 1000 data samples for each Circle dataset for both training and testing.
For 2D and 3D cube datasets, we use a 3-layer fully connected network, which has 64 neurons in the first hidden layer and 128 neurons in the second hidden layer. For Moon, Circle and Two-circle datasets, we use a 4-layer fully connected network, which has 64 neurons in the first hidden layer and 128 neurons in the remaining hidden layers. For all the datasets, we use the SGD optimizer and the multi-step learning rate decay. We measure the clean validation accuracy at each epoch and choose the best model having the highest clean accuracy.

\paragraph{Algorithms} For mixup~\citep{zhang2017mixup}, we followed the code from the official github repository: \url{https://github.com/facebookresearch/mixup-cifar10}. 
For our GenLabel scheme on %
9-class Gaussian datasets, we use the ground-truth mean and identity covariance to estimate the Gaussian mixture (GM) models at the input layer.

\subsection{Real datasets}

\paragraph{Datasets} We use OpenML datasets from~\openml, MNIST, CIFAR-10 and CIFAR-100 datasets from PyTorch~\citep{pytorch}, and Tiny-Imagenet-200 dataset from \url{http://cs231n.stanford.edu/tiny-imagenet-200.zip}.

For experiments on OpenML datasets, we first accessed all datasets from Python OpenML API~\citep{feurer-arxiv19a}.
Afterwards, we filtered out the datasets having more than 20 features, datasets with more than 5000 data samples. We tested our GenLabel on the remaining datasets.

\paragraph{Training setting} 

The hyperparameters used in our experiments are summarized in Table~\ref{Table:tabular_cln_settings},~\ref{Table:tabular_rob_settings}, \ref{Table:real_cln_settings} and~\ref{Table:real_rob_settings}. 
When we train mixup+GenLabel on OpenML datasets, we used a 6-fold cross-validation
for choosing the best loss ratio $\gamma \in \{0.0, 0.2, 0.4, 0.6, 0.8, 1.0 \}$. 
For the clean validation runs, we measured the clean validation accuracy at each epoch and choose the best model having the highest clean accuracy. For the robust validation runs, we measured the robust validation accuracy at every 5 epochs and choose the best model having the highest robust accuracy.
For OpenML datasets, we tested training methods on both the logistic regression model and the neural network with 2 hidden layers. For the latter, we followed the same architecture used in mixup~\citep{zhang2017mixup} which has 128 neurons in each hidden layer. For MNIST and CIFAR-10 datasets, we used LeNet-5 and ResNet-18, respectively. For both CIFAR-100 and Tiny-Imagenet-200 datasets, we used PreActResNet-18.
We tested on NVIDIA Tesla V100 GPUs in Amazon Web Service (AWS) and local NVIDIA RTX2080 GPU machines.

\paragraph{Algorithms} For mixup~\citep{zhang2017mixup} and manifold-mixup~\citep{pmlr-v97-verma19a}, we followed the code from the official github repository: \url{https://github.com/facebookresearch/mixup-cifar10} and \url{https://github.com/vikasverma1077/manifold_mixup}. Note that the mixup github repository contains license: see \url{https://github.com/facebookresearch/mixup-cifar10/blob/master/LICENSE}.
For AdaMixUp~\citep{guo2019mixup}, we cloned the source code in \url{https://github.com/SITE5039/AdaMixUp} for MNIST and CIFAR-10 implemented in TensorFlow~\citep{tensorflow2015-whitepaper}, and made slight modifications to make their experimental settings and models consistent with ours. For our GenLabel schemes, we estimated and updated the Gaussian mixture (GM) models at the penultimate layer. 

\begin{table}[t]
	\centering
	\vspace{-2mm}
	\caption{Hyperparameters and models used for clean validation in synthetic dataset experiments.}
	\scriptsize
	\label{Table:syn_cln_settings}
	\begingroup
    \setlength{\tabcolsep}{3pt} %
    \renewcommand{\arraystretch}{1} %
    \begin{tabular}{c|l|c|c|c}
        \midrule  
        \multirow{2}{*}{General settings} & 
        \multicolumn{1}{c|}{Optimizer} & Momentum & Weight decay & Batch size
        \\
        \cmidrule{2-5}  
        & \multicolumn{1}{c|}{SGD} & 0.9 & 0.0001 & 128
        \\
        \midrule
         \end{tabular}
         \begin{tabular}{c|l|c|c|c|c}
        \midrule  
        Datasets & \multicolumn{1}{c|}{\multirow{1}{*}{Methods}} & Model & Training epochs & Learning rate & Loss ratio
        \\
        \midrule  
        \multirow{3}{*}{2D cube} & 
        \textbf{Vanilla} 
        & 3-layer FC net
        & 40
        & 0.1 
        & - 
        \\
        & \textbf{Mixup}
        & 3-layer FC net 
        & 40
        & 0.1
        & - 
        \\
        & \grayl{\textbf{Mixup+GenLabel}}     
        & \grayc{3-layer FC net}       
        & \grayc{40}
        & \grayc{0.1}         
        & \grayc{1}
        \\
        \midrule  
        \multirow{3}{*}{3D cube} & 
        \textbf{Vanilla} 
        & 3-layer FC net 
        & 40
        & 0.1 
        & - 
        \\
        & \textbf{Mixup}
        & 3-layer FC net 
        & 40
        & 0.1
        & - 
        \\
        & \grayl{\textbf{Mixup+GenLabel}}     
        & \grayc{3-layer FC net}       
        & \grayc{40}
        & \grayc{0.1}         
        & \grayc{0.8}
        \\
        \midrule  
        \multirow{3}{*}{Moon} & 
        \textbf{Vanilla} 
        & 4-layer FC net 
        & 100
        & 0.1 
        & - 
        \\
        & \textbf{Mixup}
        & 4-layer FC net 
        & 100
        & 0.1
        & - 
        \\
        & \grayl{\textbf{Mixup+GenLabel}}     
        & \grayc{4-layer FC net}       
        & \grayc{100}
        & \grayc{0.1}         
        & \grayc{1}
        \\
        \midrule  
        \multirow{3}{*}{Circle} & 
        \textbf{Vanilla} 
        & 4-layer FC net 
        & 100
        & 0.1 
        & - 
        \\
        & \textbf{Mixup}
        & 4-layer FC net 
        & 100
        & 0.1
        & - 
        \\
        & \grayl{\textbf{Mixup+GenLabel}}     
        & \grayc{4-layer FC net}       
        & \grayc{100}
        & \grayc{0.1}         
        & \grayc{0.8}
        \\
        \midrule  
        \multirow{3}{*}{Two-circle} & 
        \textbf{Vanilla} 
        & 4-layer FC net 
        & 100
        & 0.1 
        & - 
        \\
        & \textbf{Mixup}
        & 4-layer FC net 
        & 100
        & 0.1
        & - 
        \\
        & \grayl{\textbf{Mixup+GenLabel}}     
        & \grayc{4-layer FC net}       
        & \grayc{100}
        & \grayc{0.1}         
        & \grayc{1}
        \\
        \bottomrule
        
    \end{tabular}
    \endgroup
	\vspace{-2mm}
\end{table}

\begin{table}[t]
	\centering
	\vspace{-2mm}
	\caption{Hyperparameters and models used for clean validation in OpenML datasets experiments.}
	\scriptsize
	\label{Table:tabular_cln_settings}
	\begingroup
    \setlength{\tabcolsep}{3pt} %
    \renewcommand{\arraystretch}{1} %
    \begin{tabular}{c|l|c|c|c}
        \midrule  
        \multirow{2}{*}{General settings} & 
        \multicolumn{1}{c|}{Training epochs} & Optimizer & Weight decay & Batch size
        \\
        \cmidrule{2-5}  
        & \multicolumn{1}{c|}{100}& Adam &  0.0001 & 128
        \\
        \midrule
         \end{tabular}
         \begin{tabular}{c|l|c|c|c}
        \midrule   
        Datasets & \multicolumn{1}{c|}{\multirow{1}{*}{Methods}} & Model & Learning rate & Loss ratio
        \\
        \midrule  
        \multirow{3}{*}{OpenML} & 
        \textbf{Vanilla} 
        & Logistic Regression 
        & Chosen by cross-validation (among 0.1, 0.01, 0.001, and 0.0001)
        & - 
        \\
        & \textbf{Mixup}
        & Logistic Regression 
        & Chosen by cross-validation (among 0.1, 0.01, 0.001, and 0.0001)
        & - 
        \\
        & \grayl{\textbf{Mixup+GenLabel}}     
        & \grayc{Logistic Regression}       
        & \grayc{Chosen by cross-validation (among 0.1, 0.01, 0.001, and 0.0001)}         
        & \grayc{Chosen by cross-validation}
        \\
        \bottomrule
        
    \end{tabular}
    \endgroup
	\vspace{0mm}
\end{table}

\begin{table}[t]
	\centering
	\vspace{-2mm}
	\caption{Hyperparameters and models used for robust validation in OpenML dataset experiments.}
	\scriptsize
	\label{Table:tabular_rob_settings}
	\begingroup
    \setlength{\tabcolsep}{3pt} %
    \renewcommand{\arraystretch}{1} %
    \begin{tabular}{c|l|c|c|c|c|c}
        \midrule  
        \multirow{2}{*}{General settings} & 
        \multicolumn{1}{c|}{Training epochs} & Optimizer & Momentum & Weight decay & Batch size & FGSM attack radius
        \\
        \cmidrule{2-7}  
        & \multicolumn{1}{c|}{50}& SGD & 0.9 & 0.0001 & 128 & 0.2
        \\
        \midrule
         \end{tabular}
         \begin{tabular}{c|l|c|c|c|c}
        \midrule   
        Datasets & \multicolumn{1}{c|}{\multirow{1}{*}{Methods}} & Model & Learning rate & Loss ratio
        \\
        \midrule  
        \multirow{3}{*}{OpenML} & 
        \textbf{Vanilla} 
        & Logistic Regression 
        & 0.02
        & - 
        \\
        & \textbf{Mixup}
        & Logistic Regression 
        & 0.02
        & - 
        \\
        & \grayl{\textbf{Mixup+GenLabel}}     
        & \grayc{Logistic Regression}       
        & \grayc{0.02}         
        & \grayc{Chosen by cross-validation}
        \\
        \bottomrule
        
    \end{tabular}
    \endgroup
	\vspace{0mm}
\end{table}

\begin{table}[t]
	\centering
	\vspace{0mm}
	\caption{Hyperparameters and models used for clean validation in image dataset experiments.}
	\scriptsize
	\label{Table:real_cln_settings}
	\begingroup
    \setlength{\tabcolsep}{3pt} %
    \renewcommand{\arraystretch}{1} %
    \begin{tabular}{c|l|c|c|c|c|c}
        \midrule  
        \multirow{2}{*}{General settings} & 
        \multicolumn{1}{c|}{Training epochs} & Learning rate scheduler & Optimizer & Momentum & Weight decay & Batch size
        \\
        \cmidrule{2-7}  
        & \multicolumn{1}{c|}{200} & multi-step decay & SGD & 0.9 & 0.0001 & 128
        \\
        \midrule
         \end{tabular}
         \begin{tabular}{c|l|c|c|c|c|c}
        \midrule  
        Datasets & \multicolumn{1}{c|}{\multirow{1}{*}{Methods}} & Model & Learning rate & Attack radius & Loss ratio & Memory ratio
        \\
        \midrule  
        \multirow{6}{*}{MNIST} & 
        \textbf{Vanilla} 
        & LeNet-5 
        & 0.1 
        & 0.05 
        & - 
        & -
         \\
        & \textbf{AdaMixup}
        & LeNet-5 
        & 0.1 
        & 0.05
        & - 
        & -
        \\
        & \textbf{Mixup}
        & LeNet-5 
        & 0.1
        & 0.05
        & - 
        & -
        \\
        & \grayl{\textbf{Mixup+GenLabel}}     
        & \grayc{LeNet-5}       
        & \grayc{0.1}                   
        & \grayc{0.05}          
        & \grayc{0.15}
        & \grayc{0.95}
        \\
        & \textbf{Manifold mixup}
        & LeNet-5 
        & 0.1
        & 0.05 
        & - 
        & -
        \\
        & \grayl{\textbf{Manifold mixup+GenLabel}}    
        & \grayc{LeNet-5}  
        & \grayc{0.1}              
        & \grayc{0.05}          
        & \grayc{0.15}
        & \grayc{0.99}
        \\  
        \midrule
        \multirow{6}{*}{CIFAR-10} & 
        \textbf{Vanilla} 
        & ResNet-18 
        & 0.1
        & 2/255
        & - 
        & -
         \\
        & \textbf{AdaMixup} 
        & ResNet-18 
        & 0.1
        & 2/255
        & - 
        & -
        \\
        & \textbf{Mixup} 
        & ResNet-18 
        & 0.1
        & 2/255
        & - 
        & -
        \\
        & \grayl{\textbf{Mixup+GenLabel}}  
        & \grayc{ResNet-18}           
        & \grayc{0.1}                  
        & \grayc{2/255}          
        & \grayc{0.1}
        & \grayc{0.95}
        \\
        & \textbf{Manifold mixup} 
        & ResNet-18 
        & 0.1
        & 2/255
        & - 
        & -
        \\
        & \grayl{\textbf{Manifold mixup+GenLabel}}    
        & \grayc{ResNet-18}           
        & \grayc{0.1}         
        & \grayc{2/255}          
        & \grayc{0.1}
        & \grayc{0.95}
        \\  
        \midrule
        \multirow{5}{*}{CIFAR-100} & 
        \textbf{Vanilla} 
        & PreAct ResNet-18 
        & 0.1
        & 1/255
        & - 
        & -
        \\
        & \textbf{Mixup} 
        & PreAct ResNet-18 
        & 0.1
        & 1/255 
        & - 
        & -
        \\
        & \grayl{\textbf{Mixup+GenLabel}}
        & \grayc{PreAct ResNet-18}             
        & \grayc{0.1}                  
        & \grayc{1/255}          
        & \grayc{0.1}
        & \grayc{0.97}
        \\
        & \textbf{Manifold mixup} 
        & PreAct ResNet-18 
        & 0.1
        & 1/255 
        & - 
        & -
        \\
        & \grayl{\textbf{Manifold mixup+GenLabel}}  
        & \grayc{PreAct ResNet-18}               
        & \grayc{0.1}          
        & \grayc{1/255}          
        & \grayc{0.1}
        & \grayc{0.97}
        \\  
        \midrule
        \multirow{5}{*}{\shortstack{Tiny \\ ImageNet}} & 
        \textbf{Vanilla} 
        & PreAct ResNet-18 
        & 0.1
        & 1/255 
        & - 
        & -
        \\
        & \textbf{Mixup} 
        & PreAct ResNet-18 
        & 0.1
        & 1/255 
        & - 
        & -
        \\
        & \grayl{\textbf{Mixup+GenLabel}}  
        & \grayc{PreAct ResNet-18}                
        & \grayc{0.1}         
        & \grayc{1/255}          
        & \grayc{0.05}
        & \grayc{0.995}
        \\
        & \textbf{Manifold mixup} 
        & PreAct ResNet-18 
        & 0.1
        & 1/255
        & - 
        & -
        \\
        & \grayl{\textbf{Manifold mixup+GenLabel}}  
        & \grayc{PreAct ResNet-18}                
        & \grayc{0.1}         
        & \grayc{1/255}          
        & \grayc{0.05}
        & \grayc{0.995}
        \\  
        \bottomrule
        
    \end{tabular}
    \endgroup
	\vspace{0mm}
\end{table}

\begin{table}[t]
	\centering
	\vspace{0mm}
	\caption{Hyperparameters and models used for AutoAttack validation in image dataset experiments.}
	\scriptsize
	\label{Table:real_rob_settings}
	\begingroup
    \setlength{\tabcolsep}{3pt} %
    \renewcommand{\arraystretch}{1} %
    \begin{tabular}{c|l|c|c|c|c|c}
        \midrule  
        \multirow{2}{*}{General settings} & 
        \multicolumn{1}{c|}{Training epochs} & Learning rate scheduler & Optimizer & Momentum & Weight decay & Batch size
        \\
        \cmidrule{2-7}  
        & \multicolumn{1}{c|}{50} & multi-step decay & SGD & 0.9 & 0.0001 & 128
        \\
        \midrule
         \end{tabular}
         \begin{tabular}{c|l|c|c|c|c|c}
        \midrule   
        Datasets & \multicolumn{1}{c|}{\multirow{1}{*}{Methods}} & Model & Learning rate & Attack radius & Loss ratio & Memory ratio
        \\
        \midrule  
        \multirow{6}{*}{MNIST} & 
        \textbf{Vanilla} 
        & LeNet-5 
        & 0.001 
        & 0.1 
        & - 
        & -
         \\
        & \textbf{AdaMixup}
        & LeNet-5 
        & 0.001 
        & 0.1 
        & - 
        & -
        \\
        & \textbf{Mixup}
        & LeNet-5 
        & 0.001 
        & 0.1 
        & - 
        & -
        \\
        & \grayl{\textbf{Mixup+GenLabel}}     
        & \grayc{LeNet-5}       
        & \grayc{0.001}                   
        & \grayc{0.1}          
        & \grayc{0.15}
        & \grayc{0.97}
        \\
        & \textbf{Manifold mixup}
        & LeNet-5 
        & 0.001 
        & 0.1 
        & - 
        & -
        \\
        & \grayl{\textbf{Manifold mixup+GenLabel}}    
        & \grayc{LeNet-5}  
        & \grayc{0.001}              
        & \grayc{0.1}          
        & \grayc{0.15}
        & \grayc{0.97}
        \\  
        \midrule
        \multirow{6}{*}{CIFAR-10} & 
        \textbf{Vanilla} 
        & ResNet-18 
        & 0.001 
        & 2/255
        & - 
        & -
         \\
        & \textbf{AdaMixup} 
        & ResNet-18 
        & 0.001 
        & 2/255
        & - 
        & -
        \\
        & \textbf{Mixup} 
        & ResNet-18 
        & 0.001 
        & 2/255
        & - 
        & -
        \\
        & \grayl{\textbf{Mixup+GenLabel}}  
        & \grayc{ResNet-18}           
        & \grayc{0.001}                  
        & \grayc{2/255}          
        & \grayc{0.15}
        & \grayc{0.9}
        \\
        & \textbf{Manifold mixup} 
        & ResNet-18 
        & 0.001 
        & 2/255
        & - 
        & -
        \\
        & \grayl{\textbf{Manifold mixup+GenLabel}}    
        & \grayc{ResNet-18}           
        & \grayc{0.001}         
        & \grayc{2/255}          
        & \grayc{0.15}
        & \grayc{0.9}
        \\  
        \midrule
        \multirow{5}{*}{CIFAR-100} & 
        \textbf{Vanilla} 
        & PreAct ResNet-18 
        & 0.001 
        & 1/255
        & - 
        & -
        \\
        & \textbf{Mixup} 
        & PreAct ResNet-18 
        & 0.001 
        & 1/255 
        & - 
        & -
        \\
        & \grayl{\textbf{Mixup+GenLabel}}
        & \grayc{PreAct ResNet-18}             
        & \grayc{0.001}                  
        & \grayc{1/255}          
        & \grayc{0.15}
        & \grayc{0.97}
        \\
        & \textbf{Manifold mixup} 
        & PreAct ResNet-18 
        & 0.001 
        & 1/255 
        & - 
        & -
        \\
        & \grayl{\textbf{Manifold mixup+GenLabel}}  
        & \grayc{PreAct ResNet-18}               
        & \grayc{0.001}          
        & \grayc{1/255}          
        & \grayc{0.15}
        & \grayc{0.97}
        \\  
        \midrule
        \multirow{5}{*}{\shortstack{Tiny \\ ImageNet}} & 
        \textbf{Vanilla} 
        & PreAct ResNet-18 
        & 0.002 
        & 1/255 
        & - 
        & -
        \\
        & \textbf{Mixup} 
        & PreAct ResNet-18 
        & 0.002 
        & 1/255 
        & - 
        & -
        \\
        & \grayl{\textbf{Mixup+GenLabel}}  
        & \grayc{PreAct ResNet-18}                
        & \grayc{0.002}         
        & \grayc{1/255}          
        & \grayc{0.15}
        & \grayc{0.995}
        \\
        & \textbf{Manifold mixup} 
        & PreAct ResNet-18 
        & 0.002 
        & 1/255
        & - 
        & -
        \\
        & \grayl{\textbf{Manifold mixup+GenLabel}}  
        & \grayc{PreAct ResNet-18}                
        & \grayc{0.002}         
        & \grayc{1/255}          
        & \grayc{0.15}
        & \grayc{0.995}
        \\  
        \bottomrule
        
    \end{tabular}
    \endgroup
	\vspace{-2mm}
\end{table}

\section{Generative model-based mixup algorithm (GenMix)}\label{sec:GenMix}

In Section~\ref{sec:disc_GenMix} of the main manuscript, we suggested a new way of mixing data points using generative models. Here, we formally define the algorithm for such ``generative model-based mixup'', which is dubbed as \textit{GenMix}.
Our algorithm first trains a class-conditional generative model.
One can use any generative models off-the-shelf, \eg Gaussian mixture models, GANs. Based on the learned class-conditional distribution $p_c(\vx)$'s, our algorithm augments the training dataset with data points $\vx^{\text{mix}}$ that satisfy $p_{c_1}(\vx^{\text{mix}}) : p_{c_2}(\vx^{\text{mix}}) = (1-\lambda) : \lambda$ for arbitrary pre-defined $\lambda \in [0, 1]$.
It then trains a model via a standard (non-adversarial) training algorithm with the augmented dataset.
The key idea behind GenMix is that such augmented data points can act as an implicit regularizer, promoting larger margins for the classification boundary of the trained model, which in turn guarantees robustness with good generalization.

The rest of this section is organized as follows.
We first provide a formal description of the GenMix framework. 
Then, we propose two specific instances of our framework, namely, GenMix+GM and GenMix+GAN, which use Gaussian mixture (GM) and GANs for generative modeling, respectively.

\subsection{General framework}

Let $D_c = \{(\vx^{(m)}_c, \ve_c)\}_{m=1}^{n_c}$ be the training data for class $y \in [k]$, where $\vx_c^{(m)}$ is the feature vector for $m$-th data point, $\ve_c$ is the one-hot encoded label vector for any data points in class $c$, and $n_c$ is the number of data points with class $y$. The training data is denoted by $D = \cup_{c \in [k]} D_c$.
In the first stage, it trains class-conditional generative model using the given training data $X_c = \{\vx^{(m)}_c \}_{m=1}^{n_c}$, thereby learning the underlying data distribution $p_c(\vx)$. 

In the second stage, we randomly sample mixing coefficient $\lambda \in [0, 1]$. 
For each class pair $i, j \in [k]$, we generate augmented points $X_{\text{mixup}} = \{ \vx_1,  \cdots, \vx_{n_{\text{aug}}} \}$, each of which satisfies 
$p_{i}(\vx^{\text{mix}}) : p_{j}(\vx^{\text{mix}}) = (1-\lambda) : \lambda$.
In other words, the goal is to find virtual data $\vx$'s which satisfy 
\vspace{-2mm}
\begin{equation}\label{Eqn:mid_condition}
\left \lvert  \frac{ p_{j}(\vx)}{p_{i}(\vx) + p_{j}(\vx)}  - \lambda \right \rvert \leq \epsilon
\end{equation}
for a pre-defined small margin $\epsilon > 0$.
Depending on the generative model used in the algorithm, we use different methods to find these mixup points $X_{\text{mixup}}$. The detailed description of these methods are given in the following subsections.
In both schemes, we check whether the generated mixup points $X_{\text{mixup}}$ incur manifold intrusion~\citep{guo2019mixup}, and discard the mixup points having such issues. 
To be specific, for the case of mixing class $i$ and $j$, we decide that the manifold intrusion does not occur for a mixup point $\vx \in X_{\text{mixup}}$ if classes $i$ and $j$ are the two most probable classes of $\vx$, \ie $\min \{ p_i(\vx), p_j(\vx) \} \geq p_{\ell}(\vx)$ holds for all other classes $\ell \in [k]\backslash \{i, j\}$.
For augmented data $\vx$ without such manifold intrusion issue, we soft-label it as $\vy = \frac{p_i}{p_i+p_j} \ve_i  + \frac{p_j}{p_i+p_j} \ve_j $ where $p_i  = p_i(\vx)$ is the probability that $\vx$ is sampled from class $i$.
We denote 
the set of data-label pair as $D_{\text{mixup}} = \{ (\vx,\vy) \}$ for $\vx \in X_{\text{mixup}}$.

\begin{algorithm}[t!]
	\small
	\textbf{Input} Training data $D$, 
	Number of augmented data $n_{\textrm{aug}}$, likelihood-ratio margin $\eps > 0$, mixing coefficient $\lambda \in [0,1]$
	\\
	\textbf{Output} Trained model $f(\cdot)$, Augmented data $D_{\text{mixup}}$ \\
	\vspace{-4mm}
	\begin{algorithmic}%
        \STATE $p_{c} \leftarrow$ Data distribution of class $c$ learned by generative model%
		\STATE $ D_\text{mixup} \leftarrow \{\}$
        \FOR{classes $i \in [k]$ and $j\in [k]\backslash\{i\}$} 
        \STATE $n \leftarrow 0$
        \WHILE{$n < n_{\text{aug}}$}
        \STATE Find point $\vx$ satisfying $\left \lvert  \frac{ p_{j}(\vx)}{p_{i}(\vx) + p_{j}(\vx)}  - \lambda \right \rvert \leq \epsilon$
        \STATE $p_{\ell} \leftarrow p_{\ell}(\vx)$ for $\ell \in [k]$
        \IF{$\min\{p_{i}, p_{j}\} \geq p_{\ell} \quad \forall \ell \in [k]\backslash \{i, j\}$ }
            \STATE $D_\text{mixup} \leftarrow D_\text{mixup} \cup \{ (\vx, \frac{  p_{i}   }{p_{i}+p_{j}} \ve_i+\frac{  p_{j}   }{p_{i}+p_{j}} \ve_j  ) \}$
            \STATE $n \leftarrow n+1$
        \ENDIF
        \ENDWHILE
        \ENDFOR
		\STATE $f \leftarrow$ model training with $D \cup D_\text{mixup}$
	\end{algorithmic}
	\caption{GenMix}
	\label{Algo:GenMix}
\end{algorithm}

Given $n_{\text{aug}}$ data points obtained in the second stage, the algorithm finally trains the classification model $f : \mathbb{R}^n \rightarrow [0,1]^k$ that predicts the label $\vy = [y_1, \cdots, y_k]$ of the input data. 
Here, the cross-entropy loss is used while optimizing the model.
In our GenMix scheme, the model is trained by using not only the given training data $D = \cup_{c \in [k]} \{D_c \}$, but also the augmented dataset $D_{\text{mixup}}$.
The pseudocode of the GenMix algorithm is given in Algorithm~\ref{Algo:GenMix}. 

In summary, the proposed scheme is a novel data augmentation technique that first learns the data distributions for each class using class-conditional generative models, and then augments the train data with soft-labeled data points $X_{\text{mixup}}$, each of which has the likelihood ratio of $\lambda \in [0,1]$ with respect to a target class pair.

\subsection{GenMix+GM}
We first suggest GenMix+GM, a data augmentation scheme which uses the Gaussian mixture (GM) model for generative modeling. 
Here, we provide a formal description on how GenMix+GM finds the augmented points $\vx$ satisfying the likelihood ratio condition (\ref{Eqn:mid_condition}). 
Given training samples, GenMix+GM algorithm first estimates the parameters of Gaussian distribution for each class. To be specific, it computes the sample mean and the sample covariance of class $c$, represented as
$\widehat{\bm{\mu}_{c}} = \frac{1}{n_c} \sum_{m=1}^{n_c} \vx_c^{(m)}$ and 
$\widehat{\bm{\Sigma}_{c}} = \frac{1}{n_c} \sum_{m=1}^{n_c} (\vx_c^{(m)} - \widehat{\bm{\mu}_{c}})  (\vx_c^{(m)} - \widehat{\bm{\mu}_{c}})^T$, respectively.
Then, the (estimated) probability of point $\vx$ sampled from class $c$ is $p_c(\vx) = \frac{1}{\sqrt{(2\pi)^k \text{det}(\widehat{\bm{\Sigma}_c})}} e^{-(\vx - \widehat{\bm{\mu}_c})^T \widehat{\bm{\Sigma}_c}^{-1} (\vx - \widehat{\bm{\mu}_c})/2 }$.
Now, the question is how to find the virtual data points $\vx$ satisfying (\ref{Eqn:mid_condition}). This can be solved by applying quadratic discriminant analysis (QDA)~\citep{ghojogh2019linear}, which gives us the closed-form solution for $\vx$ satisfying $ \lvert \log  \frac{ p_{j}(\vx)}{p_{i}(\vx) + p_{j}(\vx)}  \rvert \simeq  \lambda
$, for given target classes $i, j$.

\begin{figure}[t]
    \vspace{-2mm}
	\centering
	\includegraphics[width = 0.5 \linewidth]
	{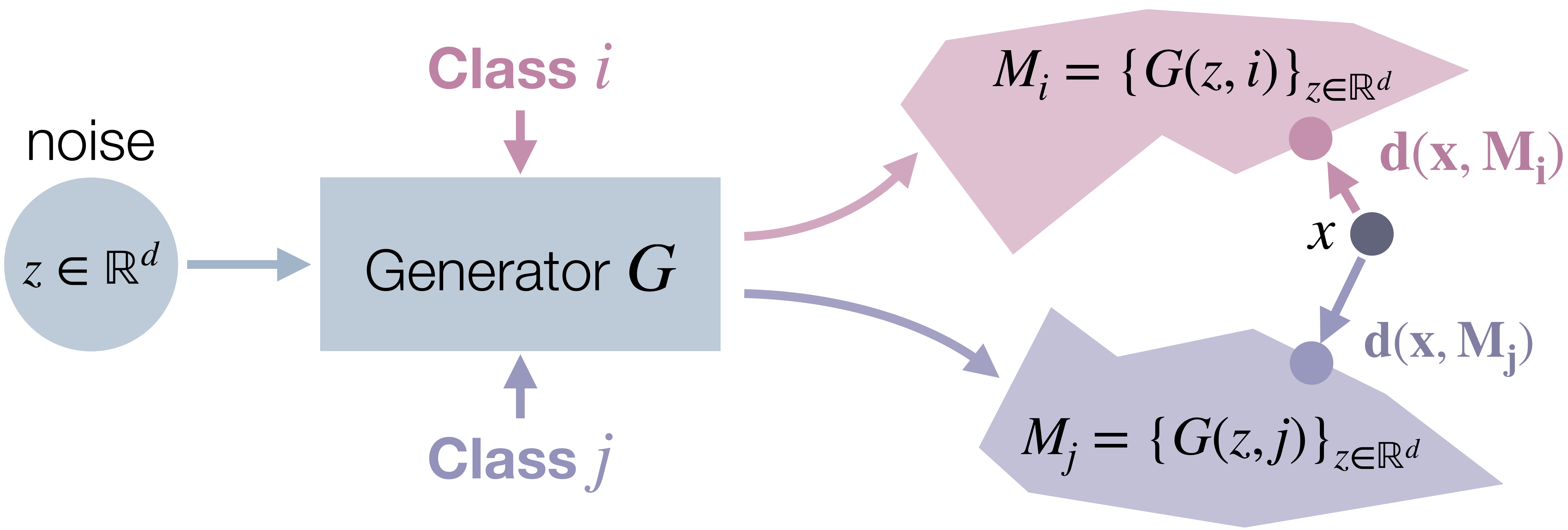}
	\vspace{-2mm}
	\caption{
	Finding the augmented point $\vx \in X_{\text{mixup}}$ satisfying $d(\vx, \mathcal{M}_j) - d(\vx, \mathcal{M}_i) \simeq \log(\frac{1}{\lambda} - 1)$ in GenMix+GAN, for arbitrary classes $i \neq j$ and a pre-defined mixing coefficient $\lambda \in [0,1]$. Given the manifold $\mathcal{M}_c = \{G(\bm{z} , c) \}_{\bm{z} \in \mathbb{R}^d} $ for class $c \in [k]$ estimated by class-conditional GAN, the distance $d(\vx, \mathcal{M}_c)$ is measured by inverting the generator of GAN~\citep{creswell2018inverting}. }
	\label{Fig:navigation}
	\vspace{-3mm}
\end{figure}

\subsection{GenMix+GAN}\label{sec:GenMix+GAN}
The Gaussian mixture (GM) model is a simple generative model that works well when the data distribution is similar to Gaussian, but it cannot learn other distributions. 
In such cases, GANs are useful for learning the underlying distribution. Thus, here we suggest GenMix+GAN which uses GANs for generative modeling.
As discussed in Section~\ref{Sec:generative_model_general}, we can replace $p_c(\vx)$ by $\exp( - d(\vx, \mathcal{M}_c) )$ in Algorithm~\ref{Algo:GenMix} and apply GenMix scheme. 
Note that the condition in (\ref{Eqn:mid_condition}) reduces to
$  d(\vx, \mathcal{M}_j) - d(\vx, \mathcal{M}_i)  \simeq \log(\frac{1}{\lambda} - 1)$. Thus, the goal is to solve $\min_{\vx} (d(\vx, \mathcal{M}_j) - d(\vx, \mathcal{M}_i) - \log(\frac{1}{\lambda} - 1) )^2$.

We use an iterative method to find points $\vx$ that satisfy this condition.
One key observation that helps us to design an efficient optimization algorithm is that if $\bm{m}^\star = \arg\min_{\bm{m}\in\mathcal{M}} d(\vx, \bm{m})$, then $d(\vx+\bm{\delta}, \mathcal{M}) \approx d(\vx+\bm{\delta}, \bm{m}^\star)$ if $\bm{\delta}$ is small.
That is, once we have a projection of $\vx$ onto a manifold $\mathcal{M}$, say $\bm{m}^\star$, the distance between $\vx+\bm{\delta}$ and the same manifold can be safely approximated by the distance between $\vx+\bm{\delta}$ and $\bm{m}^\star$, without recomputing the projection.

To formally prove this, from triangle inequality, 
{\small
	\begin{align*}
	&d(\vx+\bm{\delta}, \mathcal{M}) = \min_{\bm{m} \in \mathcal{M}} d(\vx+\bm{\delta}, \bm{m}) \\ &\leq \hspace{-1mm} \min_{\bm{m} \in \mathcal{M}}[ d(\vx+\bm{\delta}, \vx) \hspace{-0.5mm}+\hspace{-0.5mm} d(\vx, \bm{m})]
	= \hspace{-1mm} \min_{\bm{m} \in \mathcal{M}} \hspace{-1mm} d(\vx, \bm{m}) \hspace{-0.5mm} +\hspace{-0.5mm}  d(\vx, \vx+\bm{\delta}) \\
	&= d(\vx, \bm{m}^\star)+ d(\vx, \vx+\bm{\delta})
	\leq d(\vx+\bm{\delta},\bm{m}^\star)+2 d(\vx, \vx+\bm{\delta})
	\end{align*}}holds. 
Similarly, we have $d(\vx+\bm{\delta}, \mathcal{M}) \geq d(\vx+\bm{\delta},\bm{m}^\star) - 2d(\vx, \vx+\bm{\delta})$.
This implies that when $d(\vx+\bm{\delta}, \bm{m}^\star) \gg d(\vx,\vx+\bm{\delta})$, we have $d(\vx+\bm{\delta}, \mathcal{M}) \approx d(\vx+\bm{\delta}, \bm{m}^\star)$.

Using this approximation, we propose the following sequential optimization algorithm, as illustrated in Fig.~\ref{Fig:navigation}.
Starting from a random initial point $\vx \in \mathbb{R}^n$, we first compute its projection on $k$ class-conditional manifolds, finding $\bm{m}_c^\star = \arg\min_{\bm{m} \in \mathcal{M}_c} d(\vx, \bm{m})$ for each $c \in [k]$. Each of these projections can be approximately computed by solving a respective optimization problem $\min_{\bm{z} \in \mathbb{R}^d} d(\vx, G(\bm{z}, c))$.
Now, we select two target classes $i,j$ which are closest to the initial point, \ie $d(\vx, \bm{m}_i^{\star}) \leq d(\vx, \bm{m}_j^{\star}) \leq d(\vx, \bm{m}_l^{\star})$ for all $l \in [k]\setminus \{i,j\}$, and consider the following optimization problem:
\vspace{-2mm}
{\small
\begin{align*} \min\limits_{\bm{\delta}} \ \lvert &d(\vx+\bm{\delta},  \mathcal{M}_j) - d(\vx+\bm{\delta}, \mathcal{M}_i) - \log(\frac{1}{\lambda} - 1)\rvert^2\\
\text{such that}& ~d(\vx+\bm{\delta}, \bm{m}_c^\star) \gg d(\vx, \vx+\bm{\delta}), \ \ c \in \{i, j\}
\end{align*}
}

\vspace{-6mm}
That is, we find the best direction $\bm{\delta}$ that minimizes the objective function, within a small set around $\vx$.
By the aforementioned approximation, the target function can be rewritten as
$\lvert d(\vx+\bm{\delta}, \bm{m}_j^\star) - d(\vx+\bm{\delta}, \bm{m}_i^\star) - \log(\frac{1}{\lambda} - 1) \rvert^2$.
Since $\bm{m}_i^\star$ and $\bm{m}_j^\star$ are given, we can compute the gradient of this objective function with respect to $\bm{\delta}$ and run a gradient descent algorithm. 
The solution to this sub-optimization problem is now defined as $\vx$, and we repeat the whole procedure until $\lvert \frac{1}{1 + \exp(d(\vx, \bm{m}_j^{\star}) - d(\vx, \bm{m}_i^{\star})) } - \lambda  \rvert   \leq  \epsilon$, and obtain the augmented data point $\vx$. We label this augmented data as $\vy = \frac{\exp(-d_i)}{\exp(-d_i)+\exp(-d_j)} \ve_i+ \frac{\exp(-d_j)}{\exp(-d_i)+\exp(-d_j)} \ve_j$ where $d_i  = d(\vx, \bm{m}_i^{\star} )$.

\subsection{GenMix in the hidden feature space}
\label{sec:GenMix_hidden}
As illustrated in Fig.~\ref{Fig:robust_dataset_illust}, the suggested GenMix can be also defined in the hidden feature space. Below we describe the details of using GenMix in the hidden space.

Let $f_{\text{robust}}$ be the robust feature extractor suggested in~\citep{engstrom2019adversarial}. 
Note that this feature extractor is approximately {\it invertible}, \ie the input data $\vx$ can be well estimated by the representation $\bm{z} = f_{\text{robust}}(\vx)$ in the feature space. We first apply GenMix in the feature space to find the middle features $\bm{z}_{\text{mid}}$ satisfying $p_i (\bm{z}_{\text{mid}}) \simeq p_j (\bm{z}_{\text{mid}})$ for target classes $i, j$. Then, using the invertibility of $f_{\text{robust}}$, we compute $\vx_{\text{mid}} = f^{-1}_{\text{robust}}(\bm{z}_{\text{mid}}) = \arg\min\limits_{\vx} \lVert \bm{z}_{\text{mid}} - f_{\text{robust}}(\vx) \rVert $. Afterwards, we define the augmented dataset as $D_{\text{robust}} = D \cup \{ (\vx_{\text{mid}}, \vy_{\text{mid}} ) \}$,
where $\vy_{\text{mid}} =  \alpha \ve_i + (1-\alpha) \ve_j$ for $\alpha = \frac{p_i ( \bm{z}_{\text{mid}} )}{p_i ( \bm{z}_{\text{mid}} ) + p_j ( \bm{z}_{\text{mid}} )}$.

\begin{figure}[t]
	\centering
	\small
	\includegraphics[width=.7\linewidth ]{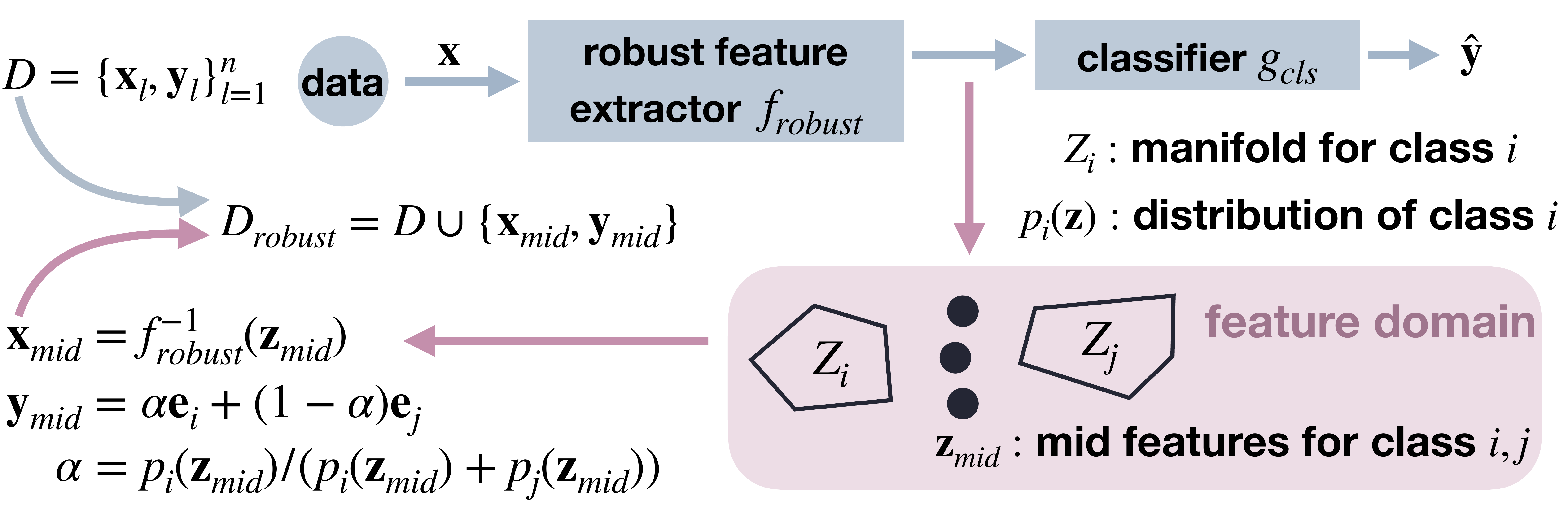}
    \caption{How to generate augmented dataset $D_{\text{robust}}$ by applying GenMix in the hidden feature space. 
    For target classes $i,j$, we first apply GenMix+GM in the feature space to find the mid hidden feature $\bm{z}_{\text{mid}}$, 
    and then invert it back to get mid input feature $\vx_{\text{mid}} = f_{\text{robust}}^{-1} (\bm{z}_{\text{mid}})$.
    Finally, we add mid input features in the original dataset $D$ to obtain $D_{\text{robust}}$.
    Here, we make use of the invertibility of $f_{\text{robust}}$ suggested in~\citep{engstrom2019adversarial}.
    }
	\label{Fig:robust_dataset_illust}
	\vspace{-3mm}
\end{figure}

\subsection{Experimental results on GenMix}\label{sec:exp_GenMix}

We evaluate the generalization and robustness performances of GenMix+GAN, GenMix+GM and existing algorithms. 
We tested on synthetic datasets (Circle, Moon in scikit-learn~\citep{scikit-learn} and V, Ket, Y datasets designed by us) and a real dataset (MNIST with digits 7 and 9). The V, Ket, Y-datasets are illustrated in Fig.~\ref{Fig:syn_train}. 
We compare our schemes with mixup~\citep{zhang2017mixup} and manifold-mixup~\citep{pmlr-v97-verma19a}.

\begin{figure}[t]
	\vspace{-5mm}
	\centering
	\small
    \subfloat[][\centering{Training data}]{\includegraphics[height=45mm ]{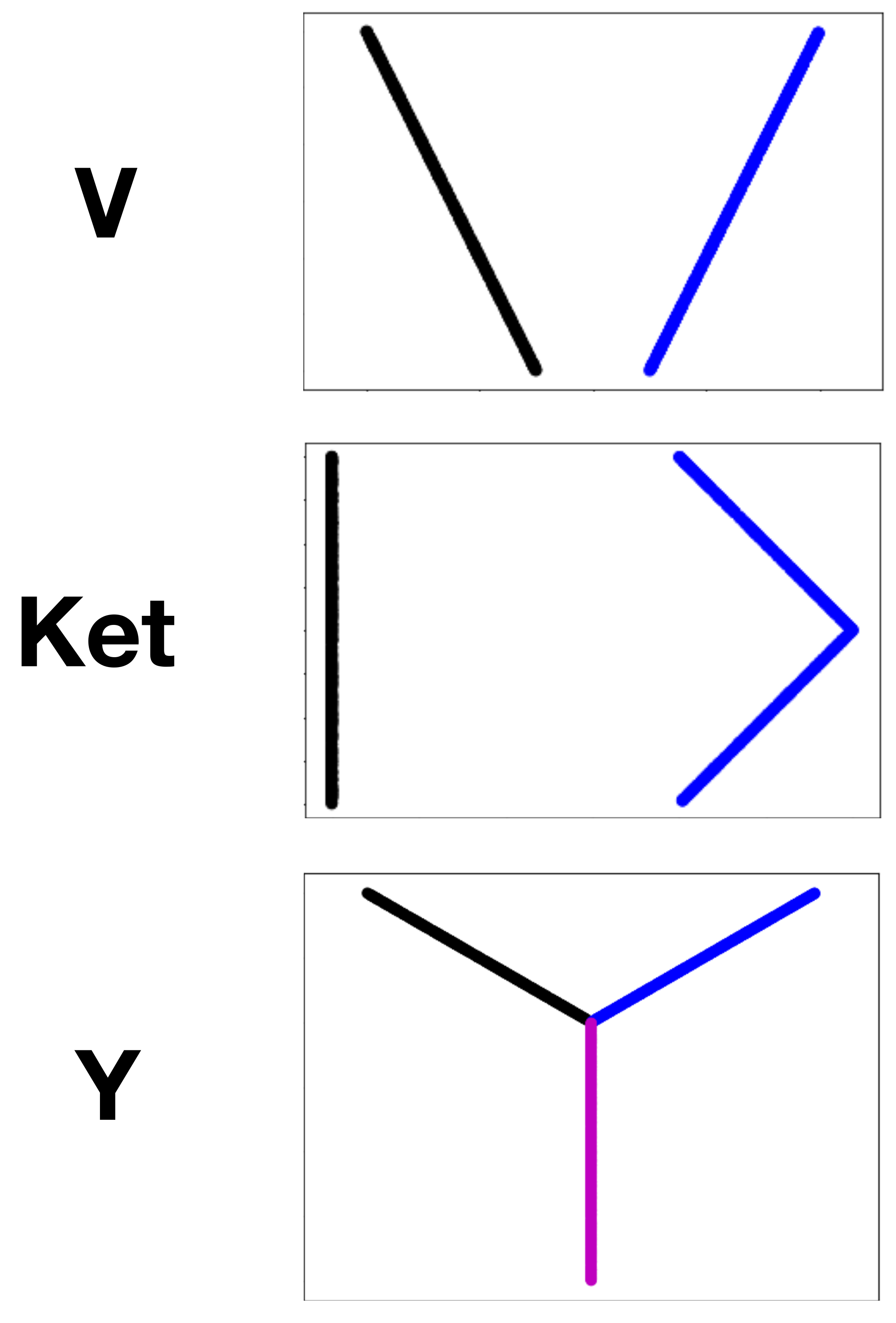}\label{Fig:syn_train}}
	\
	\subfloat[][\centering{Augmented \newline data}]{\includegraphics[height=45mm ]{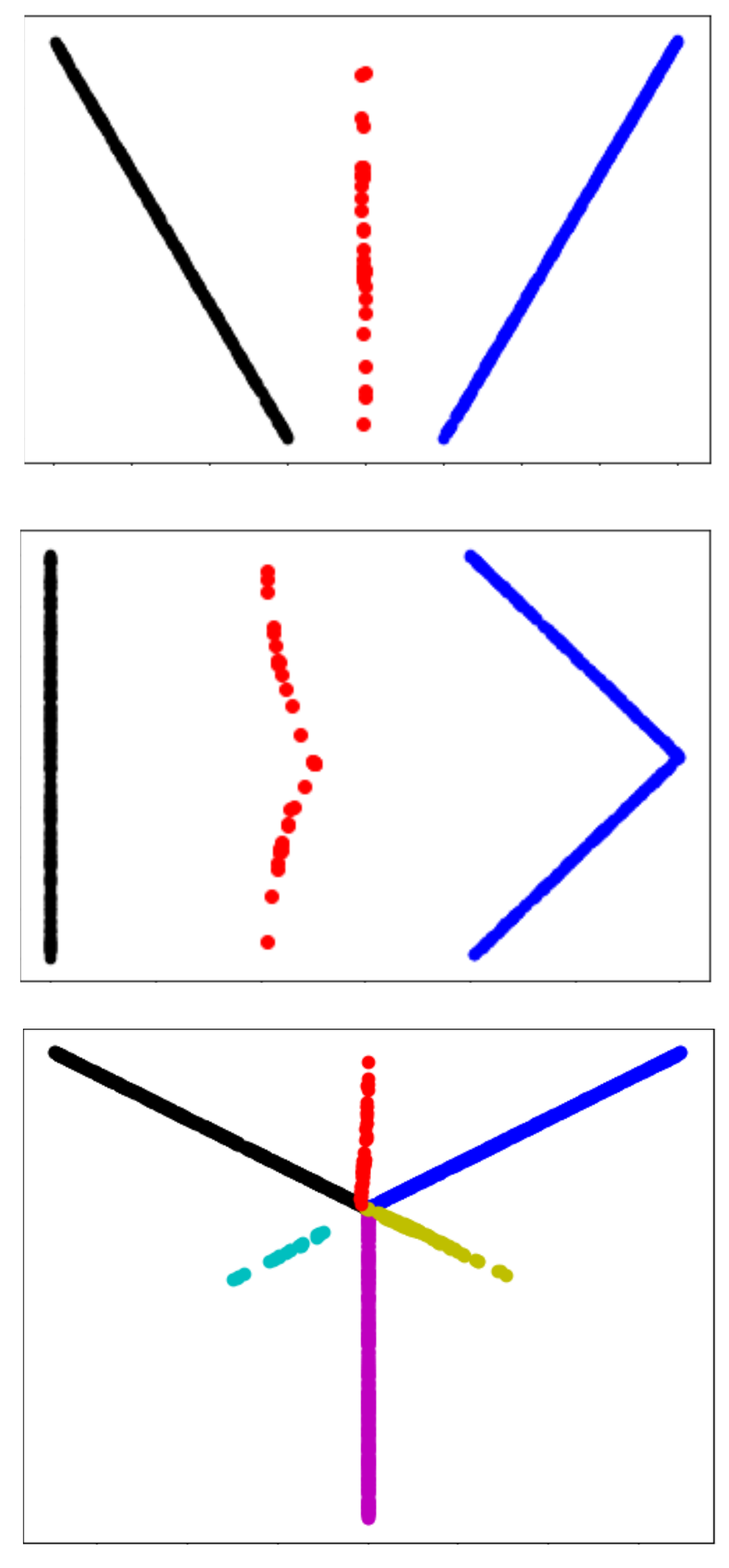}\label{Fig:syn_aug}}
	\
	\subfloat[][\centering{Decision \newline Boundary }]{\includegraphics[height=45mm]{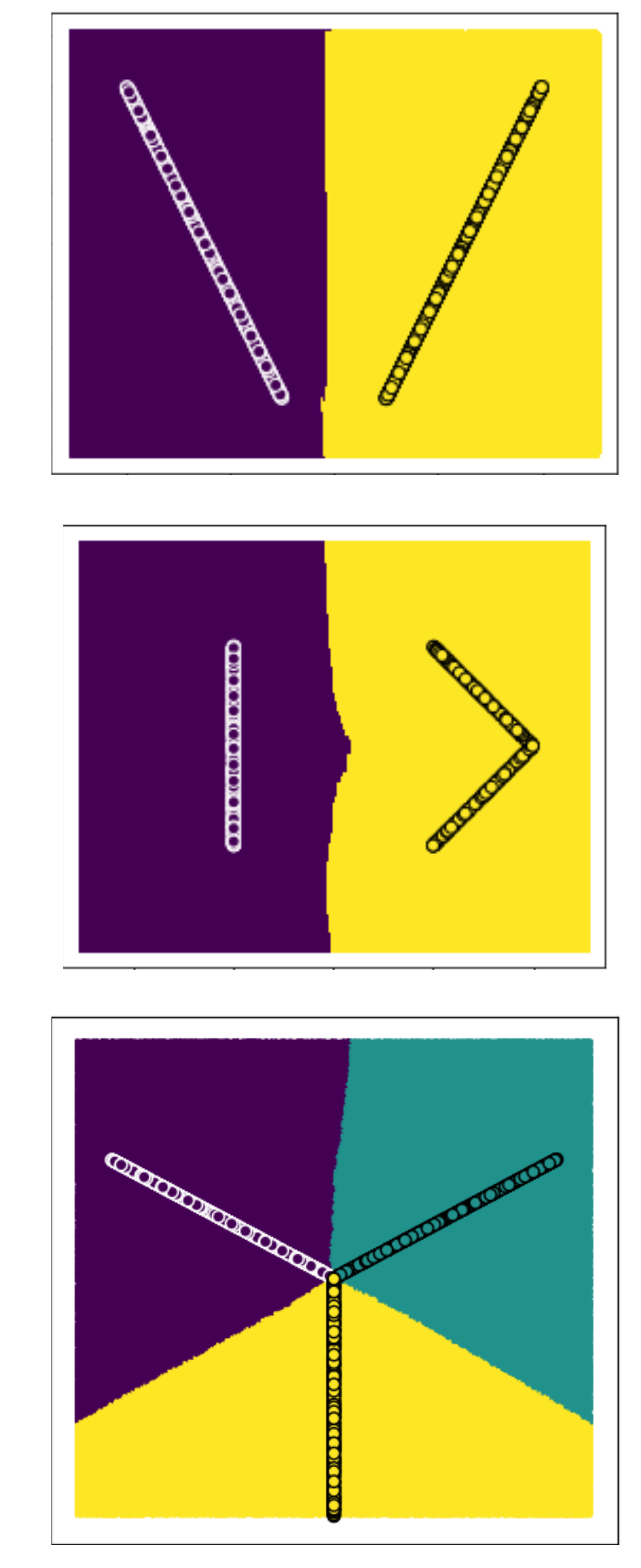}\label{Fig:syn_DB}}
	\caption{Result of GenMix+GAN on V, Ket and Y datasets. (a): Training data (black: $X_{1}$, blue: $X_{2}$, magenta: $X_3$), (b) Augmented data $X_{\text{mixup}}$ including middle points (red, yellow, cyan),
	(c): Decision boundaries (DBs) of GenMix+GAN. 
	The region with same color represents the set of points classified as the same class.
	}
	\vspace{-2mm}
	\label{Fig:Syn_middle_points}
\end{figure}

\subsubsection{GenMix enjoys large margins}

Fig.~\ref{Fig:Syn_middle_points} shows the result of GenMix+GAN for three synthetic datasets. Here, we set the mixing coefficient as $\lambda=0.5$, so that GenMix generates mixup data that are equiprobable to target classes. One can confirm that the equiprobable points help the trained model to enjoy large margins in all datasets.

In Fig.~\ref{Fig:GenMix_toy}, we visualize the suggested mixup points and the model trained by the suggested data augmentation on various synthetic datasets, and compare them with those found by vanilla mixup. Here, we set the mixing coefficient $\lambda=0.5$, meaning that the suggested mixup points are equally probable to be sampled by two target classes.

First, we show the result for 2D Gaussian dataset with 4 classes, where each data in class $c$ is sampled from a Gaussian distribution $\mathcal{N}(\bm{\mu}_c, \Sigma_c)$. 
Trivially, Gaussian mixture (GM) model fits well with this data, so we use GM to estimate $p_c(\vx)$ in this dataset.
The middle points $\vx^{\text{mix}}$ generated by the suggested mixup are illustrated in (a). 
Note that the mid points lie on the equiprobable regime for each class pair. Here, the suggested mixup learns to \emph{not} mix class-1 data (red) and class-2 data (blue), since mixing these classes incur manifold intrusion.
In (b) and (c), we show the decision boundary found by suggested mixup and vanilla mixup. 
One can see that the suggested mixup, which makes use of the underlying distribution to generate proper middle points, achieves large margins for all classes.
On the other hand, the standard mixup interpolates samples without considering the overall data distribution, resulting in smaller margins around the class-$0$ data.

Second, we show the result for circle and moon datasets defined in~\citep{scikit-learn}.
Since the Gaussian mixture model is not suitable for these datasets, we use GANs to estimate the underlying distribution $p_c(\vx)$. As described in the discussion section for applying GenLabel to ``implicit density'', we inverted GAN and used the projected distance as a proxy to the negative log likelihood. 
In (a) of circle and moon datasets, the mixed points satisfying $p_0(\vx^{\text{mix}}) = p_1(\vx^{\text{mix}})$ are colored as red, which are indeed at the middle of two manifolds of black and blue. Using these mixed points, the decision boundary has a larger margin compared with vanilla mixup, as shown in (b) and (c).

Note that in Fig.~\ref{Fig:GenMix_toy} we used $L_2$ norm for generating middle points in Moon dataset, but we can also generate middle points for $L_1$ or $L_{\infty}$ norms.
Fig.~\ref{Fig:compare_L1_L2_Linf_moon} illustrates the mixup points generated for Moon dataset, when $L_1, L_2$, and $L_{\infty}$ distance metrics are used. Here, we set the mixing coefficient as $\lambda=0.5$, i.e., the goal is to find equidistant points to target manifolds.
From the figures, we can conclude that GenMix+GAN successfully finds the points that are equidistant to both manifolds, for various $L_p$ distance settings.

\begin{figure}[t]
\vspace{-1mm}
\small
\centering
\includegraphics[width=.99\linewidth  ]{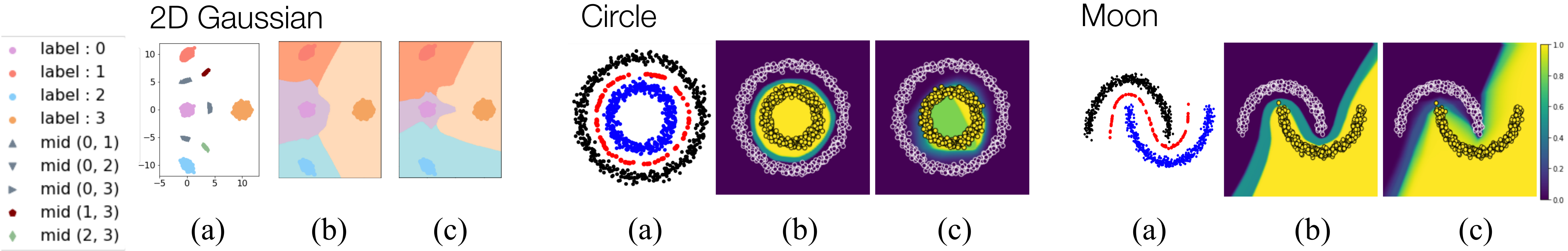}
\vspace{-2mm}
\caption{Comparison between generative-model based mixup (suggested mixup) and vanilla mixup for 2D datasets. For 2D Gaussian data, class 0,1,2,3 are colored as violet, red, blue, and yellow, respectively. For circle and moon datasets, class 0 and 1 are colored as black and blue, respectively, and the middle point obtained in the suggested mixup is colored as red. 
For each dataset, we show three results: (a) the mid points generated by the suggested mixup, and the decision boundaries of (b) suggested mixup and (c) vanilla mixup.
In (a), one can confirm that the mid points of the suggested mixup lie on the equiprobable regime for the target class pair. As shown in (b) and (c), the suggested mixup enjoys larger margins for all classes than vanilla mixup.
}
\label{Fig:GenMix_toy}
\end{figure}

	\begin{figure}[h]
		\vspace{-5mm}
		\centering
		\small
		\subfloat[][$L_1$-norm]{\includegraphics[width=.2\linewidth ]{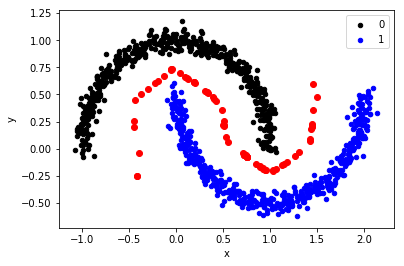}\label{Fig:Moon_L1}}
		\quad
		\subfloat[][$L_2$-norm]{\includegraphics[width=.2\linewidth ]{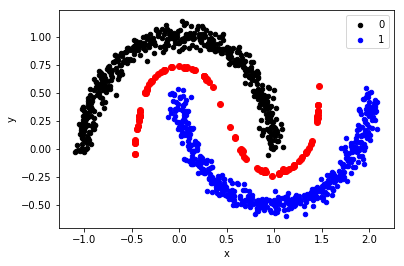}\label{Fig:Moon_L2}}
		\subfloat[][$L_{\infty}$-norm]{\includegraphics[width=.2\linewidth ]{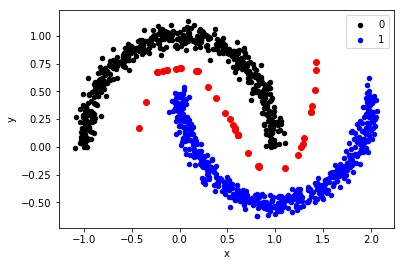}\label{Fig:Moon_Linf}}
		\caption{Illustration of data points generated by GenMix+GAN in various $L_p$ norm setup. Black and blue points correspond to each class. The red points represent the mixup points %
		generated by Algorithm~\ref{Algo:GenMix}. 
		}
		\label{Fig:compare_L1_L2_Linf_moon}
		\vspace{-2mm}
	\end{figure}

\begin{table}[t]
\small %
\vspace{-2mm}
\caption{Classification errors (\%). GenMix+GAN has a better generalization performance than other schemes.}
\label{Table:generalization}
\centering
\scriptsize
\setlength\tabcolsep{3pt}
\begin{tabular}{l|c|c|c}
\toprule  
\textbf{Schemes / Datasets}    &  Circle (2D)  & Circle (3D)  & MNIST 7/9 ($n_{\text{train}}$=500)
\\
\midrule
\textbf{Vanilla Training}  & 8.60 $\pm$ 4.84  & 1.40 $\pm$ 0.54   & 2.72  $\pm$ 0.20
\\
\textbf{Mixup}%
&
7.98 $\pm$ 2.94 & 5.22 $\pm$ 1.99 
& 	2.32  $\pm$ 0.40		
\\
\textbf{Manifold-mixup}%
& 7.34 $\pm$ 1.43  & 0.94 $\pm$ 0.75  
& 3.88  $\pm$ 0.53 
\\
\textbf{GenMix+GAN}   & \textbf{4.90} $\pm$ 0.12  & \textbf{0.22}  $\pm$ 0.06   
&  \textbf{2.13}  $\pm$ 0.12	 
\\
\bottomrule
\end{tabular}
\end{table}

\subsubsection{GenMix helps generalization}

Here we compare GenMix with mixup and manifold-mixup in terms of generalization performance. 
Table~\ref{Table:generalization} compares the performance for circle and MNIST datasets. For MNIST, we used binary classification of digits $7$ and $9$ using only $n_{\text{train}} = 500$ samples at each class, to show the scenarios with large gap between GenMix and existing schemes.
One can confirm that GenMix+GAN strictly outperforms the other data augmentation schemes in terms of generalization performances.
This shows that depending on how we generate middle points (\ie how we mix data), generalization performance varies significantly. 
One can confirm that GenMix outperforms conventional ways of mixing data, by making use of the underlying data distribution learned by generative models.

\subsubsection{GenMix in the hidden feature space}

Recall that in Section~\ref{sec:GenMix_hidden}, we have suggested GenMix in the hidden feature space. 
Fig.~\ref{Fig:robust_dataset_cifar} shows the result of GenMix+GM applied for the hidden feature space, tested on CIFAR-10 dataset. Note that each generated image contains the features of both classes $c_1, c_2$ written in the caption, showing that the mid features obtained by the suggested mixup indeed lies in between the target class manifolds.

\subsection{Reducing the computational complexity of GenMix+GAN}

Here we discuss methods for reducing the complexity of GenMix+GAN, which used inverting the generator of GAN. 
We can reduce the complexity of inverting the generator of GAN, by using alternative GAN architectures that simultaneously learn the inverse mapping during training, \eg bidirectional GAN~\citep{donahue2016adversarial} and ALIGAN~\citep{dumoulin2016adversarially}.
One can also consider using flow-based generative models, \eg~\citep{kingma2018glow}.

\begin{figure}[t]
	\vspace{-2mm}
	\centering
	\small
	\includegraphics[width=.65\linewidth ]{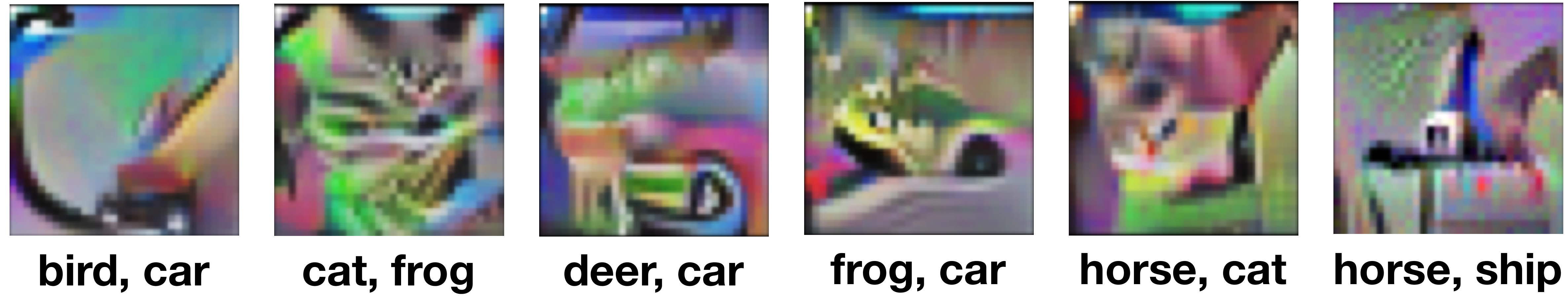}
	\vspace{-2mm}
	\caption{Generative model-based mixup in the hidden feature space for CIFAR-10.
	Each image $\vx^{\text{mix}}$ contains features of both classes $c_1, c_2$ in the caption, supporting that it lies in between the manifold of target classes.   
	}
	\label{Fig:robust_dataset_cifar}
	\vspace{-3mm}
\end{figure}

\end{document}